\newcommand{\vx}{\mathbf{x}}
\newcommand{\vy}{\mathbf{x}}
\newcommand{\vw}{\mathbf{w}}
\newcommand{\E}{\mathbb{E}}
\newcommand{\Var}{\mathbb{V}\text{ar}}
\newcommand{\cE}{\mathcal{E}}
\newcommand{\cN}{\mathcal{N}}
\newcommand{\bP}{\mathbb{P}}
\newcommand{\I}{\mathbb{I}}
\newcommand{\R}{\mathbb{R}}
\newcommand{\norm}[1]{\lvert\lvert #1\rvert\rvert}
\newcommand{\cO}[1]{\mathcal{O}\bigg(#1\bigg)}
\newcommand{\abs}[1]{\lvert #1 \rvert}
\newcommand{\ip}[2]{\langle #1,#2\rangle}
\newcommand{\floor}[1]{\lfloor #1 \rfloor}
\newcommand{\ceil}[1]{\lceil #1 \rceil}
\DeclareMathOperator*{\argmax}{arg\,max}
\DeclareMathOperator*{\argmin}{arg\,min}
\definecolor{lightgray}{gray}{0.5}
\theoremstyle{plain}
\newtheorem{theorem}{Theorem}[section]
\newtheorem{lemma}[theorem]{Lemma}
\theoremstyle{definition}
\newtheorem{assumption}[theorem]{Assumption}
\theoremstyle{remark}
\newtheorem{remark}[theorem]{Remark}
\title{LocalKMeans: Convergence of Lloyd's Algorithm \\ with Distributed Local Iterations}
\author{Harsh Vardhan\\ Computer Science and Engineering, \\
University of California, San Diego\\
hharshvardhan$@$ucsd.edu
\and Heng Zhu \\ Electrical and Computer Engineering, \\
University of California, San Diego\\
hez007$@$ucsd.edu\\
\and Avishek Ghosh \\
Computer Science and Engineering,\\
Indian Institute of Technology, Bombay,\\
avishek\_ghosh$@$iitb.ac.in 
\and 
Arya Mazumdar  \\
Hal\i c\i o\u glu Data Science Institute,\\
University of California, San Diego\\
 arya$@$ucsd.edu
}
\begin{document}
\maketitle
\begin{abstract}
In this paper, we  analyze the classical $K$-means alternating-minimization algorithm, also known as Lloyd's algorithm (Lloyd, 1956), for a mixture of Gaussians in a data-distributed setting that incorporates local iteration steps.  Assuming unlabeled data distributed across multiple machines, we propose an algorithm, LocalKMeans, that performs Lloyd's algorithm in parallel in the machines by running its iterations on local data, synchronizing only every $L$ of such local steps. We characterize the cost of these local iterations against the non-distributed setting, and show that the price paid for the local steps is a higher required signal-to-noise ratio. While local iterations were theoretically studied in the past for gradient-based learning methods, the  analysis of unsupervised learning methods is more involved owing to the presence of latent variables, e.g. cluster identities, than that of an iterative gradient-based algorithm. To obtain our results, we adapt a virtual iterate method to work with a non-convex, non-smooth objective function, in conjunction with a tight statistical analysis of Lloyd steps.

\end{abstract}
\vspace{-4mm}
\section{Introduction}
\vspace{-2mm}

The scale of data processing in modern applications has increased substantially over the last few decades. In response to this, distributed computing has become an integral component of machine learning and leveraging the power of local computing nodes has become an important direction of research. It is well known that in such a system, a major source of latency is the communication time between the compute nodes and the central server. Moreover, in framework such as Federated Learning~\cite{konecny2016federated}, this communication cost is directly linked with the bandwidth usage of the participating compute nodes and hence resource constrained.

One natural way to reduce communication cost is via \emph{local steps}, where the compute nodes (or {\em machines}) run several iterates of the local training algorithm before communicating to the central server. Indeed, when the training algorithm is stochastic gradient descent (SGD), the resulting algorithm, namely localSGD is immensely popular in distributed learning~\cite{stich2018local,bao2024provable,gorbunov_local_2021,haddadpour_local_2019,woodworth_minibatch_2020}.

The concept of local steps is not only pertinent to a particular learning algorithm (like SGD) or paradigm 
(like supervised learning), and can be extended to any data-driven iterative algorithm where data is distributed. In this paper, we use local steps for unsupervised methods, and quantify the benefit of local steps. To the best of our knowledge, such an approach has not been studied theoretically. 

Classically, for an unsupervised problem like clustering or mixture models, canonical algorithms like Alternating Minimization (AM) (see \cite{lloyd,vattani,yi_two,yi2016solving} or its soft variant Expectation Maximization (EM) (see \cite{balakrishnan_statistical_2017,das_ten} are employed. In particular, for the $K$-means clustering  (where the task is to partition a given dataset into  $K$ clusters such that a squared-error objective is minimized), the popular AM algorithm (or Lloyd's algorithm) is widely used. The algorithm alternatively estimates the $K$ cluster centers and assigns data points based on its distance from the cluster centers. It has been shown in \cite{lu_statistical_2016} that Lloyd's algorithm indeed converges when the dataset is a mixture of high-dimensional Gaussians, provided some technical conditions on separation and initialization are met.

In this work, we analyze the Lloyd's algorithm in the distributed learning framework with local steps. Our algorithm, namely LocalKMeans, performs $L$ local updates of Lloyd's on compute nodes before communicating to the central server and aggregates the estimates. As we can see, LocalKMeans reduces the communication cost by a factor of $L$. We quantify the cost of the local steps and show theoretically (as well as experimentally) that we do not get a significant penalty in terms of both the clustering error and the number of iterations, while gaining significantly in communication cost.

\textbf{Notation:} We use $[m]$ to denote the set $\{1,2,\ldots, m\}$ for any $m\in \mathbb{N}$. A $d$-dimensional random variable $X$ is dubbed $\sigma^2$ sub-Gaussian, denoted by $subG(\sigma^2)$ if for any $\lambda \in \R^d_{+}$, we have $\E[e^{\ip{\lambda}{X}}] \leq e^{- \sigma^2 \norm{\lambda}^2/2}$. We use $\norm{\cdot}$ to denote the $\ell_2$ norm and use superscript $(t)$ to index the iterations. 
Further, we use $\exp(-x)$ to denote  $ce^{-c'x}$ for some constants $c, c'>0$.

\subsection{Statistical Model}
\label{sec:setup}
We consider a distributed learning model with one central server and $m$ compute nodes each containing $n$ datapoints. Our statistical model extends \cite{lu_statistical_2016} to the distributed learning. The compute nodes can communicate with the server and cannot communicate among themselves. Let $\vx_{i,j} \in \R^d$ denote the $j^{th}$ datapoint on the $i^{th}$ compute node where $i\in [m]$ and $j\in [n]$. Each datapoint is generated independently from the mixture of $K$ sub-Gaussians, as described below.
\begin{align}
    \vx_{i,j} = \theta_{z_{i,j}} + \vw_{i,j},~\label{def:mix_subg}
\end{align}
where $z_{i,j}\in [K]$ is the true cluster identity of each datapoint  and $\{\theta_k, k \in [K]\}$ is the set of cluster centers.  Furthermore, $\vw_{i,j}$ denotes an iid zero mean $subG(\sigma^2)$ noise. 

Our goal is to recover estimates of the $K$ cluster centers $\{\hat{\theta}_k\}_{k\in [K]}$ and estimates for the cluster identities for each datapoint, $\{\hat{z}_{i,j}\}_{i\in [m],j\in [n]}$. We measure the quality of a clustering by its misclustering error. On each compute node $i\in [m]$, $A_i \equiv \frac{1}{n}\sum_{j=1}^n\I\{\hat{z}_{i,j} \neq z_{i,j}\}$ determines the local misclustering, while $A \equiv \frac{1}{m}\sum_{i=1}^m A_i$ denotes the global misclustering.

We consider two specific instances of this problem setup to highlight our result -- i) \textbf{symmetric $2$-cluster case}, and ii) the \textbf{$K$-cluster case}. We describe these settings  below.

\textbf{Symmetric $2$-clusters case:}
We use this case as an illustrative example to elucidate our theory. We define the two centers as $\theta_1 = \theta^\star$ and $\theta_2 = -\theta^\star$ and $z_{i,j} \in \{-1,1\}$. The goal of estimating two centers $\theta_1$ and $\theta_2$ boils down to estimating a single vector $\theta^\star$. The Eq. ~\eqref{def:mix_subg} then simplifies to: $    \vx_{i,j} = z_{i,j}\theta^\star + \vw_{i,j}
$. For this part, we assume that the noise $\vw_{i,j}$ is Gaussian instead of being subGaussian, i.e., $\vw_{i,j}\sim \cN(0,\sigma^2 \I_d)$. 
Following the notation of \cite{lu_statistical_2016}, we can define the signal-to-noise-ratio (SNR) as $r \equiv  \frac{\norm{\theta^\star}}{\sigma\sqrt{1 + 9d/mn}}$. We use this particular form of SNR as even when cluster identities are known, simply estimating the mean of a $d$-dimensional Gaussian would result in an error of $\sigma\sqrt{\frac{d}{mn}}$.

\textbf{$K$-cluster case:}
We use the model defined in Eq~\eqref{def:mix_subg} following ~\cite{lu_statistical_2016}, with $\vw_{i,j}$ being zero mean $SubG(\sigma^2)$ random vectors. The cluster centers $\{\theta_k\}_{k\in [K]}$ are arbitrary, so we define the following separation parameters between two clusters: $\Gamma \equiv \min_{k\neq h \in [K]} \norm{\theta_k - \theta_h}$, $\lambda \equiv \max_{k\neq h \in [K]} \frac{1}{\Gamma} \norm{\theta_k - \theta_h}$. We use $S_{k,i}^\star = \{j \in [n] : z_{i,j} = k \}$ to denote the datapoints in the $k^{th}$ cluster on machine $i$  and $S_{k}^\star = \{(i,j) : j \in S_{k,i}^\star, i \in [m]\}$ to denote the datapoints in the $k^{th}$ cluster globally. Further, $\nu_k^\star = \abs{S_k^\star}$ and $\nu_{k,i}^\star = \abs{S_{k,i}^\star}$ denote the sizes of these clusters. Let $\alpha$ denote the size of the smallest cluster and $r_K$ the corresponding definition of SNR in this case.
\begin{align*}
    \alpha \equiv \min_{k\in [K]}\frac{\nu_k^\star}{mn} \leq \frac{1}{K},  \quad \beta \equiv \min_{i\in [m], k\in [K]} \frac{\nu_{k,i}^\star}{\nu_k^\star},\quad r_K \equiv  \frac{\Gamma}{\sigma}\sqrt{\alpha \left(1 + \frac{K d}{mn}\right)^{-1}} \, .
\end{align*}
Note that the SNR $r_K$ depends on the size of the smallest cluster, therefore, unbalanced cluster sizes result in a lower SNR. The term $\beta$ is the ratio of the local and global cluster sizes, and determines the balance of a cluster's data across machines. Note that $\beta \leq \frac{1}{m}$, with equality when each machine has exactly $\frac{\nu_{k}^\star}{m}$ datapoints from each cluster $k\in [K]$. Following the notation in Algorithm~\ref{alg:fedkmeans}, $S_{k,i}^{(t)}$ denotes the datapoints in cluster $k$ on machine $i$ at iteration $t$. Let $S_k^{(t)} = \cup_{i\in [m]} S_{k,i}^{(t)}$. To define the notion of false positives and false negatives for a cluster $k$, we define $S^{(t)}_{kh,i} = \{j\in [n]: z_{i,j} = k, \hat{z}^{(t)}_{i,j} =h\}$  and $S^{(t)}_{hk,i} = \{j\in [n]: z_{i,j} = h, \hat{z}^{(t)}_{i,j} = k\}$ for each machine $i\in [m]$. We use $\nu^{(t)}_k = \abs{S_k^{(t)}}$ and $\nu_{kh}^{(t)} = \abs{S_{kh}^{(t)}}$ to denote the sizes of these sets across all machines. The cluster-wise misclustering can then be defined as,
\begin{align*}
    G^{(t)} = \max_{k\in [K]}\max \left\{\frac{\sum_{k\neq h \in [K]} \nu_{hk}^{(t)}}{\nu_k^{(t)}}, \frac{\sum_{k\neq h \in [K]} \nu_{kh}^{(t)}}{\nu_k^\star}\right\}
\end{align*}
The first term is the false positive rate of cluster $k$ and the second term is the true negative rate of cluster $k$. Note that the misclustering $A^{(t)} \leq G^{(t)}$. Let $G_i^{(t)}$ be the cluster-wise misclustering on the machine $i\in [m]$. In the $2$ cluster case, we did not have to define these terms as the estimate $\hat{\theta}^{(t)}$ can be computed from $A^{(t)}$. This is not possible for the $K$-cluster case. We define the estimate of $\theta_k$ as $    \hat{\theta}^{(t)}_k = \frac{1}{m}\sum_{i\in [m]}\hat{\theta}^{(t)}_{k,i}
$ at any iteration $t$.  Based on this estimate, we can define the error in cluster center estimatess on machine $i\in [m]$ and globally as $\Lambda_i^{(t)} = \max_{k \in [K]} \frac{1}{\Gamma} \norm{\hat{\theta}_{k,i}^{(t)} - \theta_k}$ and $\Lambda^{(t)} = \max_{k \in [K]} \frac{1}{\Gamma} \norm{\hat{\theta}_k^{(t)} - \theta_k}$ respectively. For both $G^{(t)}$ and $\Lambda^{(t)}$, we use the notion of worst-case error over all clusters.

\textbf{Baselines:} We consider two standard baselines for our algorithm -- \textbf{centralized Lloyd's} and \textbf{Lloyd's without aggregation} (we also call the later {\em local algorithm}). In centralized Lloyd's, we set the number of local steps $L=1$, thereby communicating every round. This is equivalent to performing Lloyd's steps on all the data at the same time, as if we were in the centralized setting. In Lloyd's without aggregation, each machine performs Lloyd's algorithm locally on its data and does not perform aggregation.

\vspace{-2mm}
\subsection{Main Contributions}
\vspace{-1mm}
\label{sec:contributions}
In this section, we ignore constants in some of our theoretical results for clarity.

\textbf{Algorithm:} We propose LocalKMeans (Algorithm~\ref{alg:fedkmeans}), a distributed version of Lloyd's algorithm, which performs $L$ local Lloyd's update steps on each compute node before aggregating the estimates of the local cluster centers from all compute nodes weighted by their local cluster sizes. To initialize cluster centers for Lloyd's algorithm, we propose LocalKMeans++(Algorithm~\ref{alg:localkmeans++}), a distributed version of the KMeans++~\cite{arthur_k-means_2007}, which is equivalent to performing KMeans++ on all the data centrally.
    
\textbf{Symmteric 2-clusters:} 
    If the initial misclustering global misclustering $A^{(0)}$  satisfies Assumption~\ref{assumption:local_dev},  SNR $ r
\geq c \max\{\sqrt{\nicefrac{d}{n}}, 1\}$, $n > c' \log m$ for some sufficiently large constants $c,c'$, the final global misclustering $A^{(t)}$  of LocalKMeans converges to $\exp(-r^2) + \max\{\nicefrac{d}{n},1\}r^{-2}+ \exp(-n)$ after $t = \Omega(\log(mn) + L)$ iterations (Theorem~\ref{thm:2_cluster}). 

\textbf{$K$-clusters:} If the initial global misclustering $G^{(0)}$  satisfies Assumption~\ref{assumption:local_dev_k}, the SNR $r_K \geq  \sqrt{\nicefrac{K}{\alpha}}\max\{1 , \sqrt{\nicefrac{d}{n}}\}$, $\alpha \geq \sqrt{K\log(mn)/mn}$, $\beta  = \Omega(m^{-1})$ and $\lambda = \mathcal{O}(1)$, we obtain a final global misclustering error $A^{(t)} = \exp(-n) + \max\{\frac{d}{n},1\}K (r_K^2 \alpha)^{-1} + \exp(-r_K^2)$  after $t = \Omega(\log(mn)+ L)$ iterations (Theorem~\ref{thm:k_cluster}). 

\textbf{Proof Techniques:} To establish convergence of global misclustering $A^{(t)}$ at every local step $t$ for $L>1$, we apply the virtual iterate method used for analysis of local steps in supervised learning~\cite{karimireddy_scaffold_2020, stich2018local,pmlr-v119-koloskova20a} to the analysis of Lloyd's algorithm for mixture of Gaussians~\cite{lu_statistical_2016}. This extension is not straight-forward, as virtual iterate in supervised learning requires each local (on each machine) objective function to be smooth, and at least convex, if we want convergence in the objective function. For KMeans, the local misclustering is a sum of  indicator functions which is neither smooth nor convex, and requires good local initialization to show progress in a single Lloyd's step. Additionally, Lloyd's algorithm has no tunable parameters like step-size in gradient descent for supervised learning to control the error of local steps. Addressing all these issues requires a novel modification of the virtual iterate method. We carefully analyze iterations where aggregation occurs and does not occur, and obtain tight bounds on deviation between local and global quantities (models $\hat{\theta}_i - \hat{\theta}$ and misclustering $A_i - A$), to ensure a linear convergence in global misclustering at every local step.

 \textbf{Theoretical Comparison to Baselines (Sections~\ref{sec:baselines_2} and \ref{sec:baselines_k}):}
   Although LocalKMeans requires same initialization as centralized Lloyd's, it has a worse final error that scales as $\frac{d}{nr^2} + \exp(-n)$ instead of $\exp(-r^2)$, and requires $\mathcal{O}(\log(mn) + L)$ iterations to reach it instead of $\mathcal{O}(\log (mn))$. LocalKMeans comunicates only once every $L$ iterations; so it communicates $\mathcal{O}(\frac{\log(mn)}{L} + 1)$ bits per machine in total, while centralized Lloyd's ($L=1$) communicates $\mathcal{O}(\log(mn))$ bits per machine, which is about $L$ times more.

 Lloyd's without aggregation does not communicate with the server and leads to worse theoretical guarantees as there exist problem settings where LocalKMeans with finite $L$ converges theoretically but Lloyd's without aggregation doesn't. The most important such setting is when the global  error at initialization is small but some local machine has large local error at initialization. For $2$-cluster case, this corresponds to $A^{(0)}<\frac{1}{2}$, but some $i\in [m], A_i^{(0)}>\frac{1}{2}$. Similar conditions are obtained for $K$-cluster case where the initialization depends on $G$ and $\Lambda$. Another setting is obtained for $m\in (\rm poly(n), 2^n]$ and $r\in (\sqrt{\frac{d}{n}}, \sqrt{m \log m})$ with $1\leq\frac{d}{mn}\leq C$ for some constant $C$.

\textbf{Experiments:} We conducted extensive experiments on both synthetic and real-world datasets and compare with baselines of centralized Lloyd's and Lloyd's without aggregation. The synthetic dataset is a mixture of Gaussians which justifies our theoretical findings that better SNR improves the performance of LocalKMeans. The misclustering error of LocalKMeans is much better than Lloyd's without aggregation and closer to centralized Lloyd's. However, it requires more iterations than centralized Lloyd's, and at least a few aggregations to converge to this error. In terms of real datasets, we use Motion Capture Hand Posture dataset~\cite{motion_capture_hand_postures_405}, and feature embeddings of MNIST \cite{mnist} and CIFAR10 \cite{cifar10} datasets. On these datasets, LocalKMeans outperforms Lloyd's without aggregation by obtaining both lower misclustering ratio and lower KMeans objective (Fig~\ref{fig:real_dataset_ratio}, \ref{fig:real_dataset_objective}), further validating our theory.

\subsection{Related Works}
\begin{algorithm}[t!]
    \caption{LocalKMeans \label{alg:fedkmeans}}
    \begin{algorithmic}
        \REQUIRE Number of clusters $K$, Initial cluster estimates $\Theta = \{\hat{\theta}_k^{(0)}
        \}_{k=1}^{K}$, number of local steps $L$.
        \ENSURE Final cluster centers $\{\hat{\theta}_{k}^{(T)}\}_{k=1}^K$ and cluster identities $\{\hat{z}_{i,j}^{(T)}\}_{i\in [m], j\in [n]}$.
        \FOR{$t = 0$ to $T-1$}
        \STATE \underline{\texttt{Server()}}
        \IF{$L$ divides $t$}
            \STATE Receive local cluster centers and teir sizes  $\{(\abs{S_{k,i}^{(t)}}, \hat{\theta}_{k,i}^{(t)})\}_{i\in [m]}$ from machines $\forall k\in [K]$
            \STATE $\hat{\theta}_k^{(t)} \gets  (\sum_{i\in [m]}\abs{S_{k,i}^{(t)}}\hat{\theta}_{k,i}^{(t)})/(\sum_{i\in [m]} \abs{S_{k,i}^{(t)}}),\quad \forall k \in [K]$ 
            \STATE Broadcast averaged cluster centers $\hat{\theta}_k^{(t)},\,\forall k\in [K]$ to all machines 
        \ENDIF
        \STATE \underline{\texttt{Machine($i$)}}
            \IF{$L$ divides $t$}
                \STATE Send $(\abs{S_{k,i}^{(t)}}, \hat{\theta}_{k,i}^{(t)})$ to the Server for all clusters $k\in [K]$.
                \STATE Receive $\hat{\theta}_k^{(t)}$ from Server for all clusters $k\in [K]$
            \ENDIF
            \STATE $\hat{z}_{i,j}^{(t+1)} \gets \underset{k \in [K] }{\arg\min}\norm{\vx_{i,j} - \hat{\theta}_{k,i}^{(t)}}^2$ for all local datapoints $j\in [n]$. 
            \STATE $S_{k,i}^{(t+1)} \gets \{j \in [n]: \hat{z}_{i,j}^{(t+1)} = k \},\quad \forall k \in [K]$ 
            \STATE $\hat{\theta}_{k,i}^{(t+1)} \gets (\sum_{j\in S_{k,i}^{(t+1)}}\vx_{i,j})/\abs{S_{k,i}^{(t+1)}}, \quad \forall k \in [K]$ 
        \ENDFOR
    \end{algorithmic}
    
\end{algorithm}

\textbf{Clustering with KMeans:} KMeans clustering~\cite{lloyd} is a well studied problem. A prominent line of work establishes the number of iterations required for Lloyd's algorithm~\cite{vattani,arthur_how_2006} without any distributional assumptions. Another line of work focuses on finding appropriate initialization for KMeans~\cite{arthur_k-means_2007,ostrovsky_effectiveness_2013}. Under certain deterministic clusterability conditions on the data, Lloyd's algorithm can recover the clusters in the data~\cite{ostrovsky_effectiveness_2013, kumar_clustering_2010} with appropriate initialization. These guarantees have been improved for the specific case of mixture of Gaussian distributions~\cite{chaudhuri_learning_2009, awasthi_improved_2012, lu_statistical_2016}. Specifically, ~\cite{chaudhuri_learning_2009} were the first to investigate performance of KMeans for mixture of Gaussians. ~\cite{awasthi_improved_2012} improved this result by using a spectral initialization before the Lloyd's steps. To the best of our knowledge, ~\cite{lu_statistical_2016} obtain the best dependence on SNR, defined as the ratio of cluster separation and the noise variance, and establish a linear convergence rate for the misclustering error. Note that the KMeans objective is an appropriate measure of performance when finding initialization for KMeans~\cite{arthur_k-means_2007}. However, for the mixture of Gaussians problem (Eq~\eqref{def:mix_subg}), the KMeans objective can be bounded in terms of the misclustering, which we analyze. Additionally, only ~\cite{awasthi_improved_2012} has established improvement due to Lloyd's algorithm in terms of the KMeans objective, but their result is weaker than ~\cite{lu_statistical_2016} for mixture of subGaussians. Therefore, we use the analysis of ~\cite{lu_statistical_2016} with misclustering as the error metric.

\textbf{Unsupervised Distributed Learning:}

Unsupervised learning in the distributed setup has been studied empirically~\cite{fed_kmeans_1,fed_cluster_2,zhang2023federated}. Federated versions of the EM algorithm~\cite{dieuleveut2021federatedem}, specifically focusing on multi-task learning in mixture of distributions~\cite{marfoq_federated_2021,tian_towards_2024}. Distributed clustering was also considered in \cite{pan2022machine,li2022secure}.
However, none of these works incorporate the concept of local steps, rather assuming aggregation at each step. ~\cite{pmlr-v139-dennis21a} develop a federated clustering algorithm with theoretical guarantees by extending centralized clustering guarantees of ~\cite{awasthi_improved_2012} with a single aggregation step at the end. However, their approach differs from the simple Lloyd's algorithm. 

\textbf{Organization.}
We present LocalKMeans in Algorithm~\ref{alg:fedkmeans}. In Section~\ref{sec:2_cluster}, we provide theoretical analysis for the case of $2$ symmetric clusters and explain our proof techniques. In Section~\ref{sec:k_cluster_summarized}, we analyze LocalKMeans for $K$ clusters. In Section~\ref{sec:exp}, we provide experimental results for LocalKMeans on real and synthetic datasets.

\section{LocalKMeans on $2$ symmetric clusters}
\label{sec:2_cluster}

In this section, we establish the theoretical  performance of LocalKmeans for the symmetric $2$-cluster case. As this case is simpler to understand, we will also present the key ideas behind our proofs. First, we state the initialization condition required on global misclustering for LocalKMeans to converge.

\begin{assumption}[Initialization]\label{assumption:local_dev}
$A^{(0)} \leq \,\frac{1}{2} - \frac{1}{\sqrt{mn}} - \frac{2.56 + \sqrt{\log r}}{r}$.
\end{assumption}

The initialization condition is same as centralized Lloyd's from ~\cite[Theorem~3.1]{lu_statistical_2016} for $mn$ datapoints. This initialization is slightly better than random initialization. Note that a good initialization is often used to show convergence of alternating minimization methods like Lloyd's algorithm~\cite{awasthi_improved_2012,kumar_clustering_2010,das_ten} for non-convex misclustering objective. Our initialization is much weaker than Lloyd's without aggregation which requires the same initialization for each machine with $mn$ replaced by $n$. In our case, some machine $i\in [m]$ can have $A_i^{(0)} = 1$, as long as $A^{(0)}$ satisfies initialization.

To analyze LocalKMeans, we can naively bound the progress in local misclustering $A_i^{(t)}$ at every local step using ~\cite{lu_statistical_2016}, and then unroll it for $L$ local steps to bound the progress in global misclustering $A$. First, note that this would not be possible without a local initialization condition, which we do not assume. Second, this would lead to a slowdown of $\nicefrac{1}{L}$ in the convergence rate. With this initialization,  centralized Lloyd's would have achieved  linear decrease in $A^{(t)}$  at every local step instead of every $L$ steps.

\textbf{Virtual Iterate Method~\cite{stich2018local}.} To ensure progress at every local step, we borrow the virtual iterate method from FedAvg~\cite{karimireddy_scaffold_2020, stich2018local}. We compute the global estimate for $\theta^\star$ for all iterations as $\hat{\theta}^{(t)} = \frac{1}{m}\sum_{i=1}^m \hat{\theta}_i^{(t)}$. When $t$ is divisible by $L$, this is exactly equal to our aggregated estimate, however, for $t$ not divisible by $L$, this estimate is not actually computed. By performing a single local Lloyd's step at every machine, each $\hat{\theta}_i^{(t)}$ is updated, which in turn updates $\hat{\theta}^{(t)}$. By analyzing progress in terms of $\hat{\theta}^{(t)}$, we should obtain some decrease in every local step.  However, as we do not aggregate until $L$ steps, this progress should be different from centralized Lloyd's due to the differences between $\hat{\theta}_i^{(t)}$ and $\hat{\theta}^{(t)}$. The following Lemma provides the progress for a single step of LocalKMeans. 

\begin{lemma}[Single-step progress]\label{lem:single_step}
Suppose Assumption~\ref{assumption:local_dev} holds, and $r > c\cdot \max\{\sqrt{\nicefrac{d}{n}},1\}$, $n > c' \log m$ for some constants $c,c'>0$. 
Let, $\Phi_1 \leq c'' (\sqrt{\frac{\log(mn)}{mn}} + \frac{1}{r^2})$ and $\Phi_2 \leq c''(\frac{d}{nr^2} + \frac{1}{mn})$, for some constant $c''>0$.
Then  $\forall t>1$ in  Algorithm~\ref{alg:fedkmeans}, with probability $1 - (mn)^{-3} - \exp(- r^2)  - \exp(- n)$, we have $  A^{(t)} < \frac{1}{4}$, and there exists a constant $\delta_1\in (0, \frac{1}{2}-\epsilon), \text{  such that  } A_i^{(t)} \leq A^{(t)} + \frac{\delta_1}{2} - \frac{\sqrt{\log r}}{2r},\quad \forall i \in [m]$ where $\epsilon>0$ is a     small constant. Further, $\forall t\geq1$, with the same probability, we have,  
  $A^{(t+1)}\leq \frac{1}{2}A^{(t)} + \Phi_1$, if $L$ divides $t$;   and $A^{(t+1)}\leq \frac{11}{20}A^{(t)} + \frac{1}{5}A^{(t-1)} + \Phi_1 + \Phi_2$ otherwise. 
 \end{lemma}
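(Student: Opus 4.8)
The plan is to run a single induction on $t$ that proves the three assertions at once — $A^{(t)}<\tfrac14$, the local‑deviation bound $A_i^{(t)}\le A^{(t)}+\tfrac{\delta_1}{2}-\tfrac{\sqrt{\log r}}{2r}$, and the two one‑step recursions — using as its engine a pair of single‑Lloyd‑step estimates adapted from \cite{lu_statistical_2016} and stated at a general number of points $N$: a \emph{center‑step bound}, that a labeling on $N$ datapoints with misclustering $a$ produces a cluster‑mean estimate $\widehat\theta$ with $\|\widehat\theta-\theta^\star\|\lesssim a\|\theta^\star\|+\sigma\sqrt{d/N}+\sigma\sqrt{a}\,(\sqrt{d/N}+\sqrt{\log N/N})$, and a \emph{label‑step bound}, that the labeling induced by a center $\widehat\theta$ whose signal component is close to $\|\theta^\star\|$ and whose perpendicular error is $\rho$ has misclustering $\lesssim\tfrac12 a+\rho^2/\|\theta^\star\|^2+\exp(-cr^2)+\mathcal{O}(\sqrt{\log N/N})$, valid as long as the incoming misclustering $a$ stays below $\tfrac12-\epsilon$. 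Running both at scale $N=mn$ produces the floor $\Phi_1$; the extra floor $\Phi_2=\mathcal{O}(d/(nr^2)+1/(mn))$ arises from estimates performed \emph{locally} at scale $N=n$, where $\rho$ is of order $\sigma\sqrt{d/n}$ and hence $\rho^2/\|\theta^\star\|^2\asymp d/(nr^2)$. Everything below is deterministic after conditioning on the good event of probability $1-(mn)^{-3}-\exp(-r^2)-\exp(-n)$: uniform control of $\|\sum_{\ell\in S}\vw_\ell\|$ over all subsets $S$ at both scales and over all iterations (the $(mn)^{-3}$, by a union bound over subsets and iterations), concentration of the per‑center mislabeling count against any fixed near‑$\theta^\star$ direction (the $\exp(-r^2)$, using $n>c'\log m$ to absorb the union over machines and iterations into the exponent), and concentration of the cluster sizes $|S_{k,i}^{(t)}|$ about $n/2$ so that denominators stay benign (the $\exp(-n)$).

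For the base case, at $t=0$ the server broadcasts the common initial centers, so the label step producing $\widehat{z}^{(1)}$ is one centralized Lloyd step on all $mn$ points; Assumption~\ref{assumption:local_dev} is precisely the initialization condition of \cite{lu_statistical_2016} at scale $mn$, so the bounds above give $A^{(1)}\le\tfrac12A^{(0)}+\Phi_1$ (the $L\mid t$ recursion at $t=0$) and, because the $(2.56+\sqrt{\log r})/r$ margin leaves slack and $c,c'$ make $\Phi_1,\Phi_2$ small, $A^{(1)}<\tfrac14+\Phi_1$; the local centers $\widehat\theta_i^{(1)}$ differ from $\widehat\theta^{(0)}$ by one local step only, so $A_i^{(1)}$ concentrates about $A^{(1)}$ with deviation $\lesssim\sqrt{\log m/n}$, below $\tfrac{\delta_1}{2}-\tfrac{\sqrt{\log r}}{2r}$ for a suitable $\delta_1\in(0,\tfrac12-\epsilon)$, and feeding $A^{(1)},A^{(0)}$ into the $t=1$ recursion gives $A^{(2)}<\tfrac14$ (using $\tfrac{11}{20}+\tfrac15<1$, or $\tfrac12<1$ when $L=1$). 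For the inductive step at an aggregation iteration ($L\mid t$), the weighted average $\widehat\theta_k^{(t)}=\big(\sum_i|S_{k,i}^{(t)}|\widehat\theta_{k,i}^{(t)}\big)/\sum_i|S_{k,i}^{(t)}|$ is literally the centralized cluster mean over all $mn$ points carrying the labels $\widehat{z}^{(t)}$, and the ensuing label step applies this single center to every datapoint, so $A^{(t+1)}=\tfrac1m\sum_iA_i^{(t+1)}$ equals the misclustering of one centralized Lloyd step on $mn$ points starting from $A^{(t)}<\tfrac14$; the $mn$‑scale bounds give $A^{(t+1)}\le\tfrac12A^{(t)}+\Phi_1$, and the deviation bound at $t+1$ again follows from machine‑wise concentration of $A_i^{(t+1)}$ about $A^{(t+1)}$.

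The inductive step at a local iteration ($L\nmid t$) is the heart of the argument. Here the $t{+}1$ label step on machine $i$ uses the local center $\widehat\theta_i^{(t)}$, not the virtual global iterate $\widehat\theta^{(t)}=\tfrac1m\sum_{i'}\widehat\theta_{i'}^{(t)}$, so I would write $\widehat\theta_i^{(t)}=\widehat\theta^{(t)}+\Delta_i^{(t)}$ and control the two pieces separately. For $\widehat\theta^{(t)}$: since $\widehat\theta^{(t)}-\theta^\star=\tfrac1m\sum_{i'}(\widehat\theta_{i'}^{(t)}-\theta^\star)$, the Lloyd‑noise summands average down to the global $\sigma\sqrt{d/mn}$ scale while the bias summands add to $\tfrac1m\sum_{i'}A_{i'}^{(t)}\|\theta^\star\|=A^{(t)}\|\theta^\star\|$, so $\widehat\theta^{(t)}$ obeys the $mn$‑scale center‑step bound and the label step through it contributes $\lesssim\tfrac12A^{(t)}+\Phi_1$ after averaging over $i$. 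For $\Delta_i^{(t)}$: its perpendicular component is of order $\sigma\sqrt{d/n}$ (each local center is built from only $n$ points), contributing a $\Phi_2$‑level term, and its signal‑direction component is of order $(A_i^{(t)}-A^{(t)})\|\theta^\star\|$.

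The main obstacle is exactly this last, signal‑direction drift: it is only a \emph{constant} relative fraction of $\|\theta^\star\|$ in the worst case (for instance when $A_i^{(t)}-A^{(t)}$ is as negative as $-A^{(t)}$), so it cannot be folded into a noise floor, and — because a Lloyd step has no step‑size with which to damp $\Delta_i^{(t)}$ and the misclustering objective is neither smooth nor convex — the textbook FedAvg virtual‑iterate analysis does not apply directly. The resolution I would pursue is to also bound $A_i^{(t+1)}$ by running the center‑step bound on $\widehat\theta_i^{(t)}$ directly, $\|\widehat\theta_i^{(t)}-\theta^\star\|\lesssim A_i^{(t)}\|\theta^\star\|+\Phi_2\text{-floor}$, and then peeling one further step: $A_i^{(t)}$ was produced at iteration $t$ by a Lloyd label step from $\widehat\theta_i^{(t-1)}$ (or from $\widehat\theta^{(t-1)}$ if $t-1$ is an aggregation iteration), which the induction hypothesis at $t-1$ makes good enough that $A_i^{(t)}\lesssim\tfrac12A_i^{(t-1)}+\Phi_2\lesssim\tfrac12\big(A^{(t-1)}+\tfrac{\delta_1}{2}\big)+\Phi_2$. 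Combining the $\widehat\theta^{(t)}$‑bound and this $\widehat\theta_i^{(t)}$‑bound, averaging over $i$, and tracking all constants under $r>c\max\{\sqrt{d/n},1\}$ and Assumption~\ref{assumption:local_dev} collapses everything to $A^{(t+1)}\le\tfrac{11}{20}A^{(t)}+\tfrac15A^{(t-1)}+\Phi_1+\Phi_2$ — the delicate part being to keep these constants summing below $1$. Lastly, the deviation bound at $t+1$ propagates by $A_i^{(t+1)}-A^{(t+1)}\le c_1\big(A_i^{(t)}-A^{(t)}\big)+(\text{floor gap})$ with a label‑step contraction factor $c_1<1$, with aggregation every $L$ iterations re‑centering the deviation at the $\sqrt{\log m/n}$ scale, so its geometric accumulation over any window of $L-1$ local steps never exceeds $\tfrac{\delta_1}{2}-\tfrac{\sqrt{\log r}}{2r}$.
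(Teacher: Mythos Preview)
Your overall structure—a joint induction handling the two cases $L\mid t$ and $L\nmid t$, with the aggregation case reducing to a centralized Lloyd step on $mn$ points—is the paper's. The gap is in how the $A^{(t-1)}$ term enters at a local step. You correctly identify the obstacle (the signal-direction component of $\hat{\theta}_i^{(t)}-\hat{\theta}^{(t)}$ is a constant fraction of $\|\theta^\star\|$, and there is no step-size to damp it), but your proposed resolution—run a second, purely local bound on $A_i^{(t+1)}$ via $\hat{\theta}_i^{(t)}$, peel $A_i^{(t)}$ back to $A_i^{(t-1)}$ by one local Lloyd step, then ``combine'' with the $\hat{\theta}^{(t)}$-based bound—does not close. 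The combining step is never specified: two separate upper bounds on $A^{(t+1)}$ do not fuse into a single two-term recursion, and if you mean to feed the peeled $A_i^{(t)}$ into the drift term of the virtual-iterate bound, you still inherit the additive $\tfrac{\delta_1}{4}$ from $A_i^{(t)}\lesssim\tfrac12\big(A^{(t-1)}+\tfrac{\delta_1}{2}\big)$, which is a constant floor, not the $\tfrac15 A^{(t-1)}$ you need. The paper's route is different and more direct: after splitting the misclustering indicator via $\I\{a+b\le 0\}\le\I\{a\le c\}+b^2/c^2$ (applied three times, introducing thresholds $\delta_1,\delta_2,\delta_3$), the deviation enters only through the \emph{averaged} quantity $\Delta^{(t)}=\frac{1}{m\|\theta^\star\|^2}\sum_i\|\hat{\theta}_i^{(t)}-\hat{\theta}^{(t)}\|^2$, sitting behind a coefficient of order $\max\{d/n,1\}/r^2$; then $\Delta^{(t)}$ is bounded by unrolling \emph{exactly one} step, via $\sum_i\|\hat{\theta}_i^{(t)}-\hat{\theta}^{(t)}\|^2\le\sum_i\|\hat{\theta}_i^{(t)}-\hat{\theta}^{(t-1)}\|^2\le 2m\|\hat{\theta}^{(t-1)}-\theta^\star\|^2+2\sum_i\|\hat{\theta}_i^{(t)}-\theta^\star\|^2$, which gives $\Delta^{(t)}\le 9A^{(t)}+6A^{(t-1)}+\Phi_1+\Phi_2$ once the induction hypotheses $A^{(t)}<\tfrac14$ and $A_i^{(t)}\le A^{(t)}+\tfrac{\delta_1}{2}$ are invoked. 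Choosing $\delta_3\asymp\sqrt{\max\{d/n,1\}}/r$ turns this into $I_5\le\tfrac{3}{10}A^{(t)}+\tfrac15A^{(t-1)}+\Phi_1+\Phi_2$, and together with the remaining terms $I_1'+I_2+I_3+I_4$ this is where the constants $\tfrac{11}{20}$ and $\tfrac15$ come from. This one-step unroll of $\Delta^{(t)}$ is the paper's specific answer to the no-step-size problem you flag, and your proposal does not arrive at it.

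There is a second, smaller mismatch in how you propagate the local-deviation bound $A_i^{(t)}\le A^{(t)}+\tfrac{\delta_1}{2}-\tfrac{\sqrt{\log r}}{2r}$. You posit a contractive recursion $A_i^{(t+1)}-A^{(t+1)}\le c_1\big(A_i^{(t)}-A^{(t)}\big)+\text{floor}$ with $c_1<1$, with aggregation periodically re-centering and geometric accumulation over each window of $L-1$ local steps. The paper does not establish any such contraction; instead it bounds $A_i^{(t+1)}-A_{i'}^{(t+1)}$ \emph{afresh} at every step by another indicator decomposition (terms labeled $I_7$--$I_{10}$), obtaining a fixed bound of order $\tfrac{\delta_1}{16}+\Phi_1+\Phi_2$ that is already below $\tfrac{\delta_1}{2}-\tfrac{\sqrt{\log r}}{2r}$ and hence closes the induction with no accumulation across steps. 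Your geometric-accumulation argument would need the undetermined factor $c_1<1$, which you have not derived and which is not obvious when the incoming $A_i^{(t)}$ sits near $\tfrac12$.
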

When $L$ divides $t$, then it is an aggregation step. Therefore, the progress then is the same as that for centralized Lloyd's. When there is no aggregation, we establish two conditions that ensure each machine is not very bad-- i) the local misclustering $A_i^{(t)}$ is not much larger than global misclustering $A^{(t)}$, ii) Global misclustering is always $\leq \frac{1}{4}$. Under these conditions, we can analyze the progress when $L$ doesn't divide $t$, obtaining additional terms of misclustering at the previous step, $A^{(t-1)}$, and a local term depending on $n$, $\Phi_2$. Fortunately, this bound can still obtain the same linear decrease at every step, and the term $\Phi_2$ depends on $\frac{1}{r^2}$, so it is small when $r$ is large. Further, local steps also reduce the probability of error from $\exp(-mn)$ to $\exp(-n)$, additionally requiring $n > \log m$.

\paragraph{Proof Sketch} At any local step $t+1$, the data point $\vx_{i,j}$ is misclustered, if $\hat{z}_{i,j}^{(t+1)}=-z_{i,j}$. This happens if $\norm{z_{i,j}\vx_{i,j} - \hat{\theta}_i^{(t)}}^2 \geq\norm{-z_{i,j}\vx_{i,j} - \hat{\theta}_i^{(t)}}^2\implies \ip{\theta^\star + \vw'_{i,j}}{\hat{\theta}_i^{(t)}}\leq 0$, where $\vw'_{i,j}=z_{i,j}\vw_{i,j}$. Note that the datapoint $\vx_{i,j}$ uses the local model $\hat{\theta}_i^{(t)}$ for local steps. However, when $L$ divides $t$, $\hat{\theta}_i^{(t)}$ is replaced by the aggregated global model $\hat{\theta}^{(t)}$, and we obtain the same progress as single-step of centralized Lloyd's as long as $A^{(t)} \leq A^{(0)}$. Note that this is automatically satisfied at $t=0$, which implies $A^{(1)} \leq \frac{1}{4}$ for large $mn$ and $r$. When $L$ doesn't divide $t$, we need to account for the difference $\hat{\theta}_i^{(t)} - \hat{\theta}^{(t)}$, where $\hat{\theta}^{(t)}$ is the virtual iterate. Since the misclustering $A^{(t+1)} = \frac{1}{mn}\sum_{i\in [m]}\sum_{j\in [n]} \I\{\ip{\vw'_{i,j} + \theta^\star}{\hat{\theta}_i^{(t)}}\leq 0\}$, where $\I$ is the indicator function, we can obtain the following bound (Eq~\eqref{eq:2_cent_decomp}).
{\small
\begin{align*}
A^{(t+1)} \leq \frac{1}{mn}\sum_{i,j}\I\{\beta_1\norm{\theta^\star}^2 \leq -\ip{\vw'_{i,j}}{\theta^\star}\} + \frac{A^{(t)}}{4} + \Phi_1 + \frac{1}{mn}\sum_{i,j}\I\{\beta_2 \norm{\theta^\star}^2 \leq -\ip{\vw'_{i,j}}{\theta^\star}\}  + \Phi_2  + \frac{\Delta^{(t)}}{30}
\end{align*}
}
The first $3$ terms are similar to those for centralized Lloyd's, with a different $\beta_1$ and the last $3$ terms are the price paid for local steps. We need $\beta_1, \beta_2 >0$, for the first and fourth term to be $<\frac{1}{2}$. These are only possible if $A^{(t)} < \frac{1}{2} - \frac{\delta_1}{2} - \delta_2$ and $A_i^{(t)} - A^{(t)} \leq \frac{\delta_1}{2}-\frac{\sqrt{\log r}}{r},\forall i\in [m]$ for a constant $\delta_2>0$. This is possible if $A^{(t)}\leq \frac{1}{4}$, but note that it is not satisfied for $t=0$. Therefore, using this analysis for all steps would require a worse global initialization. The fifth term, is an upper bound on $\frac{1}{m}\sum_{i=1}^m \norm{\bar{\vw'}_i}^2$, where $\bar{\vw'}_i = \frac{1}{n}\sum_{j\in [n]} \vw'_{i,j}$. The expectation of this term is $\leq\frac{d}{nr^2}$, which provides $\Phi_2$ dependent on $n$ and a dependence of $\exp(-n)$ in the probability of error. The last term $\Delta^{(t)} = \frac{1}{m}\sum_{i=1}^m \norm{\hat{\theta}_i^{(t)} - \hat{\theta}^{(t)}}^2$ is the deviation term commonly used in the analysis of virtual iterate~\cite{karimireddy_scaffold_2020}. Existing analyses~\cite{karimireddy_scaffold_2020} of $\Delta^{(t)}$ unroll it to the last aggregation step, $\tau = \floor{t/L}\cdot L$, and use $\Delta^{(\tau)} = 0$. However, this forces $\Delta^{(t)}$ to grow exponentially with $L$. Existing analysis~\cite{karimireddy_scaffold_2020} set step size in gradient descent proportional to $\frac{1}{L}$ to handle this. As we do not have any tunable parameters in our algorithm, we use the fact that $A^{(t)} < \frac{1}{4}$, and a weaker bound on $\Delta^{(t)}$ in terms of $A^{(t)}$ and $A^{(t-1)}$ by unrolling it for only $1$ step. Further, we still need to prove the condition $A_i^{(t)} - A^{(t)}\leq \frac{\delta_1}{2} - \frac{\sqrt{\log r}}{2r}$, which uses an inductive argument by analyzing each local step on each machine, requiring large $n$.

\begin{remark}[Special case of centralized Lloyd's]
Note that if we set $L=1$,  then $t$ is always divisible by $L$, so Lemma~\ref{lem:single_step} recovers the correct single-step progress for centralized Lloyd's.
\end{remark}
The above remark shows that our analysis is tight for the case of $L=1$. However, note that the subsequent analysis in this section assumes that $L>1$.

\paragraph{Unrolling the recursion.}
Unrolling Lemma~\ref{lem:single_step} is not straightforward, as we need to account for the term $A^{(t-1)}$. However, we can still show that the misclustering decreases linearly at every step from initialization, with the additional terms $\Phi_1 + \Phi_2$ added at every iteration. Therefore, after a certain number of iterations, we can achieve a misclustering of $\Phi_1 + \Phi_2$. The following Lemma exactly characterizes this.
\begin{remark}
    Under the conditions of Lemma~\ref{lem:single_step}, after $t\geq 2(\ceil{\log(mn)} + L)$ steps, $A^{(t)} \leq 4 \Phi_1 + 4 \Phi_2$ 
\end{remark}
As $t\geq 2L$, we need atleast $2$ aggregations to converge to misclustering of $\Phi_1+\Phi_2$. Therefore, as $L$ increases, we need more number of iterations to converge to required error. Following the analysis of ~\cite{lu_statistical_2016}, the final error can be improved by a tighter analysis for $t\geq 2(\ceil{\log(mn)} + L)$ that improves terms of $\Phi_1$ to $\exp(-r^2)$. We perform a similar analysis for LocalKMeans, accounting for the additional terms due to local steps that we incurred in Lemma~\ref{lem:single_step}.  The following Theorem obtains the tightest bound on the final misclustering error.

\begin{theorem}[Final Error]\label{thm:2_cluster}
    Under the conditions of Lemma~\ref{lem:single_step}, after $t$ steps of Algorithm~\ref{alg:fedkmeans}, for $L>1$, where $t \geq (\ceil{q/\log(2)}(\ceil{\log(mn)} + L)+t_0$,   for some $q>0$ and $t_0 = 2\ceil{\log(mn)} + 2L$, with probability $1 - \zeta$, we have
\begin{align*}
 A^{(t)} = \mathcal{O}\left(\frac{1}{\zeta}\left(\exp(-n) + \max\left\{\frac{d}{n},1\right\}\frac{1}{r^2} + \frac{e^{-qL}}{(mn)^q} + \exp(-r^2)\right)\right).
 \end{align*}
\end{theorem}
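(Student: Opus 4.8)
The plan is a ``warm start, then refine'' argument that invokes the single-step bound of Lemma~\ref{lem:single_step} twice. First I would use the unrolling already recorded just before the theorem: on the high-probability event $\cE$ of Lemma~\ref{lem:single_step} (on which $A^{(t)}<\tfrac14$ for all $t$ past a burn-in and $A_i^{(t)}-A^{(t)}\le\tfrac{\delta_1}{2}-\tfrac{\sqrt{\log r}}{2r}$ on every machine), after $t_0=2\lceil\log(mn)\rceil+2L$ steps one has $A^{(t_0)},A^{(t_0-1)}\le 4\Phi_1+4\Phi_2=:\rho$, with $\rho$ below any prescribed constant once $r,mn$ are large. This $A^{(t_0)}$ now serves as a much better ``initialization'' for a sharper pass.

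\textbf{Refined single-step recursion.} With $A^{(t)}$ and each $A_i^{(t)}$ of order $\rho$ rather than order $\tfrac14$, I would rerun the decomposition behind Lemma~\ref{lem:single_step} (the bound~\eqref{eq:2_cent_decomp}). Smallness of $A^{(t)}$ and of $A_i^{(t)}-A^{(t)}$ forces the thresholds $\beta_1,\beta_2$ there to be bounded below by a positive constant close to $\tfrac12$, so each ``misclassified point'' average $\frac1{mn}\sum_{i,j}\I\{\beta\norm{\theta^\star}^2\le-\ip{\vw'_{i,j}}{\theta^\star}\}$ (and its per-machine, $n$-point analogue) has conditional mean $\bP[\cN(0,\sigma^2\norm{\theta^\star}^2)\le-\beta\norm{\theta^\star}^2]=\exp(-\Theta(\beta^2 r^2))=\exp(-\Theta(r^2))$. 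Instead of concentrating these rare events (which would cost a $\sqrt{\log(mn)/mn}$ term unless $mn\gg e^{r^2}$), I would pass to expectations and combine with a $\chi^2$-concentration bound on the local-noise term ($\E[\tfrac1m\sum_i\norm{\bar{\vw'}_i}^2]/\norm{\theta^\star}^2\lesssim d/(nr^2)$, with an $\exp(-n)$ tail) and with a one-step unrolling of the virtual-iterate deviation $\Delta^{(t)}$ (bounded by $A^{(t)},A^{(t-1)}$ plus the same noise floor, exactly as in the proof of Lemma~\ref{lem:single_step}). This gives a refined recursion in expectation valid for $t>t_0$: $\E[A^{(t+1)}\mid\mathcal F_t]\le\tfrac12 A^{(t)}+\Psi$ if $L\mid t$, and $\E[A^{(t+1)}\mid\mathcal F_t]\le\tfrac{11}{20}A^{(t)}+\tfrac15 A^{(t-1)}+\Psi$ otherwise, with $\Psi=O(\exp(-r^2)+\max\{d/n,1\}\,r^{-2}+\exp(-n))$ --- the middle term collecting the $\Phi_2$-style local-noise floor together with the residual $r^{-2}$ slack of the aggregation step that the virtual-iterate coupling cannot erase. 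An induction on $t$ shows $A^{(t)}\le\rho$ persists, so the refined bound is self-consistent for all $t>t_0$.

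\textbf{Phased unrolling and Markov.} I would then unroll the refined recursion in blocks of $\lceil\log(mn)\rceil+L$ steps. As in the Remark's unrolling (using $A^{(t-1)}\ge A^{(t)}$ along a non-increasing orbit and the extra factor $\tfrac12$ gained at every aggregation step), one block contracts the homogeneous, initialization-dependent part of $\E[A^{(t)}]$ geometrically --- by a factor of order $2^{-(\lceil\log(mn)\rceil+L)}$, i.e.\ $(mn)^{-\Theta(1)}e^{-\Theta(L)}$ --- while adding at most $O(\Psi)$ of fresh error (a geometric series in the per-step rate). After $\lceil q/\log2\rceil$ blocks, i.e.\ once $t\ge\lceil q/\log2\rceil(\lceil\log(mn)\rceil+L)+t_0$ as in the statement, the homogeneous part is driven to $O((mn)^{-q}e^{-qL})$ (the factor $\lceil q/\log2\rceil$ performing the base-$2$/base-$e$ bookkeeping in the exponents) and the residual is $O(\Psi)$, so $\E[A^{(t)}\mid\cE]=O\big((mn)^{-q}e^{-qL}+\exp(-r^2)+\max\{d/n,1\}r^{-2}+\exp(-n)\big)$. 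Markov's inequality turns this into the claimed bound with the $1/\zeta$ prefactor, and a union bound against $\bP(\cE^c)=(mn)^{-3}+\exp(-r^2)+\exp(-n)$, which $\zeta$ absorbs, completes the proof.

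\textbf{Main obstacle.} The crux is the refined single-step step: upgrading the crude $\Phi_1\asymp\sqrt{\log(mn)/mn}+r^{-2}$ to an $\exp(-r^2)$ floor has to be done in expectation while simultaneously re-establishing, in this much smaller-error regime, the inductive invariant $A_i^{(t)}-A^{(t)}\le\tfrac{\delta_1}{2}-\tfrac{\sqrt{\log r}}{2r}$ on every machine --- this is precisely what keeps $\beta_1,\beta_2$ bounded away from $0$ --- and while checking that $\Delta^{(t)}$ and $\Phi_2$ contribute only $\max\{d/n,1\}r^{-2}+\exp(-n)$ to the floor rather than silently re-injecting an $mn$-polynomial term. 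The block bookkeeping in the unrolling step --- interleaving the $\tfrac12$ aggregation factor with the two-step non-aggregation factor and landing exactly on $e^{-qL}/(mn)^q$ --- is a secondary, more mechanical difficulty.
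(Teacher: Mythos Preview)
Your proposal is correct and follows essentially the same two-phase strategy as the paper's Section~\ref{sec:final_error_2_cluster}: first unroll Lemma~\ref{lem:single_step} to reach $A^{(t_0)}=O(\Phi_1+\Phi_2)$ at $t_0=2\lceil\log(mn)\rceil+2L$, then rerun the decomposition~\eqref{eq:2_cent_decomp} with $\beta_1,\beta_2$ now bounded below by absolute constants, take expectations of the indicator terms (so the $\Phi_1$ floor is replaced by $\exp(-r^2)$) while handling $I_4,I_5$ via conditional expectations of the noise norms on the high-probability event, unroll the resulting recursion on $\E[A^{(t)}]$, and finish with Markov's inequality. One small correction to your unrolling: you invoke ``$A^{(t-1)}\ge A^{(t)}$ along a non-increasing orbit,'' but monotonicity of $A^{(t)}$ is never established and need not hold at non-aggregation steps; the paper instead posits the geometric ansatz $\E[A^{(t)}]\le\gamma^{\,t-t_0}\Phi_3+O(\Phi_4)$ and verifies it by direct induction, plugging the bounds for both $\E[A^{(t)}]$ and $\E[A^{(t-1)}]$ into the two-term recursion and checking that the resulting coefficient is at most $\gamma^2$ --- exactly the device used for Eq.~\eqref{eq:a_t_rec}.
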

After running $2$ rounds of LocalKMeans, any additional steps, which decide the value of $q$, can decrease the final error. For a large constant $q$, we can make the term dependent on $mn$ arbitrarily small. There are also additional terms of $\exp(-n)$ and $\max\{\nicefrac{d}{n},1\}r^{-2}$, which are not present for the centralized case. Therefore, local steps requires us to run more iterations, at least $2L$, and requires larger SNR $r$ to converge to the same final misclustering. We provide a proof for all theoretical results in this section in Appendix~\ref{sec:2_cluster_proof}.

\subsection{Comparisons to Baselines}
\label{sec:baselines_2}
Theoretical analysis of centralized Lloyd's is obtained by using $mn$ data points in ~\cite[Theorem~3.1]{lu_statistical_2016}. This yields the same initialization as Assumption~\ref{assumption:local_dev}. Further, it obtains $\exp(-r^2)$ final misclustering, in only $\mathcal{O}(\log(mn))$ iterations, with probability of error depending on $\exp(-r^2)$ and $mn$. Our results obtain higher final misclustering, $r^{-2}$, take more iterations $\mathcal{O}(\log(mn) + L)$, and have a probability of error depending on $\exp(-n)$. However, LocalKMeans requires $L$ times less communication than Centralized Lloyd's. To compare with Lloyd's without aggregation, we set the number of datapoints to $n$ in ~\cite[Theorem~3.1]{lu_statistical_2016}, and take a union bound over all machines $i\in [m]$. While this method does not communicate at all, it's theoretical guarantees are much worse than LocalKMeans. It's initialization requirement needs to hold for each machine, it's probability of error varies as $mn^{-3}$  and $m\exp(-r_n^2)$, where $r_n = \frac{\norm{\theta^\star}}{\sigma}(1 + \frac{9d}{n})^{-1} = \frac{\sqrt{mn+9d}}{\sqrt{n+9d}\sqrt{m}}r$ ( $r_n$ is the SNR on $n$ datapoints, and $r$ is the SNR on $mn$ datapoints). This forces $m = \max\{\text{poly}(n),\exp(r_n^2)\}$. Therefore, Lloyd's without aggregation cannot work with large $m$, small $r$ and bad  initialization in any machine; however, LocalKMeans can.

\section{LocalKMeans on K clusters}
\label{sec:k_cluster_summarized}

For the $K$-cluster case, theoretical analysis is qualitatively similar to $2$-cluster case. Due to lack of space, we only state the initialization requirement, the single-step progress and the final error and defer further discussion and comparison against baselines to Appendix~\ref{sec:k_cluster}, and the proof to Appendix~\ref{sec:k_cluster_proof}.  
\begin{assumption}[Initialization]\label{assumption:local_dev_k} $G^{(0)}  < \left(\frac{1}{2}  - \frac{6}{\sqrt{r_K}}\right)\frac{1}{\lambda},\quad \text{ or } \quad \Lambda^{(0)} \leq \frac{1}{2} - \frac{4}{\sqrt{r_K}}$.
\end{assumption}
\begin{lemma}[Single-step progress]\label{lem:single_step_k}
    Suppose Assumption~\ref{assumption:local_dev_k} holds, $\lambda  \leq c_1,\alpha  \geq c_2\sqrt{\frac{K\log(mn)}{mn}}), \beta \geq c_3 m^{-1}, n \geq c_4 \log m$ and $r_K \geq c_5\sqrt{\frac{K}{\alpha}}\max\{ \sqrt{\frac{d}{n}}, 1\})$ for some constants $c_1>1$ and $c_2,\ldots,c_5>0$. Let $\Psi_1 \leq c_6 (\frac{1}{r_K^2}+ \sqrt{\frac{K\log(mn)}{\alpha^2 mn}}), \Psi_2\leq c_7\frac{K\max\{\frac{d}{n},1\}}{\alpha r_K^2}$ for some constants $c_6, c_7>0$. Then, at time step $t$ in Algorithm~\ref{alg:fedkmeans}, with probability $1  - (mn)^{-3} - \exp(- n)-\exp(-r_K^2)$,  we have, $\forall t\geq 1, G^{(t)} \leq 0.18, \Lambda^{(1)}\leq 0.2 $ and there exits a constant  $\delta_1 \in (0, 0.1-\epsilon')$ such that  $\Lambda_i^{(t)} \leq  \Lambda^{(t)} + \frac{\delta_1}{2} ,\, \forall i \in [m]$ for some constant $\epsilon'>0$. Further, for some constants $c_8, c_9>0$, $\forall t\geq 1,  \text{ if $L$ divides $t$,}\quad
         G^{(t+1)} \leq \frac{c_8}{r_K^2} G^{(t)}  + \Psi_1,  \quad \text{otherwise,}\quad
        G^{(t+1)} \leq  \frac{c_9}{r_K^2} G^{(t)} + \frac{c_9}{r_K^2} G^{(t-1)} + \Psi_1  + \Psi_2.$
 \end{lemma}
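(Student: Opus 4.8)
The plan is to lift the argument behind Lemma~\ref{lem:single_step} from the symmetric $2$-cluster case to $K$ clusters, combining the centralized $K$-means analysis of~\cite{lu_statistical_2016} with virtual-iterate bookkeeping. For every $t$ define the virtual global centers $\hat\theta_k^{(t)}=\frac1m\sum_{i\in[m]}\hat\theta_{k,i}^{(t)}$ (equal to the broadcast model exactly when $L\mid t$) and the deviation $\Delta^{(t)}=\max_{k\in[K]}\frac1m\sum_{i\in[m]}\frac1{\Gamma^2}\norm{\hat\theta_{k,i}^{(t)}-\hat\theta_k^{(t)}}^2$. As in the $2$-cluster sketch, the three assertions---$G^{(t)}\le0.18$ with $\Lambda^{(1)}\le0.2$; the local-deviation bound $\Lambda_i^{(t)}\le\Lambda^{(t)}+\delta_1/2$; and the one-step recursion---are proved \emph{simultaneously by induction on} $t$, the inductive hypotheses $G^{(s)}\le0.18$ and $\Lambda_i^{(s)}\le\Lambda^{(s)}+\delta_1/2$ for $s\le t$ being exactly what is needed to run step $t+1$.

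\textbf{Step 1: pointwise misclustering decomposition.} For $\vx_{i,j}=\theta_k+\vw_{i,j}$ with true label $k$, assignment to $h\ne k$ at step $t+1$ forces $\norm{\vx_{i,j}-\hat\theta_{h,i}^{(t)}}^2\le\norm{\vx_{i,j}-\hat\theta_{k,i}^{(t)}}^2$; expanding and writing $\hat\theta_{k,i}^{(t)}=\theta_k+(\hat\theta_{k,i}^{(t)}-\theta_k)$ places this event inside $\{2\ip{\vw_{i,j}}{\theta_h-\theta_k}\ge\norm{\theta_h-\theta_k}^2-(\text{terms of order }\Gamma^2\Lambda_i^{(t)})\}$, where the separation bounds $\Gamma\le\norm{\theta_h-\theta_k}\le\lambda\Gamma$ are used throughout and $\lambda=\mathcal O(1)$ keeps the correction from overwhelming the margin. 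Summing the indicator over $j$ with true label $k$ (false negatives of $k$, normalized by $\nu_k^\star\ge\alpha mn$) and over $(i,j)$ with true label $h$ (false positives of $k$, normalized by $\nu_k^{(t)}\gtrsim\alpha mn$, which uses the inductive $G^{(t)}\le0.18$) yields a deterministic bound on $G^{(t+1)}$ in terms of: (a) a noise-concentration term $\frac1{\alpha mn}\sum\I\{\ip{\vw_{i,j}}{\cdot}\ge c\Gamma^2\}$; (b) an aggregate relative center-deviation quantity $\lesssim\Lambda^{(t)}$; and, when $L\nmid t$, additionally (c) the deviation $\Delta^{(t)}$ and (d) a per-machine average $\frac1m\sum_i\frac1{\Gamma^2}\norm{\frac1n\sum_j\vw_{i,j}}^2$. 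When $L\mid t$ the local centers collapse to the global ones, only (a),(b) remain, and one recovers the centralized single-step contraction of~\cite{lu_statistical_2016}.

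\textbf{Step 2: center-error recursion, deviation control, and the induction.} Conversely $\hat\theta_k^{(t+1)}-\theta_k$ is an average over the (partly mislabeled) points assigned to $k$, so a standard decomposition bounds $\Lambda^{(t+1)}$---and $\Lambda_i^{(t+1)}$ on each machine, where $\beta=\Omega(m^{-1})$ guarantees every local cluster holds $\Omega(\nu_k^\star/m)$ points and hence a concentrating empirical mean---by $\mathcal O(G^{(t)})$ plus noise terms of order $1/r_K$. Composing this with the $G$-recursion of Step~1, whose leading term is the center deviation scaled by $1/r_K^2$, and using $r_K\gtrsim\sqrt{K/\alpha}\max\{1,\sqrt{d/n}\}$, produces the contractions with factor $c_8/r_K^2$ (aggregation) and $c_9/r_K^2$ (no aggregation), with additive remainders $\Psi_1$ (global noise plus $r_K^{-2}$) and $\Psi_2$ (per-machine noise plus deviation). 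Crucially I do \emph{not} unroll $\Delta^{(t)}$ back to the last aggregation step---that grows exponentially in $L$ and, unlike local SGD, there is no step size to damp it---but only one step, bounding $\Delta^{(t)}=\mathcal O(G^{(t)}+G^{(t-1)}+\Psi_2)$ via the inductive $G^{(t)},G^{(t-1)}\le0.18$; this is precisely what creates the $G^{(t-1)}$ term. Feeding the recursion back, Assumption~\ref{assumption:local_dev_k} (small $G^{(0)}$ or $\Lambda^{(0)}$) forces $\Lambda^{(1)}\le0.2$ and then $G^{(1)}\le0.18$, and the contraction factor $c_i/r_K^2<1$ keeps both below threshold for all subsequent $t$; the bound $\Lambda_i^{(t)}\le\Lambda^{(t)}+\delta_1/2$ is obtained by the same per-machine, per-cluster inductive argument as in Lemma~\ref{lem:single_step}, which is where $n\gtrsim\log m$ enters.

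\textbf{Step 3: probability budget and the main obstacle.} A union bound over the $mn$ points and $K$ clusters controls the event-(a) sums: they are averages over at least $\alpha mn$ noise vectors, so sub-Gaussianity gives tails $\exp(-r_K^2)$ (hence the requirement $\alpha\gtrsim\sqrt{K\log(mn)/mn}$) together with the $(mn)^{-3}$ term; the per-machine averages in (c),(d) involve only $n$ points, producing the $\exp(-n)$ term after a union bound over the $m$ machines, which forces $n\gtrsim\log m$. I expect the main obstacle to be the bidirectional $G$--$\Lambda$ coupling: in contrast to the symmetric $2$-cluster case, where the center estimate is a function of the misclustering, here $\Lambda^{(t)}$, the family $\{\Lambda_i^{(t)}\}_{i\in[m]}$, and $G^{(t)}$ must all be tracked at once and their mutual feedback shown to contract uniformly over all $K$ clusters and all $m$ machines, while $\Delta^{(t)}$ is kept small with no tunable step size---this is the part of the induction demanding the most care and where the constants $c_1,\dots,c_9$, $\delta_1$, $\epsilon'$ are pinned down.
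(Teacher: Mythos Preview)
Your proposal is correct and follows essentially the same route as the paper: the paper also proceeds by (i) a three-term indicator decomposition of $\nu_{kh}^{(t+1)}$ into a noise term, a global center-error term $\Gamma_k^{(t)}$, and a local deviation term $\Delta_{k,i}^{(t)}$ (your (a),(b),(c)), (ii) separate center-error lemmas relating $\Lambda^{(t+1)}$ back to $G^{(t+1)}$ and $\Lambda^{(t)}$ in the two cases $L\mid t$ and $L\nmid t$, (iii) a one-step unroll of $\Delta^{(t)}$ in terms of $(\Lambda^{(t)})^2$ and $(\Lambda^{(t-1)})^2$ (then converted to $G$ via $\Lambda^{(\tau)}\le\sqrt{\Psi_2}+G^{(\tau)}$), and (iv) a simultaneous induction establishing $G^{(t)}\le0.18$, $\Lambda^{(t)}\le0.2$, and the local-deviation bound $\Lambda_i^{(t)}\le\Lambda^{(t)}+\delta_1/2$ from the base case $t=1$ onward. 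The only minor organizational difference is that the paper tracks $\Delta^{(t)}$ through $\Lambda$ rather than $G$ directly and does not isolate a separate per-machine noise term (your (d)) in the label-error decomposition---that contribution is absorbed into the center-error lemmas---but this does not affect the argument.
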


 \begin{remark}[Progress in $\Lambda^{(t)}$]
     We can obtain a recursion for $\Lambda^{(t+1)}$ similar to the one we have for $G^{(t+1)}$.
 \end{remark}

\begin{theorem}[Final Error]\label{thm:k_cluster}
If the conditions in Lemma~\ref{lem:single_step_k} hold, after running Algorithm~\ref{alg:fedkmeans} for $t$ steps, where $t\geq q(\lceil \log(mn) \rceil + L) + t_0$, for some $q>0$ and $t_0 = 2\ceil{\log(mn)} + 2L$, with probability $1-\xi$, we obtain, 
\begin{align*}
A^{(t)} = \mathcal{O}\left(\frac{1}{\xi}\left(\exp(-n) + \max\left\{\frac{d}{n},1\right\}\frac{K}{\alpha r_K^2 } + \frac{e^{-qL}}{(mn)^q}+\exp(-r_K^2)\right) \right) 
\end{align*}
\end{theorem}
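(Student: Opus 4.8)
}
The plan is to run the two-phase argument already used for the $2$-cluster Theorem~\ref{thm:2_cluster}, now driven by the coupled recursion of Lemma~\ref{lem:single_step_k}. In the \emph{burn-in phase} I iterate the two bounds of Lemma~\ref{lem:single_step_k}: on aggregation steps $G^{(t+1)}\le \tfrac{c_8}{r_K^2}G^{(t)}+\Psi_1$ has no memory term, and otherwise $G^{(t+1)}\le \tfrac{c_9}{r_K^2}\big(G^{(t)}+G^{(t-1)}\big)+\Psi_1+\Psi_2$. Since $r_K\ge c_5\sqrt{K/\alpha}\max\{\sqrt{d/n},1\}\gtrsim\sqrt K$ is large, both coefficients fall below any fixed $\rho<\tfrac18$ once $c_5$ is large enough; composing two consecutive non-aggregation steps then shows $\max\{G^{(t)},G^{(t-1)}\}$ contracts by a factor $\mathcal{O}(\rho)$ every two steps (aggregation steps only help), so $\max\{G^{(t)},G^{(t-1)}\}\le (\mathcal{O}(\rho))^{t/2}\cdot\mathcal{O}(1)+\mathcal{O}(\Psi_1+\Psi_2)$, starting from the $\mathcal{O}(1)$ bound $G^{(1)}\le 0.18$ that Lemma~\ref{lem:single_step_k} already supplies. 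Hence after $t_0=2\ceil{\log(mn)}+2L$ steps the first term is dominated by the residual and $G^{(t)}\le\mathcal{O}(\Psi_1+\Psi_2)$ — the $K$-cluster analogue of the intermediate burn-in estimate in Section~\ref{sec:2_cluster}; the same is carried in parallel for $\Lambda^{(t)}$ via the Remark following Lemma~\ref{lem:single_step_k}.

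\paragraph{Refinement phase.}
For the sharp final bound I re-derive a single Lloyd step once $G^{(t)}$ and $\Lambda^{(t)}$ are already in the basin, i.e. of size $o(1)$. Writing $G^{(t+1)}$ as an average of misclassification indicators and decomposing each local center $\hat\theta^{(t)}_{k,i}$ into $\theta_k$ plus an error coming from (a) the currently misclustered mass — now known to be $\mathcal{O}(\Psi_1+\Psi_2)$ — and (b) per-cluster noise averages, the crude union bound over the $\binom K2$ center pairs responsible for the $\sqrt{K\log(mn)/(\alpha^2mn)}$ contribution to $\Psi_1$ can be replaced, following the refinement step of \cite{lu_statistical_2016}, by a sub-Gaussian tail estimate contributing only $\exp(-r_K^2)+\exp(-n)$. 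This produces a refined recursion of the same shape with residual $\Psi_1'=\mathcal{O}(\exp(-r_K^2)+\exp(-n))$ and $\Psi_2'=\mathcal{O}(\max\{d/n,1\}K/(\alpha r_K^2))$, valid for all $t\ge t_0$ (the bare $1/r_K^2$ piece of $\Psi_1$ is absorbed into $\Psi_2'$ since $1\le K/\alpha$). Unrolling this refined recursion over the further $q(\ceil{\log(mn)}+L)$ steps, using that one block of $\ceil{\log(mn)}+L$ steps contracts $\max\{G^{(t)},G^{(t-1)}\}$ by roughly $2^{-L}/(mn)$, collapses the leftover $\mathcal{O}(\Psi_1+\Psi_2)$ of Phase~1 down to $\mathcal{O}(e^{-qL}/(mn)^q)$, so $G^{(t)}=\mathcal{O}\big(\exp(-n)+\max\{d/n,1\}K/(\alpha r_K^2)+e^{-qL}/(mn)^q+\exp(-r_K^2)\big)$, and $A^{(t)}\le G^{(t)}$ transfers the bound to $A^{(t)}$.

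\paragraph{The $1/\xi$ factor.}
The prefactor enters as in the $2$-cluster case: the per-cluster noise averages $\tfrac{1}{\nu^\star_{k,i}}\sum_{j\in S^\star_{k,i}}\vw_{i,j}$ (the analogue of $\bar{\vw}_i$) have expected squared norm $\mathcal{O}(d\sigma^2/(\beta\nu^\star_k))$ but concentrate only with probability $1-\exp(-\mathrm{poly}(n))$, which is too weak to union-bound over the $\mathcal{O}(\log(mn)+L)$-step horizon and the $K$ clusters; so I carry these terms through the recursion in conditional expectation — conditioned on the favorable event of Lemma~\ref{lem:single_step_k} — and apply Markov's inequality once at the end, yielding the $1-\xi$ statement with the $1/\xi$ loss. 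The remaining randomness (the misclassification indicators and the sub-Gaussian inner products on clusters of size $\ge\alpha mn\gtrsim\sqrt{K\log(mn)\,mn}$) concentrates strongly enough that a union bound over the horizon keeps the residual failure probability within $(mn)^{-3}+\exp(-n)+\exp(-r_K^2)$, subsumed in $\xi$.

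\paragraph{Main obstacle.}
The delicate point, as for $K=2$, is that the recursion of Lemma~\ref{lem:single_step_k} is not free-standing: it holds only while the auxiliary invariants $G^{(t)}\le 0.18$, $\Lambda^{(t)}$ in its basin, and $\Lambda_i^{(t)}\le\Lambda^{(t)}+\delta_1/2$ for \emph{every} $i\in[m]$ persist, and these must be re-established inductively at each of the $L$ local updates between aggregations — the per-machine invariant requiring the large-$n$ hypothesis and interacting with the virtual-iterate deviation $\Delta^{(t)}=\tfrac1m\sum_i\sum_k\norm{\hat\theta^{(t)}_{k,i}-\hat\theta^{(t)}_k}^2/\Gamma^2$, which must itself be controlled by unrolling only one step (rather than to the last aggregation) to avoid exponential-in-$L$ growth in the absence of a tunable step size. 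Thus both phases must be run as a simultaneous induction on $(G^{(t)},\Lambda^{(t)},\{\Lambda_i^{(t)}\}_i,\Delta^{(t)})$, keeping all four quantities in their basins across both an aggregation and a non-aggregation step while the memory term $G^{(t-1)}$ couples consecutive steps. A secondary difficulty is that the exponential refinement of $\Psi_1$ needs the Lu--Zhou concentration redone for arbitrary, non-symmetric centers with $\lambda=\mathcal{O}(1)$, where the inter-cluster cross terms do not cancel as cleanly as in the symmetric $2$-cluster computation.
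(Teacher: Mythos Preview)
Your proposal is correct and follows the same two-phase (burn-in plus refinement) structure as the paper's proof. The only notable bookkeeping difference is that the paper runs the burn-in by first converting the $G^{(t)}$ recursion of Lemma~\ref{lem:single_step_k} into a recursion for $\Lambda^{(t)}$ (with constant contraction factors $0.4$ and $0.2$ rather than $c/r_K^2$, obtained by combining Lemmas~\ref{lem:err_centers_k_L}, \ref{lem:err_centers_k} with Lemmas~\ref{lem:err_labels_k_L}, \ref{lem:err_labels_k}), and in the refinement phase works directly with $\E[A^{(t)}]$ via a four-term decomposition $I_1'+J_2+J_3+I_3'$ mirroring \cite{lu_statistical_2016} rather than sharpening the $G$-recursion and then invoking $A^{(t)}\le G^{(t)}$. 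On all the substantive points --- the simultaneous induction maintaining the per-machine invariant $\Lambda_i^{(t)}\le\Lambda^{(t)}+\tfrac{\delta_1}{2}$, the one-step (not $L$-step) unrolling of the deviation $\Delta^{(t)}$ to avoid exponential blow-up in $L$, the replacement in expectation of the $\sqrt{K\log(mn)/(\alpha^2 mn)}$ piece of $\Psi_1$ by $\exp(-r_K^2)$, and the final Markov step producing the $1/\xi$ factor --- your plan and the paper agree.
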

Note that this initialization is exactly the same as required for running centralized Lloyd's and is much weaker than the per machine initialization required for Lloyd's without aggregation. Again, we require at least $2\log(mn) + 2L$ iterations, i.e., $\geq2$ rounds, to get a small final error, which now scales as $K/r_K^2$ instead of $\exp(-r_K^2)$. An additional condition is that $\beta  = \Omega(m^{-1})$, which makes each cluster to be balanced across machines, and $\lambda = \mathcal{O}(1)$, which balances the distance between true cluster centers.

\section{Experiments}
\label{sec:exp}

We ran the LocalKMeans algorithm on synthetic Gaussian data to validate our theory and on real datasets to show it's practical effectiveness. In all our experiments, we compare against the baselines, centralized Lloyd's ($L=1$) and Lloyd's without aggregation. These are compared to LocalKMeans with small $L$ ($L=2,3$) and large $L$ ($L=\frac{T}{2}$, where $T$ is the total number of iterations). Note that in the large $L$ case we only perform $2$ aggregations. 
 
\subsection{Synthetic Data}
\label{sec:synthetic data}

The synthetic data is generated by a mixture of Gaussians, which exactly matches the statistical model (\ref{def:mix_subg}). In this experiment, the ground truth centers $\{\theta_i\}_{i=1}^K$ are orthonormal vectors with dimension $d$. The added noise $\vw_{i,j}$ follows Gaussian distribution whose elements satisfy $\mathcal{N}(0, \sigma^2)$. The SNR $r$ is defined as $r \triangleq  \frac{\norm{\theta^\star}}{\sigma\sqrt{1 + 9d/mn}}$. We have two different methods to initialize the LocalKMeans algorithm:

\textbf{LocalKMeans++ Initialization:} We use the LocalKMeans++ algorithm as the initialization method  (Algorithm \ref{alg:localkmeans++}). It is a distributed implementation of KMeans++ initialization \cite{arthur_k-means_2007}.

\begin{figure*}[htbp]
    \centering
    \subfigure[LocalKMeans++ Initialization, $r=3.01$.]
    {
    \includegraphics[width=0.235\textwidth]{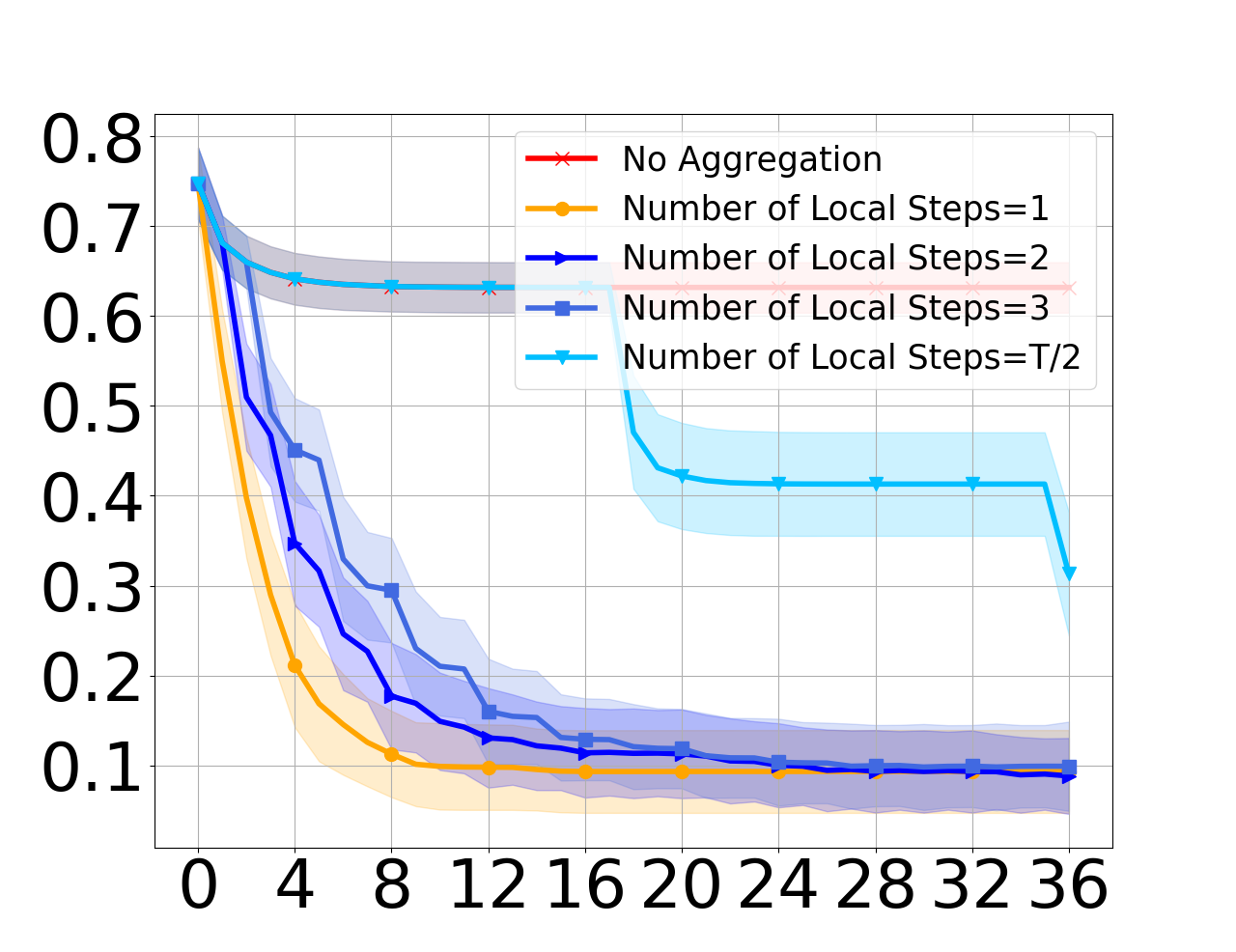}
    }
    \subfigure[LocalKMeans++ Initialization, $r=6.02$.]
    {
    \includegraphics[width=0.235\textwidth]{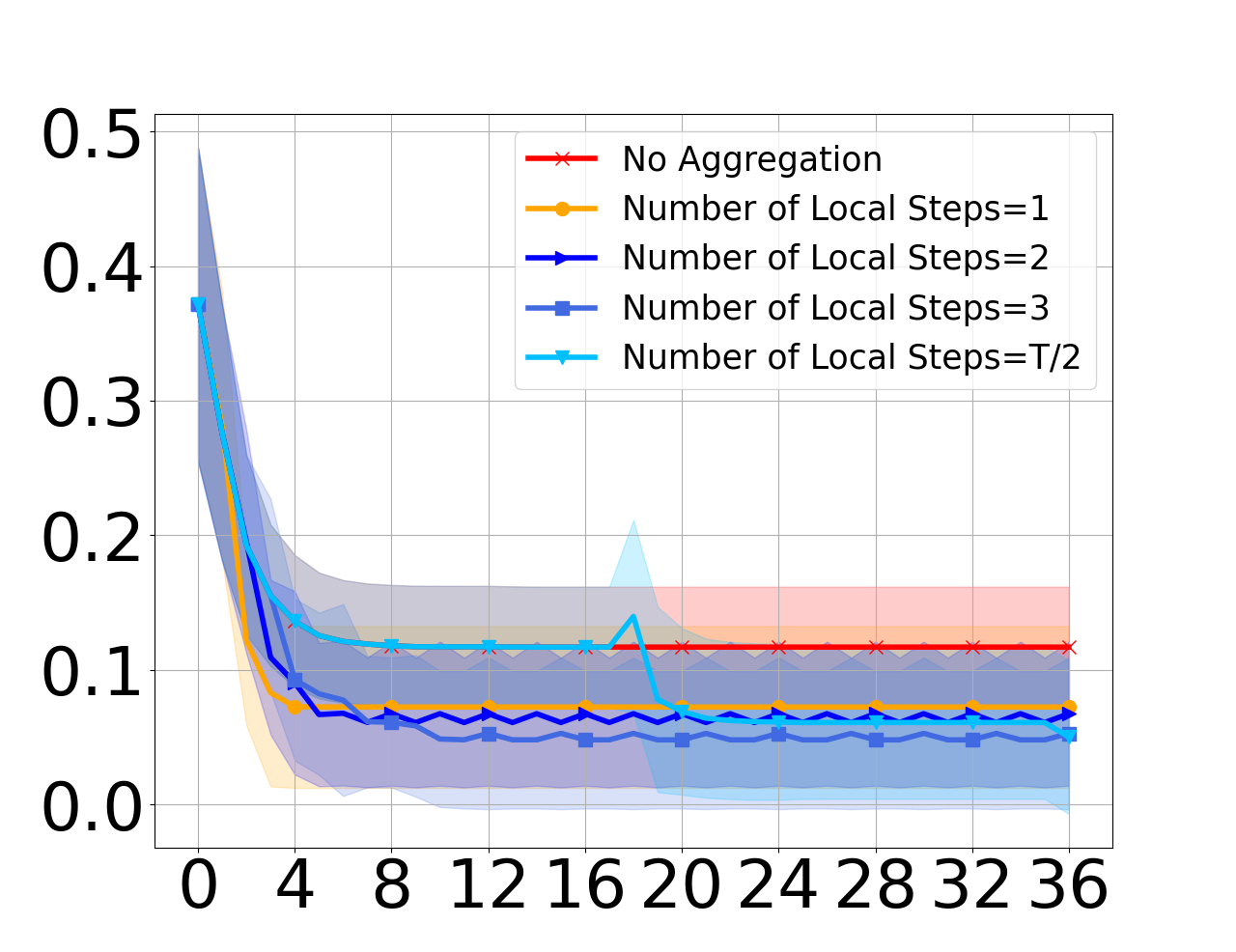}
    }
    \subfigure[Perturbed Initialization, $r=3.01$.]
    {
    \includegraphics[width=0.235\textwidth]{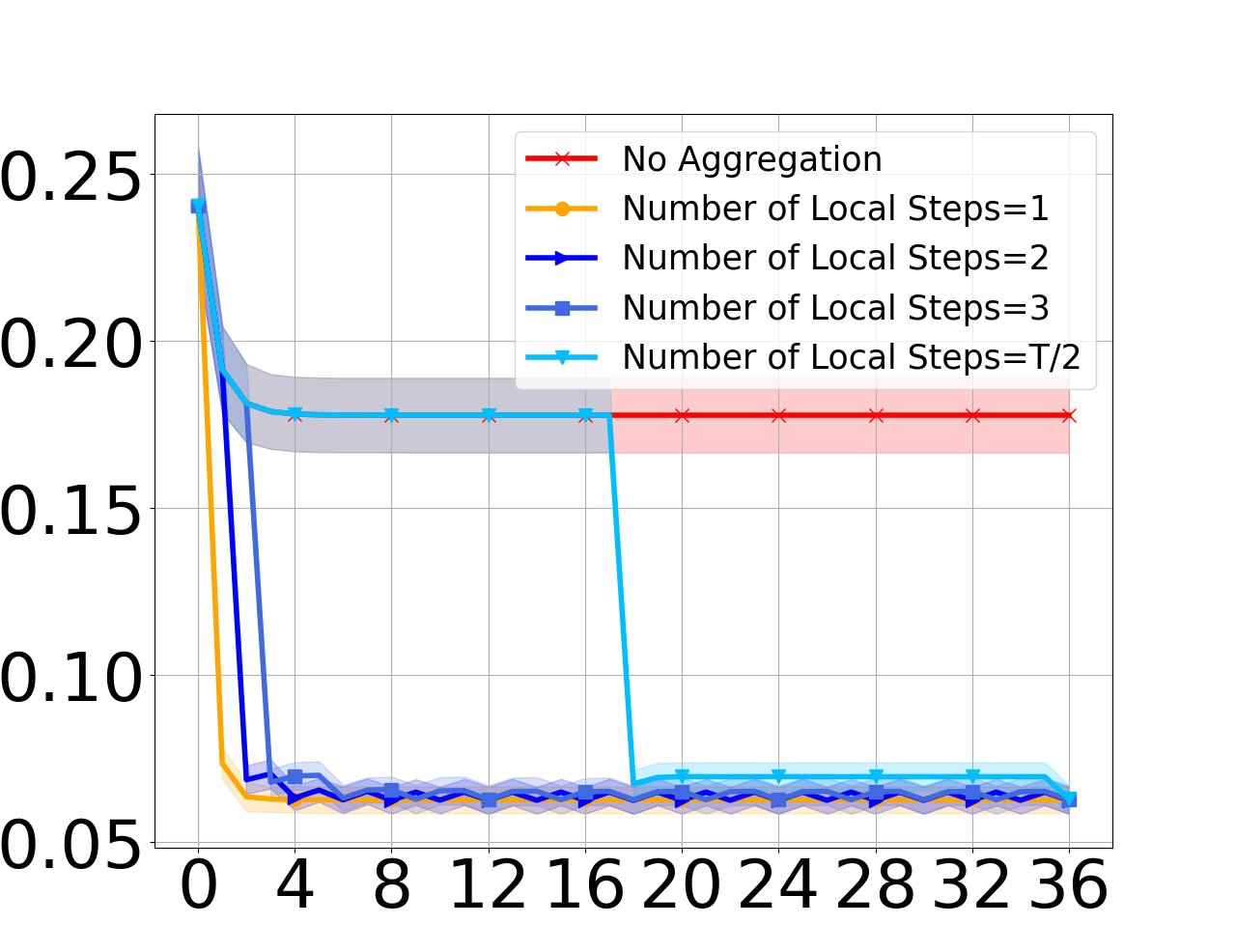}
    }
    \subfigure[Perturbed Initialization, $r=6.02$.]
    {
    \includegraphics[width=0.235\textwidth]{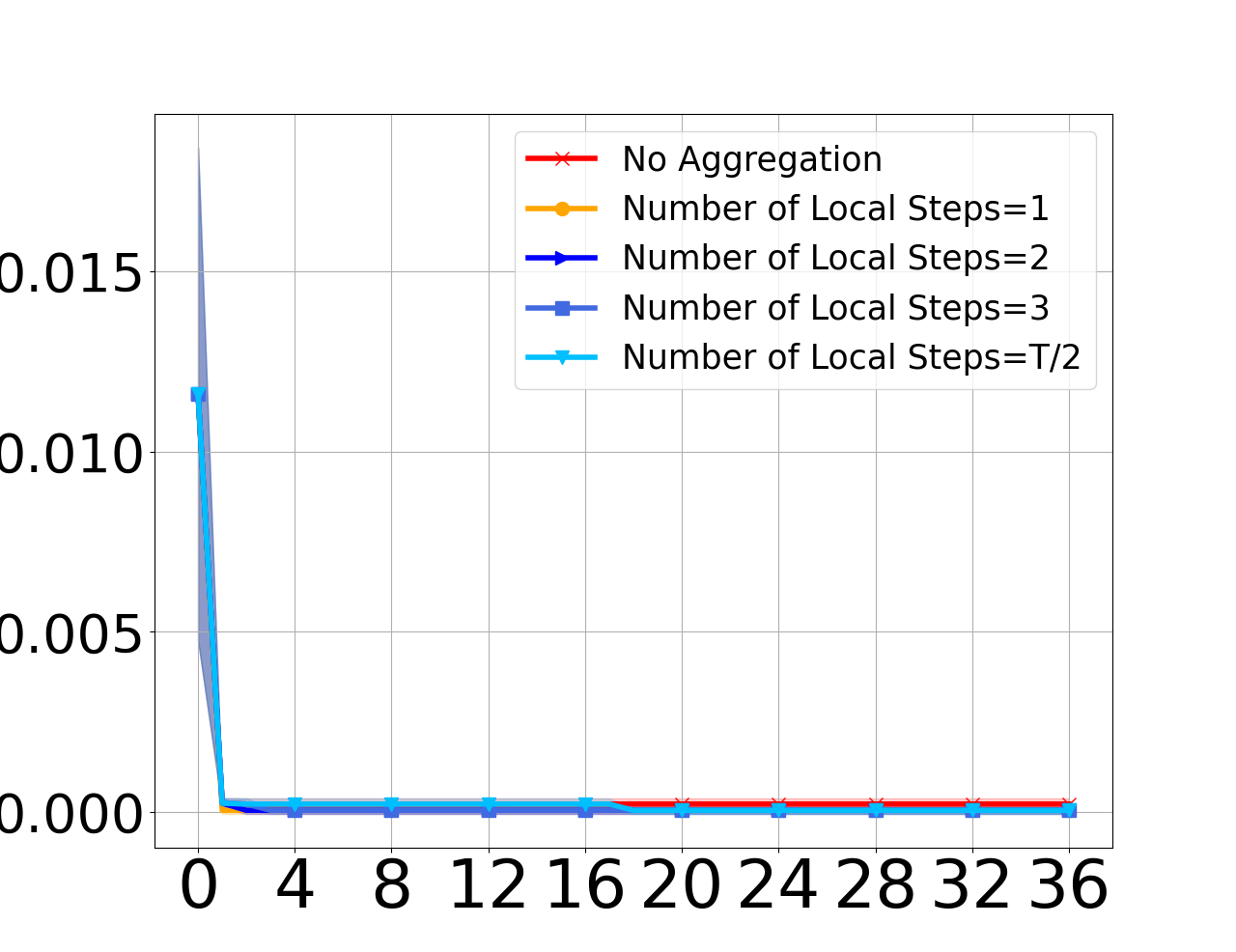}
    }
    \caption{Performance of LocalKMeans on mixture of Gaussians with different initializations and SNRs. Horizontal Axis: Number of Total Iterations (T), Vertical Axis: Misclustering Ratio.}
    \label{fig:KCluster_local}
\end{figure*}

\textbf{Perturbed Initialization:} Since we know the ground truth centers $\{\theta_i\}_{i=1}^K$ to generate the Gaussian data points, we add a small noise to these true centers and get the initialization. This initialization is primarily used to show the performance of LocalKMeans if we can have a very good initialization.

In all the experiments for synthetic Gaussian data, the experiments are repeated 20 times. We plotted the means of 20 trials with standard deviations in each figure. The x-axis in these figures is the number of total iterations, i.e., $T$ in Algorithm \ref{alg:fedkmeans}. The error metric is the misclustering. The experiments of synthetic datasets are conducted on a CPU with about 10s for each trial. Fig. \ref{fig:KCluster_local} shows the misclustering ratio of LocalKMeans with different initializations and SNRs. In this experiment, the dimension is $d=100$, number of clusters is $K=10$, number of compute nodes is $m=20$, number of samples per node is $n=200$.

We can see when SNR is relatively low (r=3.01), the misclustering ratio of Lloyd's without aggregation is still high in the end with the two aggregations. In this case the clients indeed need to collaborate to obtain a better clustering result. When increasing the number of local steps, the convergence of LocalKMeans is slower but they will converge to the same point finally. From the curves with $T/2$ local steps, it is clear that the misclustering ratio decreases significantly after every aggregation. When SNR is high, the Lloyd's without aggregation can get an acceptable performance with the worse LocalKMeans++ initialization, but not same as centralized Lloyd's. Again, doing only $2$ aggregations is sufficient to converge to the same final error as centralized Lloyd's. For large SNR, and strong perturbed initialization, all algorithms, including Lloyd's without aggregation perform well.

\subsection{Real Datasets}

\label{sec:real data}

\begin{figure*}[t!]
    \centering
    \subfigure[Motion Capture Hand Postures Dataset.]
    {
    \includegraphics[width=0.3\textwidth]{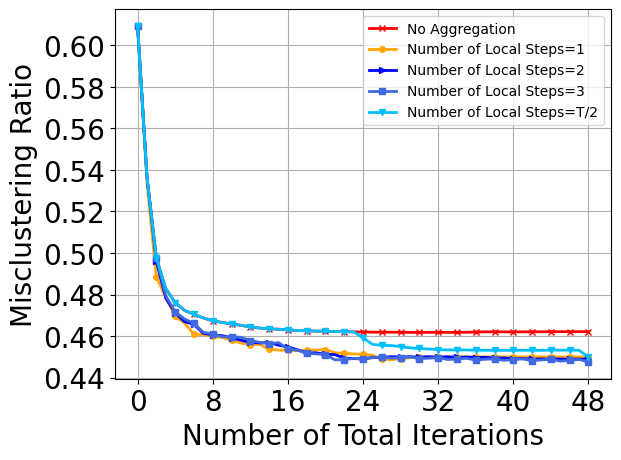}
    }
    \subfigure[MNIST Embeddings.]
    {
    \includegraphics[width=0.3\textwidth]{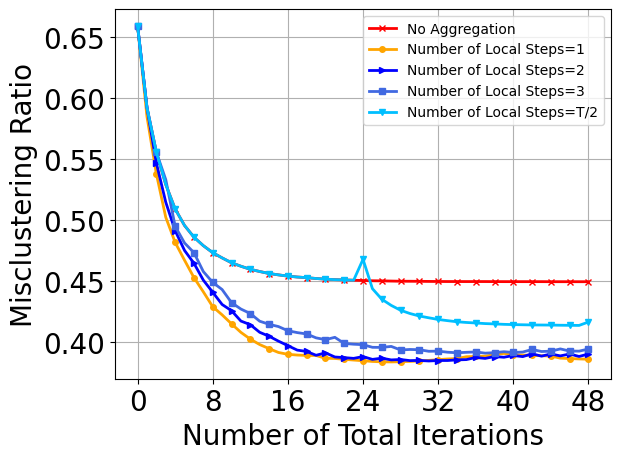}
    }
    \subfigure[CIFAR10 Embeddings.]
    {
    \includegraphics[width=0.3\textwidth]{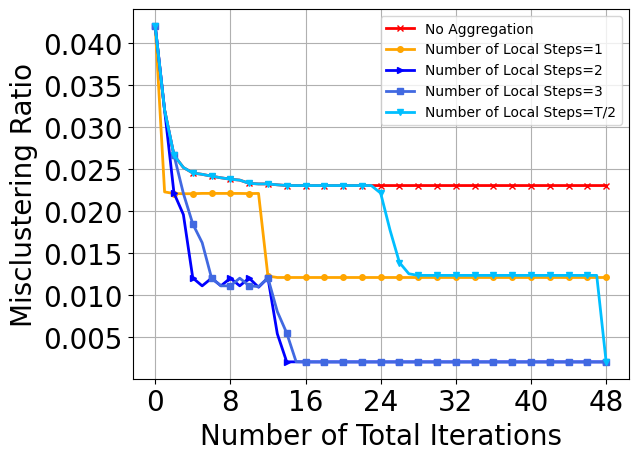}
    }
    \caption{Misclusering ratio on real datasets.}
    \label{fig:real_dataset_ratio}
    \vspace{-2mm}
\end{figure*}

\begin{figure*}[htbp]
    \centering
    \subfigure[Motion Capture Hand Postures Dataset.]
    {
    \includegraphics[width=0.3\textwidth]{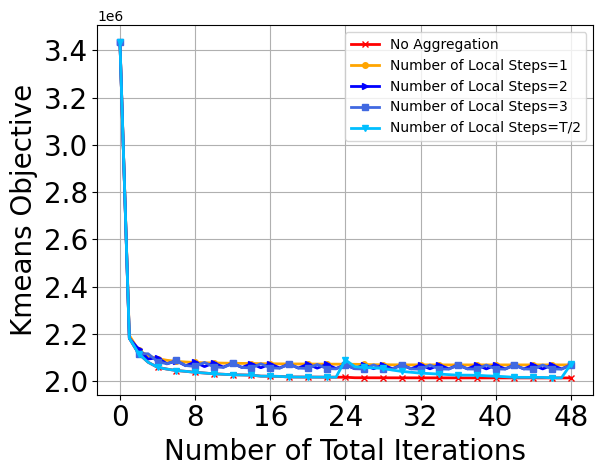}
    }
    \subfigure[MNIST Embeddings.]
    {
    \includegraphics[width=0.3\textwidth]{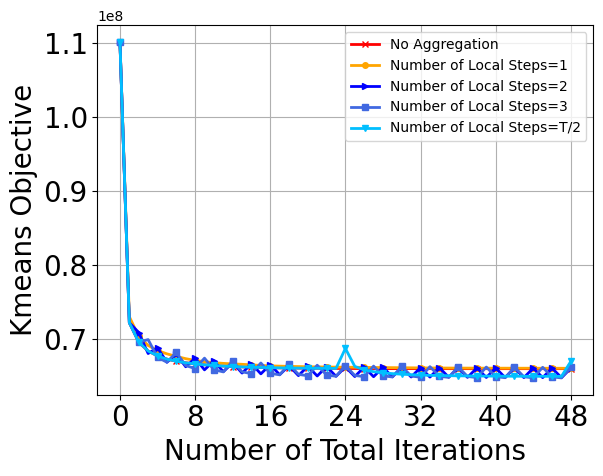}
    }
    \subfigure[CIFAR10 Embeddings.]
    {
    \includegraphics[width=0.3\textwidth]{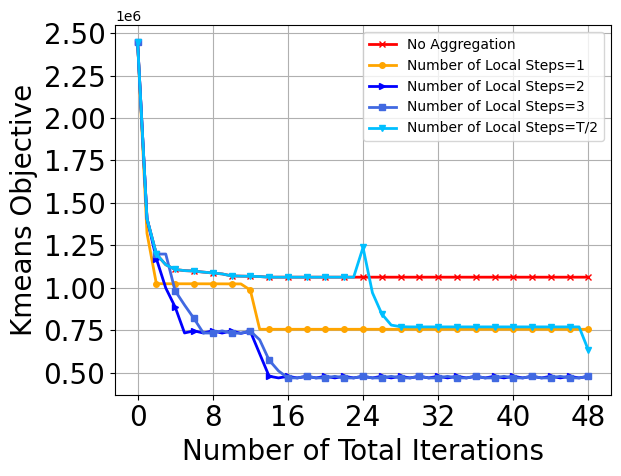}
    }
    \caption{KMeans objective on real datasets.}
    \label{fig:real_dataset_objective}
    \vspace{-2mm}
\end{figure*}

We further conducted experiments on real-world datasets. Here, we choose one clustering dataset from UCI Library \footnote{\url{https://archive.ics.uci.edu/datasets}}: Motion Capture Hand Posture (Posture) and two commonly used image classification datasets: MNIST and CIFAR10 datasets. For MNIST and CIFAR10 datasets, instead of using the raw images as the vector $\vx$, we use the embeddings of original images extracted from one intermediate layer of a trained CNN model as the datapoints $\vx$. Under this transformation, $\norm{\cdot}$ between different datapoints is more representative of their actual difference than if $\vx$ were pixel values. The detailed information about the datasets is shown in Table \ref{table-dataset}. We measure the performance of LocalKMeans with two metrics: global misclustering raio $A$, and KMeans objective. Note that real datasets are not mixture of gaussians (Eq~\eqref{def:mix_subg}), so we measure both misclustering and the KMeans objective. The experiments are performed on one GTX 1080Ti GPU with 5 minutes for each trial (clustering on embeddings of MNIST and CIFAR10 datasets).

Fig. \ref{fig:real_dataset_ratio} displays the misclustering ratio of LocalKMeans with different number of local steps on the real datasets. On all the three datasets, the misclustering ratio is higher for Lloyd's without aggregation. With aggregation the performance is much better, suggesting the benefit of collaborating with other compute nodes. For Posture dataset and MNIST embeddings, the number of local steps has minor influence, while on CIFAR10 embeddings, the performance of centralized Lloyd's ($L=1$) is worse than the performance with multiple local steps, implying there are some situations  a few local steps can benefit the distributed clustering. And the misclustering ratio is pretty low for CIFAR10 embeddings, meaning that it may be more suitable for Euclidean distance-based clustering metric. Fig. \ref{fig:real_dataset_objective} shows the KMeans objective on the three datasets. The difference between no aggregation and LocalKMeans is minor for Posture dataset and MNIST embeddings. Meanwhile, we can still observe the benefit of aggregation from CIFAR10 embeddings. And with more than one local steps, the KMeans objective is smaller on CIFAR10 embeddings. For all datasets, LocalKMeans with $L=\frac{T}{2}$ does not achieve the best possible misclustering. This shows that  a few aggregations are necessary.

\section{Conclusion}
\label{sec:conclusion}
\vspace{-1mm}
We have proposed a distributed variant of Lloyd's algorithm, which incorporates local steps for communication efficiency. We use a novel modification of the virtual iterate method to show that theoretically, it requires the same initialization as centralized Lloyd's, but, it's final error is slightly worse.

In practice, its performance is similar to centralized Lloyd's. 
Important directions for future work include improving the final error for LocalKMeans, and extending it to clients that are heterogeneous. Note that ~\cite{lu_statistical_2016} provides the tightest analysis of Lloyd's algorithm~\cite{lu_statistical_2016}, however, a weaker analysis may be more amenable to  local steps.

\bibliography{references}
\bibliographystyle{plainnat}

\newpage
\appendix
\onecolumn
\section{Additional details for LocalKMeans on $K$-clusters }
\label{sec:k_cluster}

\paragraph{Initialization requirement(Assumption~\ref{assumption:local_dev_k}}
Note that required initializaton is same as centralized Lloyd's ~\cite{lu_statistical_2016} for $mn$ datapoints. The initialization now depends on the quantity $\alpha$ which is the relative size of the smallest cluster. Therefore, if one cluster is very small, the required SNR is large. If the clusters are balanced globally, then the required SNR is small.

\textbf{Single-Step Progress (Lemma~\ref{lem:single_step_k}):} This Lemma resembles the corresponding single step progress for symmetric $2$-cluster case (Lemma~\ref{lem:single_step}), with guarantees in terms of both misclustering $G^{(t)}$ and $\Lambda^{(t)}$. Note that the connection between error in cluster centers and misclustering was exact for $2$-clusters, but here we need to bound the worst-case error over all clusters. Further, the conditions on $A_i$ being close to $A$ and $A\leq \frac{1}{4}$ for the $2$-cluster case now translate to a uniform bound on both $G^{(t)}, G_i^{(t)}$ and $\Lambda^{(t)}$ and a bound on $\Lambda_i^{(t)}$ being a constant away from $\Lambda^{(t)}$. These are consequences of a more careful analysis of the iterations when there is aggregation and when there is no aggregation. Similar to the $2$-cluster case, local steps cost an additional error term $\Psi_2$ and the single-step progress depends on $G^{(t)}$ and $G^{(t-1)}$. Additionally, we also require the clusters to be balanced in terms of their relative difference between cluster centers, i.e. $\lambda = \mathcal{O}(1)$ and in terms of their local sizes across clients, i.e., $\beta m = \mathcal{O}(1)$. The SNR $r_K$ again depends on the term $\frac{d}{n}$. Note that a condition on $\alpha$, the minimum size of each cluster globally is also required, same as centralized Lloyd's.

\textbf{Final Error (Theorem~\ref{thm:k_cluster}):} Like $2$-cluster case, the number of iterations required to achieve the final error is $2\lceil \log(mn) \rceil + 2L$. The final error has terms depending on $\frac{d}{nr_K^2}$ and $\exp(-n)$ due to local steps. The proof of this theorem uses a two stage analysis, first establishing geometric progress until $t_0 = 2\ceil{\log(mn)} + 2L$ iterations, followed by a careful analysis beyond those iterations. Note that these additional iterations, depending on $q$, can make the term dependent on $mn$ arbitrarily small.

We provide the proof for all results in this section and Section~\ref{sec:k_cluster_summarized} in Appendix~\ref{sec:k_cluster_proof}.

\subsection{Comparison to baselines}
\label{sec:baselines_k}
 We can obtain the guarantees of Centralized Lloyd's by substituting the number of datapoints as $mn$ in ~\citep[Theorem~3.2]{lu_statistical_2016}. Note that it requires the same initialization as LocalKMeans, however, LocalKMeans requires $2L$ more iterations and it's final error scales as $\frac{d}{n r_K^2}$ and $\exp(-n)$ instead of $\exp(-r_K^2)$ for centralized Lloyd's. Additionally, we need $r_K \geq \sqrt{\frac{Kd}{n}}$, and constant $\lambda$ and $\beta m$. These are not required for centralized Lloyd's.

Guarantees of Lloyd's without aggregation can be obtained by applying  ~\citep[Theorem~3.2]{lu_statistical_2016} with number of datapoints as $n$ for each machine individually.  In terms of initialization, each machine $i\in [m]$ would need to satisfy $\Lambda_i^{(t)} \leq \frac{1}{2} - \frac{4}{\sqrt{r_{K,i}}}$, where the SNR on the $i^{th}$ machine is $r_{K,i} = \frac{\Gamma}{\sigma}\sqrt{\alpha_i\left(1 + \frac{Kd}{n}\right)^{-1}}$ and $\alpha_i \equiv \min_{k\in [K]}\frac{\nu_{k,i}^\star}{n}$ is fraction of datapoints lying in the smallest cluster on the $i^{th}$ machine and $\alpha_i \geq \sqrt{\frac{K\log(n)}{n}}$. For the high probability results to hold on all machines, we need $m = \rm poly(n)$. Compared to LocalKMeans, the initialization condition from Assumption~\ref{assumption:local_dev_k} is much weaker  as  $\frac{1}{\sqrt{r_K}} \leq \frac{1}{\sqrt{\min_{i\in [m]}r_{K,i}}})$, which is obtained from if $\alpha \geq \min_{i\in [m]} \alpha_i$. Additionally, if $ m \in (\mathcal{O}(\rm poly n), \mathcal{O}(2^n)]$ or $\min_{i\in [m]} \alpha_i \in [\sqrt{\nicefrac{K\log(mn)}{mn}},\sqrt{\nicefrac{K\log(n)}{n}} )$, Lloyd's without aggregation does not converge theoretically. However, LocalKMeans can converge. Further, $\min_{i\in [m]} \alpha_i = \alpha\beta$, for which LocalKMeans only requires $\beta m = \mathcal{O}(1)$. Therefore, theoretically even in the $K$-cluster case, LocalKMeans is much better than Lloyd's without aggregation.

\section{Additional Experimental Results}
\label{sec:add_exp}
\subsection{Local KMeans++ algorithm}
\label{sec:localkmeans++}
KMeans++~\cite{arthur_k-means_2007} provides an easy method for selecting initial cluster centers from the given set of datapoints. We first provide an extension of KMeans++ to the Federated setup in Algorithm~\ref{alg:localkmeans++}. Note that here $\Theta$ is the set of initial cluster centers with size $\abs{\Theta} = K$.

\begin{algorithm}[htbp!]
    \label{alg:localkmeans++}
    \caption{LocalKMeans++ Initialization}
    \begin{algorithmic}
    \REQUIRE 
    \ENSURE
        \STATE $\Theta \gets \{\}$
        \WHILE{$\abs{\Theta} \leq K$}
        \FOR{ all machines $i \in [m]$}
        \FOR{all datapoints $j \in [n]$}
        \IF{$\abs{\Theta} > 0$}
        \STATE $E_{i,j} \gets \min_{\vx\in \Theta} d(\vy_{i,j},\vx)$
        \ELSE
        \STATE $E_{i,j} \gets 1$
        \ENDIF
        \ENDFOR
        \STATE $E_i \gets \sum_{j=1}^n E_{i,j}$
        \ENDFOR
        \STATE Select machine $i$ with probability $ \frac{E_i}{\sum_{i=1}^m E_i}$
        \STATE Inside selected machine $i$, select $j^{th}$ datapoint with probability $\frac{E_{i,j}}{E_i}$
        \STATE $\Theta \gets \Theta \cup \{\vy_{i,j}\}$
        \ENDWHILE
    \end{algorithmic}
\end{algorithm}

\begin{table}[htbp]\footnotesize
    \centering
    
    \small
    \begin{tabular}{ccccccc}
        \toprule
         \makecell{Dataset} & \makecell{\# of \\Samples} & \makecell{\# of\\ machines} & \makecell{Samples \\per machine} & Dimension & \makecell{\# of
         \\Clusters} \\
         \midrule
         Posture\\
         ~\cite{motion_capture_hand_postures_405} &  78095 & 100 & 781 & 38 & 5 \\
         \makecell{CIFAR10 Embeddings\\ ~\cite{cifar10}}  & 50000 & 100 & 500 & 512 & 10 \\
         \makecell{MNIST Embeddings\\ ~\cite{mnist}}  & 50000 & 100 & 500 & 1568 & 10 \\
        \bottomrule
    \end{tabular}

\caption{Detailed information about real datasets\label{table-dataset}}
\end{table}

\subsection{Additional Experiments on Synthetic Dataset}
Fig. \ref{fig: KCluster_SNR} displays the performance of LocalKmeans when we change various parameters, such as SNR, number of clusters $K$, number of compute nodes $m$. In these experiments, the number of local steps is fixed as 3 and we use the LocalKmeans++ initialization. From Fig. \ref{fig: KCluster_SNR}(a) we can see when the SNR is higher, the initialization obtained by LocalKmeans++ is better, and then the final clustering result is also better. When SNR is lower than a threshold, the LocalKmeans actually cannot converge and the misclustering ratio is very high. Fig. \ref{fig: KCluster_SNR}(b) shows the performance with different number of clusters $K$ and number of samples per node is fixed as 200. When there are more clusters to be classified, the number of samples in one cluster is less and the performance of LocalKmeans is worse. Fig. \ref{fig: KCluster_SNR}(c) shows the performance with different number of compute nodes $m$ and the total number of samples $mn$ is fixed as 4000. When $m$ is 10 and 20, while $n$ is 400 and 200 correspondingly, the performances of the two cases are similar and the misclustering ratio is very low finally. However, when the number of compute nodes is larger, with less samples in one node, the performance becomes poor and the misclustering ratio is still high at the final stage.

\begin{figure*}[htbp!]
    \centering
    \subfigure[Different SNR.]
    {
    \includegraphics[width=0.3\textwidth]{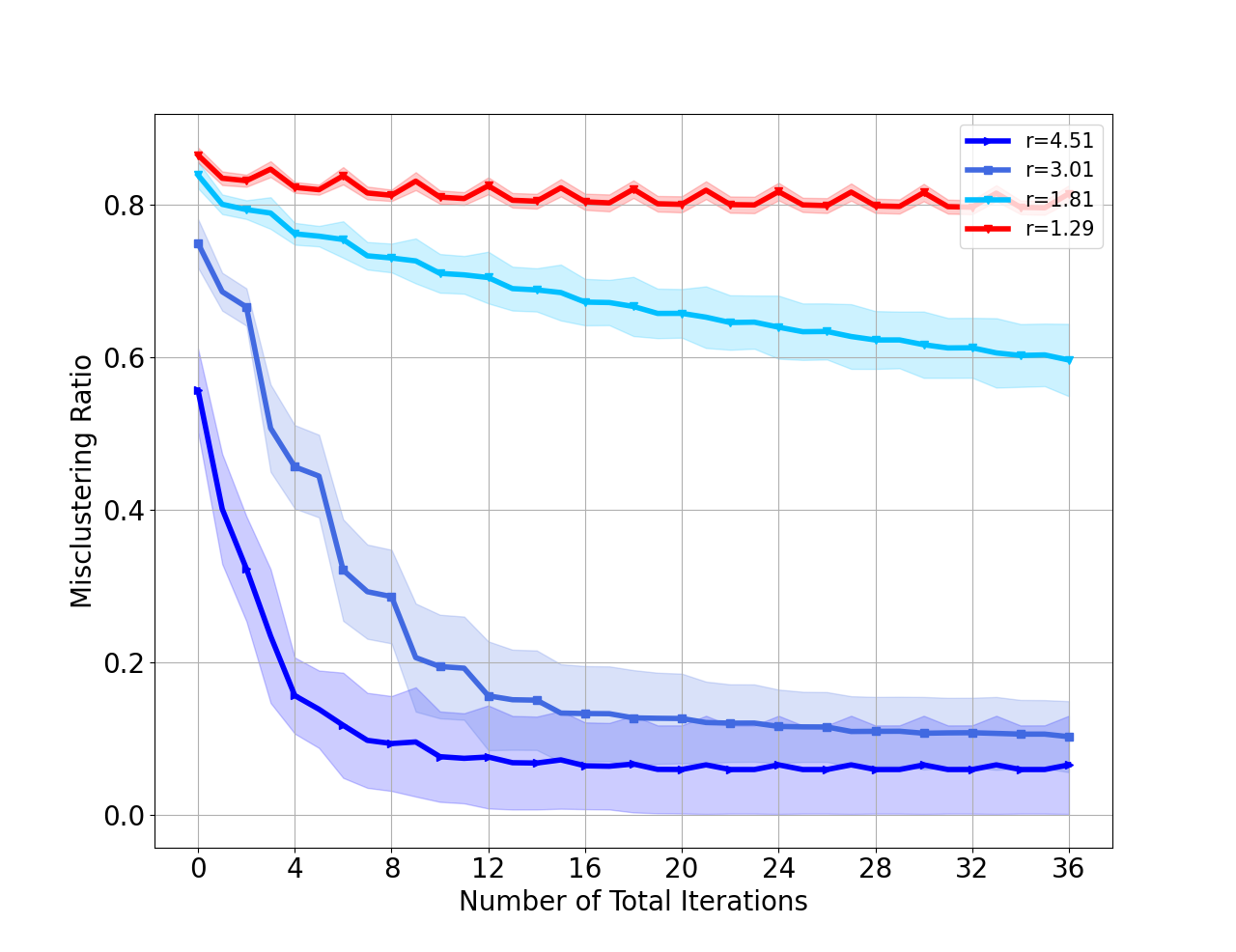}
    }
    \subfigure[Different Number of Clusters.]
    {
    \includegraphics[width=0.3\textwidth]{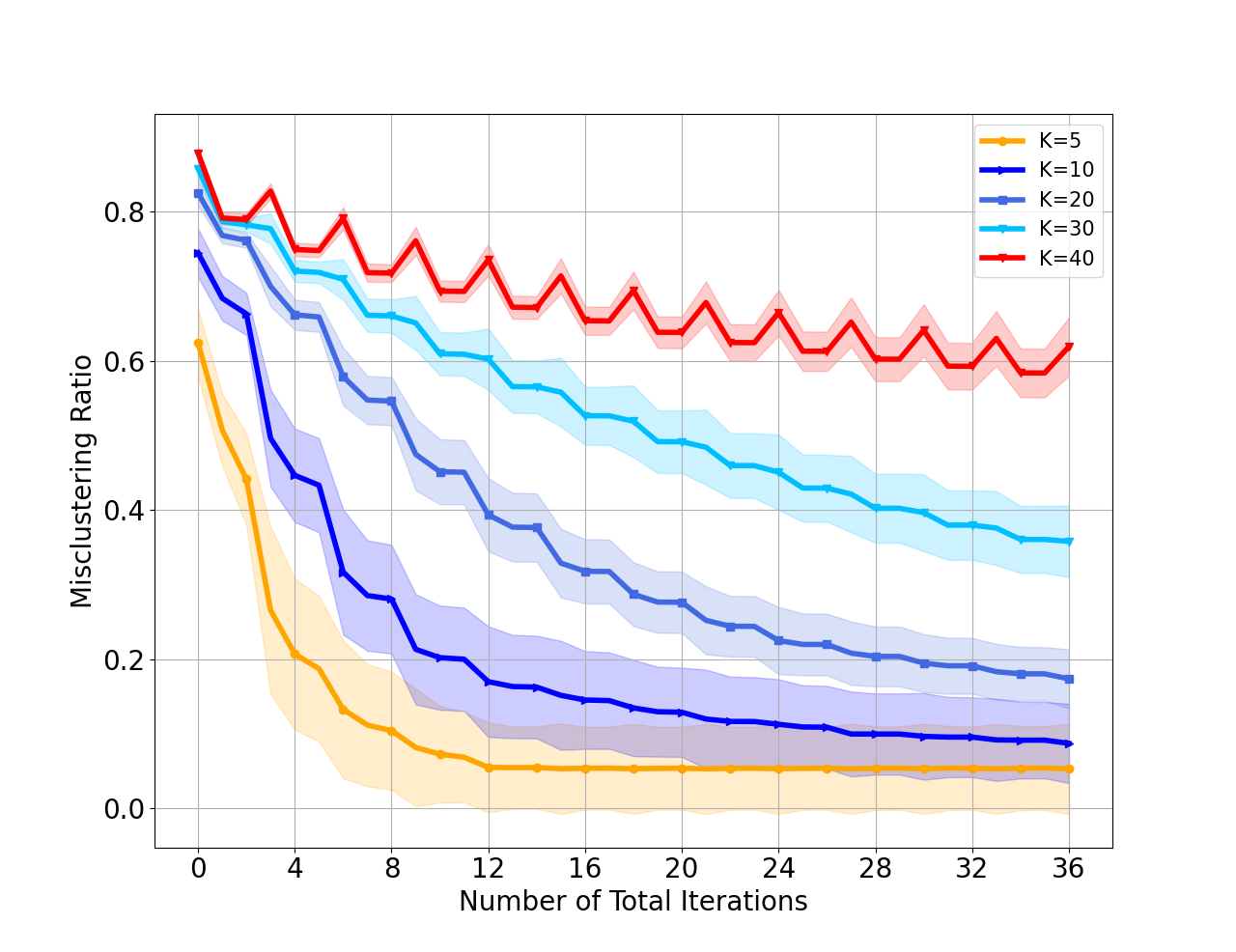}
    }
    \subfigure[Different Number of Compute Nodes.]
    {
    \includegraphics[width=0.3\textwidth]{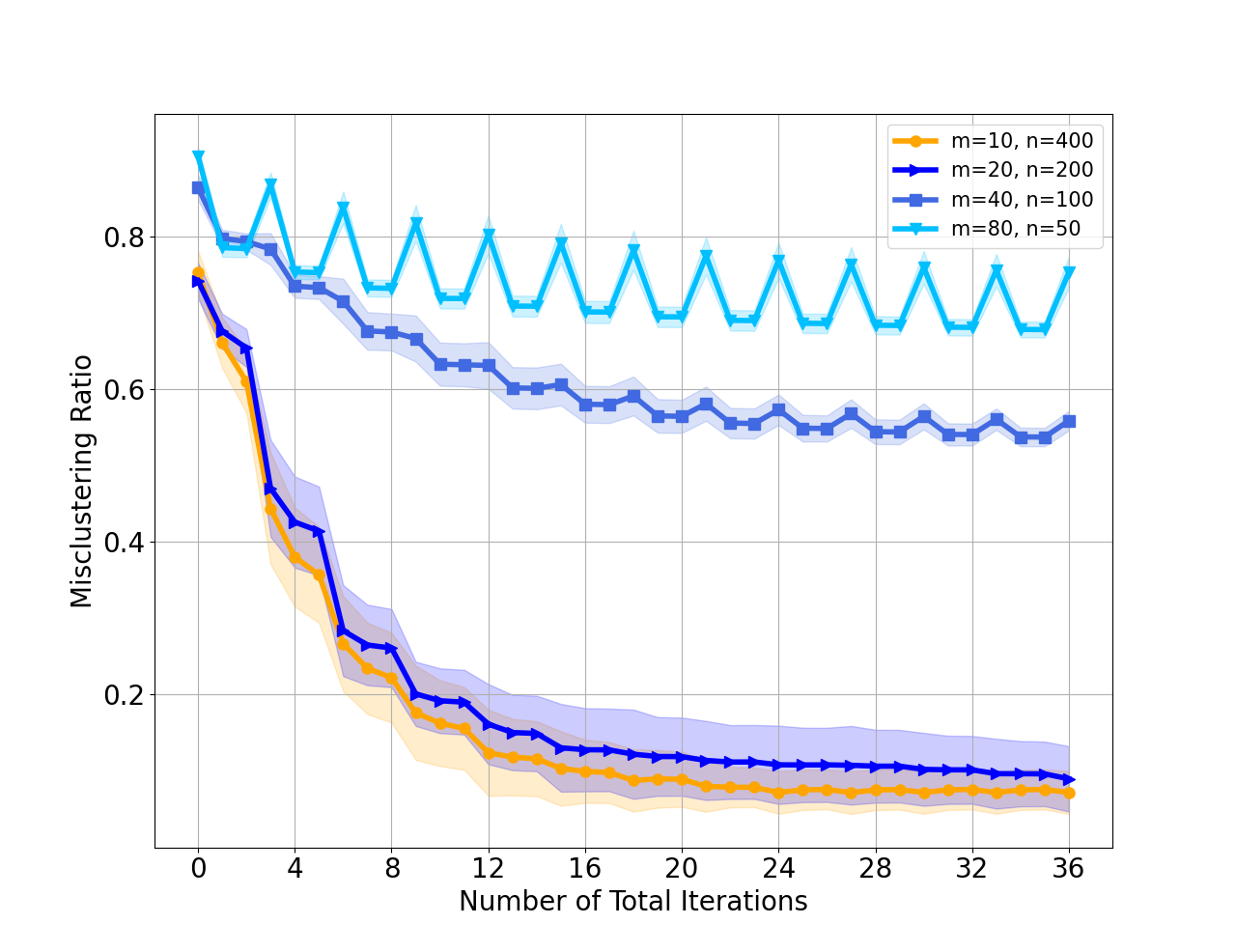}
    }
    \caption{Performance of LocalKmeans on Gaussian Data with Different Factors. Horizontal Axis: Number of Total Iterations. Vertical Axis: Misclustering Ratio.}
    \label{fig: KCluster_SNR}
    \vspace{-2mm}
\end{figure*}

\section{Proofs for Section~\ref{sec:2_cluster} }
\label{sec:2_cluster_proof}

The proof is based on virtual iterates from ~\cite{karimireddy_scaffold_2020}. We extend ~\citep[Section~7.1, 7.2]{lu_statistical_2016} to virtual iterates for this proof. For the sake of convenience, we redefine $z_{i,j}\vx_{i,j} = \theta^\star + \vw_{i,j}, \forall i\in [m], j\in [n]$. Note that this multiplies the original noise vector $\vw$ by either a $+1$ or a $-1$, which doesn't change it's distribution. We first state several technical lemmas that we use throughout our proof.

\subsection{Technical Lemmas}
    We first extend Lemmas used in ~\cite{lu_statistical_2016} for concentration of $\vw_{i,j}$, to $mn$ points in our federated settings.
    \begin{lemma}[Extension of Lemma 7.1 ~\cite{lu_statistical_2016}]\label{lem:noise_set}
        For any $S \subseteq [m]\times [n]$, define $W_S = \sum_{(i,j) \in S} \vw_{i,j}$. Then,
        \begin{align}
            \norm{W_S} \leq \sigma\sqrt{2(mn + 9d)\abs{S}}, \text{ for all } S \subseteq [m]\times [n]
        \end{align}
        wp $> 1 - \exp(-mn)$
        Further, if $\log(m) = \mathcal{O}(n)$, then taking a union bound over all machines individually, with probability $1 - \exp(-n)$, we have, for each $i\in [m]$, $S_i\subseteq[n]$, define $W_{S_i} = \sum_{j \in S_i} \vw_{i,j}$. Then,
        \begin{align}
            \norm{W_{S_i}} \leq \sigma\sqrt{2(n + 9d)\abs{S}}, \text{ for all } S_i \subseteq [n], i\in [m]
        \end{align}
    \end{lemma}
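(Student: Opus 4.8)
The plan is to establish both concentration bounds via the standard sub-Gaussian tail together with a union bound over subsets, exactly mirroring the argument of Lemma~7.1 in \cite{lu_statistical_2016} but now tracking the total number of datapoints $mn$ instead of $n$. First, I would fix a subset $S \subseteq [m]\times[n]$ of size $s = \abs{S}$. Since each $\vw_{i,j}$ is an independent zero-mean $subG(\sigma^2)$ vector in $\R^d$, the sum $W_S = \sum_{(i,j)\in S}\vw_{i,j}$ is $subG(s\sigma^2)$. A standard concentration inequality for the norm of a sub-Gaussian vector (e.g. covering $\mathbb{S}^{d-1}$ with a $\tfrac14$-net of size $\le 9^d$ and taking a sup over net points) gives $\bP\big(\norm{W_S} \ge \sigma\sqrt{2 s t}\big) \le \exp(-t + 9d)$ for any $t>0$. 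Then I would take $t = mn$ and union-bound over all $S \subseteq [m]\times[n]$: there are at most $2^{mn} = e^{(\log 2)mn}$ such subsets, so the failure probability is at most $2^{mn}\exp(-mn+9d) = \exp(-\Omega(mn))$ once $mn$ dominates $9d$ (or, more carefully, one absorbs the $9d$ into the exponent of $mn$, which is exactly why the $9d$ appears inside the square root in the statement — choosing the deviation level $\sigma\sqrt{2(mn+9d)s}$ makes the per-set failure probability $\le e^{-(mn+9d)+9d} = e^{-mn}$, so the union bound over $\le 2^{mn}$ sets still leaves probability $\ge 1 - \exp(-cmn)$). This yields the first display uniformly over all $S$.

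For the second display, the key observation is that for a \emph{fixed} machine $i$, the relevant subsets $S_i\subseteq[n]$ number only $2^n$, so the same net argument gives, for each fixed $i$, $\bP\big(\exists S_i: \norm{W_{S_i}} > \sigma\sqrt{2(n+9d)\abs{S_i}}\big) \le 2^n \exp(-(n+9d)+9d) = \exp(-cn)$. Now I take a union bound over the $m$ machines: the total failure probability is at most $m\exp(-cn) = \exp(\log m - cn)$, which is $\exp(-\Omega(n))$ precisely under the stated hypothesis $\log m = \mathcal{O}(n)$ (with a sufficiently small implied constant, or after adjusting $c$). This is where the extra assumption $n > c'\log m$ from Lemma~\ref{lem:single_step} enters: it is exactly what is needed to push the per-machine bounds through a union over machines while keeping the probability $1-\exp(-n)$.

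The main technical point — and the only place one must be slightly careful rather than purely mechanical — is bookkeeping the interaction between the $9d$ term from the $\tfrac14$-net of the sphere and the $2^{\abs{\text{index set}}}$ term from the union bound over subsets: one must choose the deviation threshold (hence the $+9d$ inside the radical) so that both sources of union-bound slack are simultaneously absorbed, leaving a clean $\exp(-mn)$ (resp.\ $\exp(-n)$) residual. Everything else is a direct transcription of \cite[Lemma~7.1]{lu_statistical_2016} with $n\mapsto mn$ in the first part and an added outer union bound over $i\in[m]$ in the second part. I do not anticipate any genuine obstacle here; the lemma is a routine extension, included only to have the concentration statements available in the distributed notation for use in the proofs of Lemma~\ref{lem:single_step} and Theorem~\ref{thm:2_cluster}.
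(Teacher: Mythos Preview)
Your proposal is correct and matches the intended argument. The paper does not actually give a proof of this lemma: it is stated as a direct extension of \cite[Lemma~7.1]{lu_statistical_2016} with $n$ replaced by $mn$, plus an additional union bound over machines, and no further details are provided. Your sketch fills in precisely those details (sub-Gaussian tail via an $\epsilon$-net on $\mathbb{S}^{d-1}$, then union bound over the $2^{mn}$ subsets; for the second part, the same per-machine bound followed by a union over $i\in[m]$ absorbed via $\log m = \mathcal{O}(n)$), which is the standard route and exactly what the citation points to.
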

    This Lemma bounds the sum of subgaussian noise terms in a set of datapoints $S$ in terms of the size of $S$. We provide an alternative for a
    \begin{lemma}[Extension of Lemma 7.2 ~\cite{lu_statistical_2016}]\label{lem:avg_noise}
        Let $\bar{\vw} = \frac{1}{mn}\sum_{i\in [m]} \sum_{j\in [n]} \vw_{i,j}$. Then, we have $\ip{\bar{\vw}}{\theta^\star} \geq -\frac{\norm{\theta^\star}^2}{\sqrt{mn}}$
        and $\norm{\bar{\vw}}^2 \leq \frac{3d\sigma^2}{mn} + \frac{\norm{\theta^\star}}{mn}$ wp $> 1 - 2\exp\bigg(-\frac{\norm{\theta^\star}^2}{3\sigma^2}\bigg)$
    \end{lemma}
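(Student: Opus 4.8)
The plan is to exploit the fact that in this appendix (for the symmetric $2$-cluster case) the noise is Gaussian, so that $\bar{\vw}$ has an exact distribution rather than merely a sub-Gaussian tail. First I would record that $\vw_{i,j}\sim\cN(0,\sigma^2\I_d)$ i.i.d. (the sign relabelling $z_{i,j}\vx_{i,j}=\theta^\star+\vw_{i,j}$ used in this section leaves each $\vw_{i,j}$ marginally $\cN(0,\sigma^2\I_d)$ and keeps them independent), whence $\bar{\vw}=\frac{1}{mn}\sum_{i\in[m]}\sum_{j\in[n]}\vw_{i,j}\sim\cN\!\left(0,\tfrac{\sigma^2}{mn}\I_d\right)$ exactly. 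The first inequality is then immediate from a one-dimensional Gaussian tail bound applied to the scalar $\ip{\bar{\vw}}{\theta^\star}\sim\cN\!\left(0,\tfrac{\sigma^2\norm{\theta^\star}^2}{mn}\right)$: taking the threshold $\norm{\theta^\star}^2/\sqrt{mn}$,
\begin{align*}
\bP\!\left(\ip{\bar{\vw}}{\theta^\star}\le -\frac{\norm{\theta^\star}^2}{\sqrt{mn}}\right)\ \le\ \frac12\exp\!\left(-\frac{\norm{\theta^\star}^2}{2\sigma^2}\right)\ \le\ \exp\!\left(-\frac{\norm{\theta^\star}^2}{3\sigma^2}\right).
\end{align*}

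For the second inequality I would use that $\norm{\bar{\vw}}^2$ is distributed as $\tfrac{\sigma^2}{mn}$ times a $\chi^2_d$ random variable, and invoke the Laurent--Massart bound $\bP\!\left(\chi^2_d-d\ge 2\sqrt{dx}+2x\right)\le e^{-x}$ with the calibrated choice $x=\norm{\theta^\star}^2/(3\sigma^2)$. By AM--GM, $2\sqrt{dx}\le d+x$, so $d+2\sqrt{dx}+2x\le 2d+3x=2d+\norm{\theta^\star}^2/\sigma^2\le 3d+\norm{\theta^\star}^2/\sigma^2$; rescaling by $\sigma^2/(mn)$ gives
\begin{align*}
\bP\!\left(\norm{\bar{\vw}}^2\ \ge\ \frac{3d\sigma^2}{mn}+\frac{\norm{\theta^\star}^2}{mn}\right)\ \le\ \bP\!\left(\chi^2_d\ge d+2\sqrt{dx}+2x\right)\ \le\ \exp\!\left(-\frac{\norm{\theta^\star}^2}{3\sigma^2}\right).
\end{align*}
A union bound over these two events then yields both displayed inequalities simultaneously with probability at least $1-2\exp\!\left(-\norm{\theta^\star}^2/(3\sigma^2)\right)$, which is the claim.

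I do not expect a genuine obstacle here: the argument is a direct application of standard Gaussian concentration, and the only care required is bookkeeping the constants so that the stated tolerance $\tfrac{3d\sigma^2}{mn}+\tfrac{\norm{\theta^\star}^2}{mn}$ and the stated failure probability $\exp(-\norm{\theta^\star}^2/(3\sigma^2))$ are met by a single choice of deviation level — which is exactly why picking $x=\norm{\theta^\star}^2/(3\sigma^2)$ and spending the slack $2d\le 3d$ works. The one place where the Gaussian assumption of this section is used essentially is the chi-squared step: for the analogous statement under genuinely sub-Gaussian noise (as needed in the $K$-cluster analysis), $\bar{\vw}$ is no longer exactly Gaussian, and one would instead control $\norm{\bar{\vw}}=\sup_{\norm{u}=1}\ip{u}{\bar{\vw}}$ by an $\varepsilon$-net/covering argument over the unit sphere, at the price of a larger absolute constant multiplying $d\sigma^2/(mn)$.
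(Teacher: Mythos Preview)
Your proof is correct and is exactly the standard Gaussian-tail plus Laurent--Massart argument that underlies the cited Lemma~7.2 of \cite{lu_statistical_2016}; the paper itself does not reprove this lemma but simply states it as the direct extension with $n$ replaced by $mn$, so there is nothing further to compare.
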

    This Lemma bounds the norm and the inner product with optimal model $\theta^\star$,  of the average noise over all datapoints.
    
    \begin{lemma}[Extension of Lemma 7.3 from ~\cite{lu_statistical_2016}]\label{lem:noise_eigen}
        $\lambda_{\max}\bigg(\sum_{i\in [m]}\sum_{j\in [n]} \vw_{i,j}\vw_{i,j}^\intercal\bigg) \leq 1.62(mn + 4d)\sigma^2$
        wp $ > 1 - \exp(-mn)$.
    \end{lemma}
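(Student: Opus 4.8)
The plan is to observe that the statement is a direct extension of Lemma 7.3 of \cite{lu_statistical_2016}: the $mn$ noise terms $\{\vw_{i,j}\}_{i\in[m],\,j\in[n]}$ are i.i.d.\ zero-mean $subG(\sigma^2)$ vectors in $\R^d$, exactly like the single-machine sample in \cite{lu_statistical_2016} with the sample size $n$ replaced by $mn$. So the cleanest route is simply to invoke that lemma with $n \rightsquigarrow mn$. For completeness I would reproduce the standard argument underlying it, which I sketch next.

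First I would write $M = \sum_{i\in[m]}\sum_{j\in[n]}\vw_{i,j}\vw_{i,j}^\intercal$, note that $M$ is symmetric PSD, and reduce to a supremum over the sphere: $\lambda_{\max}(M) = \sup_{u:\,\norm{u}=1}\sum_{i,j}\ip{u}{\vw_{i,j}}^2$. Next I would fix a net $\mathcal N$ of the unit sphere of $\R^d$ of a suitable fineness, with $\abs{\mathcal N}\le e^{c_0 d}$ for an absolute constant $c_0$, and use the standard covering bound that controls $\lambda_{\max}(M)$ by a constant multiple of $\max_{u\in\mathcal N} u^\intercal M u$. Then I would fix $u\in\mathcal N$ and control $\sum_{i,j}\ip{u}{\vw_{i,j}}^2$: each $\ip{u}{\vw_{i,j}}$ is a scalar $subG(\sigma^2)$ variable, so $\ip{u}{\vw_{i,j}}^2$ is sub-exponential with parameter $O(\sigma^2)$ and mean at most $\sigma^2$, and Bernstein's inequality for independent sub-exponential summands gives, for an absolute constant $c>0$,
\[
\bP\!\Big[\textstyle\sum_{i,j}\ip{u}{\vw_{i,j}}^2 \ge mn\sigma^2 + t\Big]\le \exp\!\Big(-c\min\big\{\tfrac{t^2}{mn\sigma^4},\,\tfrac{t}{\sigma^2}\big\}\Big).
\]

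Finally I would choose $t$ to be an appropriate constant multiple of $(mn+d)\sigma^2$, and the net fineness, so that after the covering reduction the resulting bound is at most $1.62(mn+4d)\sigma^2$ while the Bernstein exponent is at least $c'(mn+d)$ for an absolute constant $c'$; a union bound over $\mathcal N$ then gives failure probability at most $e^{c_0 d}\exp(-c'(mn+d))\le \exp(-mn)$, the $d$-part of the exponent absorbing $\log\abs{\mathcal N}$. The only genuinely delicate point is this last balancing — the slack between $mn\sigma^2$ and $1.62(mn+4d)\sigma^2$ must simultaneously accommodate the factor lost in the covering step and the net entropy — and this is precisely the bookkeeping already performed in \cite{lu_statistical_2016} for $n$ points, so I would cite that computation verbatim (with $n$ replaced by $mn$) rather than re-deriving the constants. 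Everything else is routine.
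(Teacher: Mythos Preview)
Your proposal is correct and matches the paper's approach: the paper does not give a separate proof of this lemma but simply states it as the direct extension of \cite[Lemma~7.3]{lu_statistical_2016} to $mn$ i.i.d.\ noise vectors, which is exactly what you do (and your sketch of the underlying net-plus-Bernstein argument is the standard one behind that lemma). One minor remark: in the symmetric $2$-cluster section the noise is actually assumed Gaussian rather than merely sub-Gaussian, which is what produces the specific constant $1.62$ (the sub-Gaussian analogue in Lemma~\ref{lem:noise_eigen_sub} carries the looser constant $6$), but this only strengthens your argument.
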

    This Lemma bounds the max eigenvalue of the empirical noise covariance. 
    
    \begin{lemma}[Extension of Lemma 7.4~\cite{lu_statistical_2016}]\label{lem:noise_subset_subexp}
        For any fixed $(i,j) \in [m]\times [n]$, $S \subseteq [m]\times [n]$, $t > 0$ and $\delta > 0$, we have
        \begin{align}
            \bP\bigg\{ \ip{\vw_{i,j}}{\frac{1}{\abs{S}}\sum_{(i',j') \in S} \vw_{i',j'}} \geq \frac{3\sigma^2(t\sqrt{\abs{S}} + d + \log(1/\delta))}{\abs{S}}\bigg\}\leq \exp\left(-\min \bigg\{\frac{t^2}{4d}, \frac{t}{4}\bigg\}\right) + \delta
        \end{align}
    \end{lemma}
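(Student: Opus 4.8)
The plan is to reduce this to the non-distributed statement \cite[Lemma~7.4]{lu_statistical_2016}. Since $\{\vw_{i,j}\}_{(i,j)\in[m]\times[n]}$ are iid $subG(\sigma^2)$ (recall $\vw_{i,j}$ here is the re-signed noise $z_{i,j}\vw_{i,j}$, which has the same distribution), relabelling the product index set $[m]\times[n]$ as a single set of size $mn$ makes the claim syntactically identical to \cite[Lemma~7.4]{lu_statistical_2016}, and the argument there applies verbatim. For completeness I would carry it out as the following decoupling-plus-concentration argument.

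\textbf{Decoupling and conditioning.} Write $W_S=\sum_{(i',j')\in S}\vw_{i',j'}$. If $(i,j)\notin S$, then $\vw_{i,j}$ is independent of $W_S$. If $(i,j)\in S$, I would first split $\ip{\vw_{i,j}}{W_S}=\norm{\vw_{i,j}}^2+\ip{\vw_{i,j}}{W_{S\setminus\{(i,j)\}}}$ and use the sub-Gaussian norm bound $\norm{\vw_{i,j}}^2\le C\sigma^2(d+\log(1/\delta))$ (probability at least $1-\delta/2$), which reduces this case to the independent case with $S\setminus\{(i,j)\}$ and a halved $\delta$, the extra additive $C\sigma^2(d+\log(1/\delta))$ being absorbed into the constant $3$ in the numerator. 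In the independent case, conditionally on $W_S$ the scalar $\ip{\vw_{i,j}}{W_S}$ is $subG(\sigma^2\norm{W_S}^2)$, so for any $u>0$,
\[
\bP\big(\ip{\vw_{i,j}}{W_S}\ge u \,\big|\, W_S\big)\le \exp\!\Big(-\tfrac{u^2}{2\sigma^2\norm{W_S}^2}\Big).
\]

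\textbf{Controlling $\norm{W_S}^2$ and combining.} Since $W_S\sim subG(\abs{S}\sigma^2)$ in $\R^d$, $\norm{W_S}^2$ is sub-exponential with mean of order $\abs{S}\sigma^2 d$, and a Bernstein-type tail gives both $\bP(\norm{W_S}^2\ge C\abs{S}\sigma^2(d+\log(1/\delta)))\le \delta/2$ and, at the finer scale set by $t\sqrt{\abs{S}}$, a tail of the form $\exp(-c\min\{t^2/d,\,t\})$; this sub-exponential crossover is precisely what produces the $\min\{t^2/(4d),t/4\}$ in the conclusion. On the good event I would plug $\norm{W_S}^2\le C\abs{S}\sigma^2(d+\log(1/\delta))$ and $u=3\sigma^2(t\sqrt{\abs{S}}+d+\log(1/\delta))$ into the conditional bound above, split into the small-$t$ regime (exponent $\gtrsim t^2/d$) and the large-$t$ regime (exponent $\gtrsim t$), and add back the $\delta/2+\delta/2$ budget from the two norm estimates to arrive at $\exp(-\min\{t^2/(4d),t/4\})+\delta$.

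\textbf{Main obstacle.} The only non-routine part is the last step: obtaining the sub-exponential concentration of $\norm{W_S}^2$ for a \emph{merely} sub-Gaussian (not Gaussian) $W_S$ and then matching the two regimes of the resulting Bernstein bound to the exact constants in the statement ($3\sigma^2$ in the numerator, $1/(4d)$ and $1/4$ in the exponent). The distributed/product structure of the index set introduces nothing beyond bookkeeping, since every estimate depends only on $\abs{S}$ and on the iid-ness of the $\vw$'s over $[m]\times[n]$.
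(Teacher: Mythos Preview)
Your reduction is exactly what the paper does: it states the lemma as a direct extension of \cite[Lemma~7.4]{lu_statistical_2016} without proof, the only change being that the index set is $[m]\times[n]$ instead of $[n]$, which (as you note) is pure bookkeeping since the bound depends only on $\abs{S}$ and the iid structure. Your additional decoupling-plus-concentration sketch goes beyond what the paper provides, but is consistent with the cited source.
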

    This Lemma establishes subgaussian concentration for the inner product of average noise of a set $S$ and any datapoint's noise. 

    \begin{lemma}[Linear combination of $\chi^2$ random variables~\cite{laurent_massart}]\label{lem:chi_square_conc}
If $q_1,q_2,\ldots, q_n$ are independent $\chi^2_d$ random variables and $a = [a_1,a_2,\ldots, a_n]^\intercal \in \R^n_{+}$ where each $a_i$ is non-negative, then for any $t>0$, we have  
    \begin{equation}
    \bP\bigg[\sum_{i=1}^n a_i q_i \geq \norm{a}_1 d + 2\sqrt{\norm{a}_2^2 d t} + 2\norm{a}_\infty t\bigg]\leq \exp(-t)
    \end{equation}
    where $\norm{a}_1 = \sum_{i=1}^n \abs{a_i}, \norm{a}_2 = \sqrt{\sum_{i=1}^n a_i^2}$ and $\norm{a}_{\infty} = \max_{i \in [n]}\abs{a_i}$
\end{lemma}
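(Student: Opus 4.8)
The plan is to deduce this from the classical one-degree-of-freedom Laurent--Massart bound~\cite{laurent_massart} by expanding each chi-square variable into independent squared Gaussians. Write $q_i = \sum_{k=1}^d g_{i,k}^2$ with $\{g_{i,k}\}_{i\in[n],k\in[d]}$ i.i.d.\ $\cN(0,1)$; then
\[
\sum_{i=1}^n a_i q_i - \norm{a}_1 d \;=\; \sum_{i=1}^n \sum_{k=1}^d a_i\big(g_{i,k}^2 - 1\big),
\]
a centered weighted sum of $nd$ i.i.d.\ $\chi^2_1-1$ variables whose coefficient vector $\tilde a \in \R^{nd}_{+}$ is obtained by repeating each $a_i$ exactly $d$ times. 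This vector satisfies $\norm{\tilde a}_2^2 = d\,\norm{a}_2^2$, $\norm{\tilde a}_\infty = \norm{a}_\infty$, while $\norm{\tilde a}_1 = d\,\norm{a}_1$ is exactly the centering constant already subtracted.

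Next I would invoke the basic Laurent--Massart inequality: for i.i.d.\ standard normals $U_1,\dots,U_N$ and nonnegative $b_1,\dots,b_N$, $\bP\big[\sum_j b_j(U_j^2 - 1) \geq 2\norm{b}_2\sqrt{t} + 2\norm{b}_\infty t\big] \leq \exp(-t)$ for every $t>0$. Applying this with $N=nd$ and $b = \tilde a$ and substituting the identities above gives exactly
\[
\bP\Big[\,\sum_{i=1}^n a_i q_i \;\geq\; \norm{a}_1 d + 2\sqrt{\norm{a}_2^2\, d\, t} + 2\norm{a}_\infty t\,\Big] \;\leq\; \exp(-t),
\]
which is the claim.

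If a self-contained derivation of the basic inequality is desired, it follows from a Chernoff bound: for $\lambda \in \big(0, (2\norm{b}_\infty)^{-1}\big)$ one has $\E\exp\!\big(\lambda\sum_j b_j(U_j^2-1)\big) = \prod_j (1-2\lambda b_j)^{-1/2} e^{-\lambda b_j}$, and the elementary estimate $-\tfrac12\log(1-u) - \tfrac{u}{2} \leq \tfrac{u^2}{4(1-u)}$ for $u\in[0,1)$ yields $\log \E\exp\!\big(\lambda\sum_j b_j(U_j^2-1)\big) \leq \lambda^2 \norm{b}_2^2 / (1 - 2\lambda\norm{b}_\infty)$; combining with Markov's inequality and optimizing over $\lambda$ at the point $x = 2\norm{b}_2\sqrt t + 2\norm{b}_\infty t$ gives the $e^{-t}$ tail. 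There is no genuine obstacle here — the statement is essentially a repackaging of a standard concentration result — and the only point requiring a little care is tracking the constants through the $\lambda$-optimization so that the deviation comes out in the clean additive form $\norm{a}_1 d + 2\sqrt{\norm{a}_2^2\, d\, t} + 2\norm{a}_\infty t$ rather than with a looser constant.
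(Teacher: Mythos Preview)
Your proposal is correct. The paper does not actually prove this lemma; it is stated with a citation to Laurent--Massart and accompanied only by the one-line remark ``This lemma on concentration of $\chi^2$ random variables as this appears in our proof.'' Your reduction---writing each $\chi^2_d$ variable as a sum of $d$ independent squared standard Gaussians and then applying the one-degree-of-freedom Laurent--Massart bound to the resulting $nd$-term weighted sum---is exactly the standard derivation, and the norm identities $\norm{\tilde a}_2^2 = d\norm{a}_2^2$, $\norm{\tilde a}_\infty = \norm{a}_\infty$, $\norm{\tilde a}_1 = d\norm{a}_1$ are what make the constants come out in the stated form. So you have supplied a complete proof where the paper defers to the literature.
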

This lemma on concentration of $\chi^2$ random variables as this appears in our proof.
\begin{lemma}[Average Noise on each machine]\label{lem:avg_noise_machine}
With probability $1 - \exp(-r^2)$, we have,
\begin{align*}
    \frac{1}{m}\sum_{i=1}^m \norm{\bar{\vw}_i}^2 \leq \norm{\theta^\star}^2 \left(\frac{3d}{nr^2} + \frac{1}{mn}\right)
\end{align*}
where $\bar{\vw}_i  = \frac{1}{n}\sum_{j\in [n]}\vw_{i,j}$.
\end{lemma}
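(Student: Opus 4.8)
The plan is to reduce the whole quantity to a single chi-squared concentration inequality via the Gaussianity of the noise. Since the $\vw_{i,j}$ are i.i.d.\ $\cN(0,\sigma^2\I_d)$, the per-machine averages $\bar{\vw}_i=\frac1n\sum_{j\in[n]}\vw_{i,j}$ are independent across $i\in[m]$ with $\bar{\vw}_i\sim\cN(0,\tfrac{\sigma^2}{n}\I_d)$. Hence $Q_i:=\tfrac{n}{\sigma^2}\norm{\bar{\vw}_i}^2$ are i.i.d.\ $\chi^2_d$ random variables, and
\[
\frac1m\sum_{i=1}^m\norm{\bar{\vw}_i}^2=\frac{\sigma^2}{mn}\sum_{i=1}^m Q_i .
\]

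Next I would apply Lemma~\ref{lem:chi_square_conc} to $\sum_{i=1}^m Q_i$ with the all-ones weight vector $a=(1,\dots,1)\in\R^m_{+}$, so that $\norm{a}_1=m$, $\norm{a}_2^2=m$, $\norm{a}_\infty=1$, and with deviation parameter $t=r^2/3$. This gives, with probability at least $1-e^{-r^2/3}=1-\exp(-r^2)$ in the paper's notation,
\[
\sum_{i=1}^m Q_i \;\le\; md + 2\sqrt{md\cdot\tfrac{r^2}{3}} + \tfrac{2r^2}{3}
\;\le\; 2md + r^2 ,
\]
where the second inequality uses the AM--GM bound $2\sqrt{md\cdot r^2/3}\le md+r^2/3$ on the cross term. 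Therefore $\frac1m\sum_{i=1}^m\norm{\bar{\vw}_i}^2 \le \frac{2\sigma^2 d}{n}+\frac{\sigma^2 r^2}{mn}$ on this event. Finally I would convert the $\sigma^2$ factors into $\norm{\theta^\star}^2$ using the definition $r = \norm{\theta^\star}/(\sigma\sqrt{1+9d/mn})$, which yields both $\sigma^2\le\norm{\theta^\star}^2/r^2$ and $\sigma^2 r^2\le\norm{\theta^\star}^2$; substituting gives $\frac{2\sigma^2 d}{n}\le\frac{2\norm{\theta^\star}^2 d}{nr^2}\le\frac{3\norm{\theta^\star}^2 d}{nr^2}$ and $\frac{\sigma^2 r^2}{mn}\le\frac{\norm{\theta^\star}^2}{mn}$, which is exactly the claimed bound.

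There is no genuine obstacle here; the only point that needs a little care is the choice of the deviation parameter in Lemma~\ref{lem:chi_square_conc}. Taking $t=\Theta(r^2)$ is forced by the required failure probability $\exp(-r^2)$, but the constant must be small enough (here $1/3$) so that after splitting the Laurent--Massart cross term by AM--GM, the coefficient of the $\sigma^2 r^2/(mn)$ term — which becomes the $\norm{\theta^\star}^2/(mn)$ term — stays at $1$; the remaining slack is absorbed into the $d/(nr^2)$ coefficient (the argument produces $2$, and the statement allows $3$). An alternative would be to invoke Lemma~\ref{lem:avg_noise} on each machine individually (with $mn$ replaced by $n$) and average the resulting bounds, but that requires a union bound over the $m$ machines and yields a messier failure probability, so I would prefer the single $\chi^2_{md}$ argument above. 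Note also that the conclusion is only needed in regimes where $r$ is bounded below (as in Lemma~\ref{lem:single_step}), so no condition beyond the Gaussian noise model is actually used in the argument.
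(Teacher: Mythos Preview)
Your proposal is correct and follows essentially the same approach as the paper: both recognize that $\sum_{i}\norm{\bar{\vw}_i}^2$ is (a scaled) $\chi^2_{md}$ random variable and apply a chi-squared tail bound, then convert $\sigma^2$ to $\norm{\theta^\star}^2$ via the definition of $r$. The only cosmetic difference is that you invoke the Laurent--Massart inequality (Lemma~\ref{lem:chi_square_conc}, already stated in the paper) with $t=r^2/3$, whereas the paper carries out the Chernoff/MGF computation directly with a fixed $\lambda=0.4n/\sigma^2$; the two routes yield the same bound up to constants.
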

\begin{proof}
    Each $\bar{\vw}_i \sim \cN(0,\frac{\sigma^2}{n}I_d)$. Using MGF of $\chi^2$ variables, we have,
    \begin{align*}
        \Pr[\frac{1}{m}\sum_{i\in [m]}\norm{\bar{\vw}_i}^2 \geq t]\leq& \exp( - \lambda m t  - 0.5 md\log(1 - 2 \lambda \sigma^2/n),\quad \forall \lambda \in (0, 0.5n/\sigma^2)\\
        \leq & \exp( - 0.4mn t/\sigma^2  + 0.34 md)\\
        \Pr[\frac{1}{m}\sum_{i\in [m]}\norm{\bar{\vw}_i}^2 \geq \frac{1}{n}(3d\sigma^2 + \norm{\theta^\star}^2] \leq & \exp(-0.4\norm{\theta^\star}^2/\sigma^2)\leq \exp(-r^2)
    \end{align*}
    We set $\lambda = 0.4 n/\sigma^2$ and $t = \frac{1}{n}(3d\sigma^2 + \frac{\norm{\theta^\star}^2}{m})$. We finally use $\frac{\sigma^2}{\norm{\theta^\star}^2} \leq \frac{1}{r^2}$
\end{proof}

We assume that $\cE$ is the intersection of high probability events defined in Lemmas~\ref{lem:avg_noise},\ref{lem:noise_eigen},\ref{lem:noise_set} and \ref{lem:avg_noise_machine}.
Therefore, $\bP[\cE] > 1-\zeta_1$ where $\zeta_1 \triangleq \exp(-mn)+ \exp(-r^2) + \exp(-n)$.

    \subsection{Error of Estimates $\hat{\theta}^{(t)}$ and $\hat{\theta}_i^{(t)}$}
    The error in estimating $\theta^\star$ at any step is defined as sum of error of estimating cluster identities and the statistical error of $\vw$.
    \begin{align*}
        \hat{\theta}^{(t)} - \theta^\star = \frac{1}{mn}\sum_{i\in [m]}\sum_{j\in [n]} ( (\hat{z}_{i,j}^{(t)} - z_{i,j})\vy_{i,j} + (z_{i,j}\vy_{i,j} - \theta^\star))
    \end{align*}
    Now, using $\hat{z}_{i,j}^{(t)} - z_{i,j} = -2\I\{\hat{z}_{i,j}^{(t)} \neq z_{i,j}\}z_{i,j}$, we obtain

    \begin{align}\label{eq:error_cent_2_cluster}
        \hat{\theta}^{(t)} - \theta^\star & = \frac{1}{mn}\sum_{i=1}^m\sum_{j=1}^n \bigg(-2 \I\{\hat{z}_{i,j}^{(t)} \neq z_{i,j}\}(\theta^\star + \vw_{i,j}) + \vw_{i,j} \bigg) = - 2 A^{(t)} \theta^\star   - 2 R^{(t)} + \bar{\vw}
    \end{align}
    where we define $c_{i,j}^{(t)} = \I\{\hat{z}_{i,j}^{(t)} \neq z_{i,j}\}$ as the indicator of misclustered datapoints at time $t$ and $R^{(t)} = \frac{1}{mn}\sum_{i\in [m]} \sum_{j\in [n]} c_{i,j}\vw_{i,j}$ as the sum of noise of misclustered datapoints at time $t$.
Similarly, we can define the error in estimating the local cluster centers $\hat{\theta}_i^{(t)}$, 
\begin{align}\label{eq:error_cent_local_2_cluster}
    \hat{\theta}_i^{(t)}  - \theta^\star = \frac{1}{n}\sum_{j=1}^n \left( - 2 \I\{\hat{z}_{i,j}^{(t)}\neq z_{i,j} \}(\theta^\star + \vw_{i,j}) + \vw_{i,j}\right) = -2A_i^{(t)}\theta^\star - 2R_i^{(t)} + \bar{\vw}_i
\end{align}
where $A_i^{(t)} = \frac{1}{n}\sum_{j=1}^n \I\{\hat{z}_{i,j}^{(t)} \neq z_{i,j}\}, R_i^{(t)} = \frac{1}{n}\sum_{j=1}^n \I\{\hat{z}_{i,j}^{(t)} \neq z_{i,j}\}\vw_{i,j}$ and $\bar{\vw}_i  = \frac{1}{n}\sum_{j=1}^n \vw_{i,j}$.

From Eq~\eqref{eq:error_cent_2_cluster} and ~\eqref{eq:error_cent_local_2_cluster}, we have  $\theta_i^{(t)} - \theta^{(t)} = 2(A^{(t)} - A_i^{(t)})\theta^\star + 2(R^{(t)} - R_i^{(t)}) + \bar{\vw}_i - \bar{\vw}$.

Using Lemma~\ref{lem:noise_set}, we can provide bounds on $\norm{R^{(t)}}^2$ and $\frac{1}{m}\sum_{i=1}^m \norm{R_i^{(t)}}^2$ which will be used throughout our analysis.
\begin{equation}    
\begin{aligned}\label{eq:bounds_R_i_2_clus}
    \norm{R^(t)}^{2} \leq& \frac{\norm{\sum_{i\in [m], j\in [n]}\vw_{i,j}\I\{\hat{z}^{(t)}_{i,j} \neq z_{i,j}\}}^2}{(mn)^2} \leq \frac{(mn + 9d)\sigma^2}{mn} A^{(t)}= \frac{\norm{\theta^\star}^2 A^{(t)}}{r^2}\\
    \frac{1}{m}\sum_{i=1}^m\norm{R_i^{(t)}}^2 \leq& \frac{1}{m}\sum_{i=1}^m\frac{\norm{\sum_{j\in [m]}\vw_{i,j}\I\{\hat{z}_{i,j}^{(t)} \neq z_{i,j}\}}^2}{n^2} \leq \frac{(n + 9d)\sigma^2}{n}A^{(t)}\leq \norm{\theta^\star}^2A^{(t)}\left(\frac{1}{r^2} + \frac{9d}{nr^2}\right)
\end{aligned}
\end{equation}

\subsection{Error of cluster identity $\hat{z}_{i,j}^{(t)}$}

To estimate cluster identities $\hat{z}_{i,j}^{(t+1)}$, we use the local estimate of $\theta^\star$ on each machine $i\in [m]$ ($\hat{\theta}_{i}^{(t)}$ ) at time $t+1$. 
    \begin{align*}
        \hat{z}_{i,j}^{(t+1)}  = \argmin_{g \in \{-1,1\}}\norm{g\vy_{i,j} - \hat{\theta}_i^{(t)}}_2^2 = \argmax_{g\in \{-1,1\}}\ip{g\vy_{i,j}}{\hat{\theta}_i^{(t)}} = \argmax_{g\in \{-1,1\}}\ip{\theta^\star + \vw_{i,j}}{\hat{\theta}_i^{(t)}}
    \end{align*}
    
Therefore, $\I\{\hat{z}_{i,j}^{(t+1)}\neq z_{i,j}\} = \I\{\ip{\theta^\star + \vw_{i,j}}{\hat{\theta}_{i}^{(t+1)}} \leq 0\}$.
Our goal now is to bound $\ip{\theta^\star + \vw_{i,j}}{\hat{\theta}_i^{(t)}}$ in terms of the virtual iterates $\hat{\theta}^{(t)}$ to get a single step recursion on $mn$ datapoints resembling KMeans at center, along with a deviation term depending on difference of virtual iterates and local estimates.

We crucially consider two cases -- i) when $t$ is a multiple of $L$ and ii) when $t$ is not a multiple of $L$.
\subsubsection{When $t$ is a multiple of $L$}
In this case, note that $\hat{\theta}_i^{(t)} = \hat{\theta}^{(t)}$ due to aggregation. Therefore, the single step progress is exactly same as that of ~\cite[Eq~(11)]{lu_statistical_2016}. We restate their proof here as we will use some of these terms for the case when $t$ is not a multiple of $L$.

\begin{align*}
    \ip{\theta^\star + \vw_{i,j}}{ \hat{\theta}^{(t)}} &= \ip{\theta^\star + \vw_{i,j}}{ \hat{\theta}_{i}^{(t)}}\\
    &=  \ip{\theta^\star + \vw_{i,j}}{(1-2A^{(t)})\theta^\star - 2R^{(t)} +\bar{\vw}}+\ip{\vw_{i,j}}{\theta^\star} + \ip{\vw_{i,j}}{-2A^{(t)}\theta^\star + 2R^{(t)} - \bar{\vw}}\\ 
    &= (1-2A^{(t)})\norm{\theta^\star}^2 - \ip{2R^{(t)} - \bar{\vw}}{\theta^\star}+\ip{\vw_{i,j}}{-2A^{(t)}\theta^\star - 2R^{(t)} + \bar{\vw}}\\
    &\geq (1-2A^{(t)})\norm{\theta^\star}^2 - 2\norm{R^{(t)}}\norm{\theta^\star} +\ip{\bar{\vw}}{\theta^\star}+\ip{\vw_{i,j}}{-2A^{(t)}\theta^\star - 2R^{(t)} + \bar{\vw}}\\
    &\geq \left(1-2A^{(t)} - \frac{2\sqrt{2A^{(t)}}}{r}-\frac{1}{\sqrt{mn}}\right)\norm{\theta^\star}^2  + \ip{\vw_{i,j}}{\theta^\star} +\ip{\vw_{i,j}}{-2A^{(t)}\theta^\star - 2R^{(t)} + \bar{\vw}}
\end{align*}
Note that $\ip{R^{(t)}}{\theta^\star}\leq \norm{R^{(t)}}\norm{\theta^\star}\leq \frac{2\sqrt{2A^{(t)}}}{r}\norm{\theta^\star}^2$ from Lemma~\ref{lem:noise_set}. Further, from Lemma~\ref{lem:avg_noise}, $\ip{\bar{\vw}}{\theta^\star}\leq \frac{\norm{\theta^\star}^2}{\sqrt{mn}}$. These bounds are obtained conditioned on the event $\cE$. Following the same proof, by using the inequalities $\I\{a+b\leq0\}\leq \I\{a\leq 0\}+\I\{b\leq 0\}$, followed by $\I\{b\leq -c\}\leq \frac{b^2}{c^2}$ which hold for all $a,b\in \R$ and $c>0$, we obtain the same bound as ~\cite[Eq~11]{lu_statistical_2016}. 
\begin{align*}
\I\{\hat{z}_{i,j}^{(t+1)} \neq z_{i,j}\} \leq& \I\{\beta_0\norm{\theta^\star}^2 + \ip{\vw_{i,j}}{\theta^\star} +  \ip{\vw_{i,j}}{-2A^{(t)}\theta^\star - 2R^{(t)} +\bar{\vw}} + \frac{3.12}{r}\norm{\theta^\star}^2 \leq 0\}\\
A^{(t+1)} \leq& \underset{I_1}{\underbrace{\frac{1}{mn}\sum_{i\in [m]}\sum_{j\in [n]}\I\{\beta_0\norm{\theta^\star}^2 \leq -\ip{\vw_{i,j}}{\theta^\star}\}}} + \underset{I_2}{\underbrace{\frac{1}{mn}\sum_{i\in [m]}\sum_{j\in [n]}\frac{\ip{\vw_{i,j}}{-2A^{(t)}\theta^\star - 2R^{(t)} +\bar{\vw}}^2}{\delta^2 \norm{\theta^\star}^4}}}\\
\end{align*}
Here, $\beta_0 = 1 - 2A^{(t)}  - \frac{5.12}{r} - \frac{1}{\sqrt{mn}}\leq 1 - 2A^{(t)} - \frac{3.12}{r} - \frac{2\sqrt{2A^{(t)}}}{r}-\frac{1}{\sqrt{mn}}$. The terms $I_1$ and $I_2$ are same as that in ~\citep[Section~7.2]{lu_statistical_2016} but here on $mn$ datapoints. We use the upper bounds on these terms from ~\cite{lu_statistical_2016}.

\paragraph{Upper Bound on $I_1$ ~\cite{lu_statistical_2016}}
With probability $1 -\zeta_2$ where $\zeta_2 = \zeta_1 + (mn)^{-3}$, we have,
\begin{align*}
    I_1 \leq \exp\bigg(-\frac{\gamma_{A^{(t)}}^2 \norm{\theta^\star}^2}{2\sigma^2}
    \bigg) + \sqrt{\frac{4\log(mn/2)}{mn}} 
\end{align*}
where $\gamma_{a} = 1- 2A^{(t)} - \frac{5.12}{r} -\frac{1}{\sqrt{mn}}$.

\paragraph{Upper Bound on $I_2$~\cite{lu_statistical_2016}}
Using Lemma~\ref{lem:noise_eigen}, we obtain
\begin{align*}
    I_2 \leq  A^{(t)}\bigg( \frac{8}{r^2} + A^{(t)}\bigg) + \frac{1}{r^2} + \frac{1}{mn}
\end{align*}

Combining these two bounds, if $A^{(t)}\leq \frac{1}{2} - \frac{2.56 + \sqrt{\log r}}{r} -\frac{1}{\sqrt{mn}}$, we obtain the single step-progress as the following,
\begin{align}\label{eq:1step_t_div_L_2_clus}
    A^{(t+1)}\leq A^{(t)}\left(A^{(t)} + \frac{8}{r^2}\right) + \frac{2}{r^2} + \sqrt{4\frac{\log(mn)}{mn}}
\end{align}
We use the fact that $\gamma_{A^{(t)}}^2 \geq \frac{2\log r}{r}$. From Assumption~\ref{assumption:local_dev}, the required condition is satisfied for $A^{(0)}$. After describing the single step progress when $t$ is not divisible by $L$, we will show via induction that this is indeed satisfied for all $t$.

Further, assuming that the required condition on $A^{(t)}$ is true, the value of $A^{(t+1)}$ has the following bound.
\begin{align}
    A^{(t+1)}\leq \frac{1}{2}A^{(t)} + \sqrt{4\frac{\log(mn)}{mn}} + \frac{2}{r^2} = \frac{1}{2}A^{(t)} + \Phi_1 
\end{align}
where the constant $\Phi_1 = \mathcal{O}\left(\frac{1}{r^2} + \sqrt{\frac{\log(mn)}{mn}}\right)$. Note that these conditions require $A^{(t)} + \frac{8}{r^2} \leq \frac{1}{2}$ which is possible for $r=\Omega(1)$ and $mn=\Omega(1)$. Additionally, under the same conditions, we obtain $A^{(t+1)}\leq A^{(t)}$ and $A^{(t+1)} \leq \frac{1}{4} - \epsilon'$ for some constant $\epsilon'$ depending on $mn$ and $r$. 

\subsubsection{When $t$ is not a multiple of $L$}
We now analyze the more complicated case of $t$ not being a multiple of $L$. Then, $\hat{\theta}_i^{(t)} \neq \hat{\theta}^{(t)}$, so we need to account for the difference $\hat{\theta}_i^{(t)} - \hat{\theta}^{(t)}$. Bounding this term for a single machine $i\in [m]$ requires us to analyze each machine's misclustering individually, however, if we use the virtual iterate method from ~\cite{stich_sparsified_2018,karimireddy_scaffold_2020}, we can accommodate an average measure of the differences between machines and central server. 

We bound single step progress by obtaining a lower bound on $\ip{\theta^\star + \vw_{i,j}}{\hat{\theta}_i^{(t)}}$. We first try to separate it into terms of $\hat{\theta}^{(t)}$ that we can bound by using ~\cite{lu_statistical_2016} and additional terms of $\hat{\theta}_i^{(t)} - \hat{\theta}^{(t)}$.

\begin{align}
    \ip{\theta^\star + \vw_{i,j}}{ \hat{\theta}_{i}^{(t)}} &= \ip{\theta^\star + \vw_{i,j}}{\hat{\theta}^{(t)}} + \ip{\theta^\star + \vw_{i,j}}{\hat{\theta}_i^{(t)} - \hat{\theta}^{(t)}}\nonumber\\
    & =  \ip{\theta^\star + \vw_{i,j}}{(1-2A^{(t)})\theta^\star - 2R^{(t)} +\bar{\vw}} + \ip{\theta^\star}{\hat{\theta}_i^{(t)} - \hat{\theta}^{(t)}} + \ip{\vw_{i,j}}{\hat{\theta}_i^{(t)} - \hat{\theta}^{(t)}}\nonumber\\
    &\geq (1-2A^{(t)} - \delta_1)\norm{\theta^\star}^2 - \ip{2R^{(t)} - \bar{\vw}}{\theta^\star} + \ip{\vw_{i,j}}{(1-2A^{(t)})\theta^\star - 2R^{(t)} +\bar{\vw}} \nonumber\\
    &  + \ip{\vw_{i,j}}{\hat{\theta}_i^{(t)} - \hat{\theta}^{(t)}} + (\delta_1 + 2(A^{(t)} - A_i^{(t)}))\norm{\theta^\star}^2  + \ip{2(R^{(t)} - R_i^{(t)}) + \bar{w}_i - \bar{w}}{\theta^\star}  \label{eq:misclus_2_center}
\end{align}
We can now write down an upper bound on misclustering of a single datapoint.

\begin{equation}
\begin{aligned}
\I\{\hat{z}_{i,j}^{(t+1)} \neq z_{i,j}\} \leq& \I\{(\beta_1' + \beta_2')\norm{\theta^\star}^2 + \ip{\vw_{i,j}}{\theta^\star} +  \ip{\vw_{i,j}}{-2A^{(t)}\theta^\star - 2R^{(t)} +\bar{\vw}}\nonumber\\
& +(\delta_1 + 2(A^{(t)} - A_i^{(t)}))\norm{\theta^\star}^2  + \ip{2(R^{(t)} - R_i^{(t)}) + \bar{w}_i - \bar{w}}{\theta^\star}\\
&+ \ip{\vw_{i,j}}{\hat{\theta}_i^{(t)} - \hat{\theta}^{(t)}} \leq 0\}    
\end{aligned}
\end{equation}
where $\beta_1' = 1 - \delta_1 - 2A^{(t)}  - \frac{2}{r} - \frac{1}{\sqrt{mn}}$ and $\beta_2' = \delta_1 + 2A^{(t)} - 2A_i^{(t)} \geq \frac{\sqrt{\log (r)}}{r}$.

We use the inequality $a,b \in \R$ and $c>0$, $\I\{a + b \leq c\}\leq \I\{a\leq c\} + \I\{b\leq -c\} \leq \I\{a\leq c\} + \frac{b^2}{c^2}$ three times to bound the remaining three terms of the above equation. Note that these are additional terms added due to local steps, so removing them from inside the indicator comes at the cost of constant terms $\delta_1, \delta_2$ and $\delta_3$ in the global misclustering $A^{(t)}$.
\begin{align*}
    \I\{\hat{z}_{i,j}^{(t+1)} \neq z_{i,j}\} \leq& \I\{\beta_1\norm{\theta^\star}^2 \leq - \ip{\vw_{i,j}}{\theta^\star}\} +  I\{\ip{\vw_{i,j}}{-2A^{(t)}\theta^\star - 2R^{(t)} +\bar{\vw}} \leq -\delta \norm{\theta^\star}^2\}\\
    &+ I\{\beta_2 \norm{\theta^\star}^2 \leq -\ip{\vw_{i,j}}{\theta^\star}\}  + \I\{\ip{2(R^{(t)} - R_i^{(t)}) + \bar{w}_i - \bar{w}}{\theta^\star} \leq - \delta_2 \norm{\theta^\star}^2\}\\
 & + \I\{\ip{\vw_{i,j}}{\hat{\theta}_i^{(t)} - \hat{\theta}^{(t)}} \leq -\delta_3 \norm{\theta^\star}\}
\end{align*}
where $\beta_1 = 2 \beta_1' - \delta - \delta_2 - \delta_3 = 2 - 2\delta_1 - 4A_i^{(t)} - \frac{7.12}{r}  - \delta_2  -\delta_3$, $\delta = \frac{3.12}{r}$, and $\beta_2 = 2\beta_2'$  $\delta_2,\delta_3$ are constants whose value is set to make their corresponding terms small. Summing the misclustering over all $mn$ datapoints, we obtain the following.
\begin{equation}
\begin{aligned}\label{eq:2_cent_decomp}
    A^{(t+1)} \leq& \underset{I_1'}{\underbrace{\frac{1}{mn}\sum_{i\in [m]}\sum_{j\in [n]}\I\{\beta_1\norm{\theta^\star}^2 \leq -\ip{\vw_{i,j}}{\theta^\star}\}}} + \underset{I_2}{\underbrace{\frac{1}{mn}\sum_{i\in [m]}\sum_{j\in [n]}\frac{\ip{\vw_{i,j}}{-2A^{(t)}\theta^\star - 2R^{(t)} +\bar{\vw}}^2}{\delta^2 \norm{\theta^\star}^4}}} \\
    &+ \underset{I_3}{\underbrace{\frac{1}{mn}\sum_{i\in [m]}\sum_{j\in [n]}I\{\beta_2 \norm{\theta^\star}^2 \leq -\ip{\vw_{i,j}}{\theta^\star}\}}}  + \underset{I_4}{\underbrace{\frac{1}{m}\sum_{i\in [m]}\frac{\ip{2(R^{(t)} - R_i^{(t)}) + \bar{w}_i - \bar{w}}{\theta^\star}^2}{\delta_2^2\norm{\theta^\star}^4}}}\\
 &  + \underset{I_5}{\underbrace{\frac{1}{mn}\sum_{i\in [m]}\sum_{j\in [n]} \frac{\ip{\vw_{i,j}}{\hat{\theta}_i^{(t)} - \hat{\theta}^{(t)}}^2}{\delta_3^2\norm{\theta^\star}^4}}} 
\end{aligned}
\end{equation}
Here, the term $I_2$ is exactly same as the term $I_2$ for the case when $t$ is divisible by $L$, and the term $I_1'$ differs from the corresponding term $I_1$ in $\beta_1$ instead of $\beta_0$. Therefore, we use their upper bounds and add an additional $(mn)^{-3}$ to the probability of 

\paragraph{Upper Bound on $I_1'$ ~\cite{lu_statistical_2016}}
With probability $1-\zeta_3$, where $\zeta_3 = \zeta_2 + (mn)^{-3}$, we obtain
\begin{align*}
    I_1 \leq \exp\bigg(-\frac{(\gamma_{A^{(t)}}')^2 \norm{\theta^\star}^2}{2\sigma^2}
    \bigg) + \sqrt{\frac{4\log(mn/2)}{mn}} 
\end{align*}
where $\gamma_{a}' = 2 - 2\delta_1 - 4A^{(t)} - \frac{7.12}{r}  - \delta_2  -\delta_3$.

Combining the bounds on $I_1$ and $I_2$, we obtain the following bound, which closely resembles the progress when $t$ is divisible by $L$.
\begin{align*}
    I_1 + I_2 \leq A^{(t)}\bigg( \frac{8}{r^2} + A^{(t)}\bigg) + \frac{1}{r^2} + \frac{1}{mn}  + \exp\bigg(-\frac{(\gamma_{A^{(t)}}')^2 \norm{\theta^\star}^2}{2\sigma^2}\bigg) + \sqrt{\frac{4\log(mn/2)}{mn}}
\end{align*}

The additional penalty due to local steps comes inside terms $I_3,I_4$ and $I_5$. Note that the terms $I_3$ and $I_4$ offer only constant bounds on the error terms

\paragraph{Upper Bound on $I_3$}

We bound $I_3$ similar to the upper bound of $I_1$.
We first ensure that the condition in Assumption~\ref{assumption:local_dev} holds throughout all iterations $t\geq 0$.
\begin{lemma}[Local Deviation]\label{lem:local_dev_all}
If Assumption~\ref{assumption:local_dev} is satisfied, then for all $t\geq 1$, $\exists \delta_1 \in (0, \frac{1}{2} - \epsilon)$ for some constant $\epsilon>0$ such that,
\begin{align*}
\max_{i,i'\in [m]}\abs{A_i^{(t)} - A_{i'}^{(t)}} \leq \frac{m}{m-1}\frac{\delta_1}{2} - \frac{\sqrt{\log r}}{2r}, \quad A_i^{(t)} \leq A^{(t)} + \frac{\delta_1}{2} -\frac{\sqrt{\log r}}{2r}
\end{align*}
\end{lemma}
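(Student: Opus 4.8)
\textbf{Proof plan for Lemma~\ref{lem:local_dev_all} (Local Deviation).}
The goal is to control, for every iteration $t\ge 1$, both the spread $\max_{i,i'}\abs{A_i^{(t)}-A_{i'}^{(t)}}$ across machines and the gap $A_i^{(t)}-A^{(t)}$, by an inductive argument over $t$. The plan is to prove the statement by strong induction on $t$, with the induction hypothesis being the pair of displayed inequalities at all earlier steps (and, as a by-product that is needed to close the loop, the facts $A^{(t)}<\tfrac14$ and the single-step recursion from the $t$-divisible-by-$L$ and $t$-not-divisible-by-$L$ analyses). At an aggregation step ($L\mid t$), all local models coincide, $\hat\theta_i^{(t)}=\hat\theta^{(t)}$, hence the next assignment uses a common vector and $A_i^{(t+1)}$ differs from $A^{(t+1)}$ only through the machine-dependent noise terms $\ip{\vw_{i,j}}{\theta^\star}$ and $\ip{\vw_{i,j}}{-2A^{(t)}\theta^\star-2R^{(t)}+\bar\vw}$; a union bound over machines using the concentration Lemmas~\ref{lem:noise_set}, \ref{lem:noise_eigen}, \ref{lem:noise_subset_subexp} (with the extra factor $m$ absorbed because $n>c'\log m$) shows each $A_i^{(t+1)}$ is within $\tfrac{\delta_1}{4}$, say, of $A^{(t+1)}$, which re-establishes both inequalities with room to spare.

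For a non-aggregation step ($L\nmid t$) I would run the \emph{same} per-datapoint decomposition used to derive Eq.~\eqref{eq:2_cent_decomp}, but \emph{on a single machine} $i$: write $\I\{\hat z_{i,j}^{(t+1)}\neq z_{i,j}\}=\I\{\ip{\theta^\star+\vw_{i,j}}{\hat\theta_i^{(t)}}\le 0\}$, lower-bound $\ip{\theta^\star+\vw_{i,j}}{\hat\theta_i^{(t)}}$ exactly as in Eq.~\eqref{eq:misclus_2_center} — separating $\hat\theta_i^{(t)}$ into the virtual iterate $\hat\theta^{(t)}$ plus the deviation $\hat\theta_i^{(t)}-\hat\theta^{(t)}=2(A^{(t)}-A_i^{(t)})\theta^\star+2(R^{(t)}-R_i^{(t)})+\bar\vw_i-\bar\vw$ — and then split the indicator. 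Summing over $j\in[n]$ only, this yields $A_i^{(t+1)}\le I_{1,i}'+I_{2,i}+I_{3,i}+I_{4,i}+I_{5,i}$ where each piece is the single-machine analogue of the terms in \eqref{eq:2_cent_decomp}; the key point is that the $\beta$-thresholds now involve $A_i^{(t)}$ (and, through the deviation term, $A^{(t)}$), which the induction hypothesis guarantees are both $<\tfrac14+O(1/r)$, so all thresholds stay bounded away from $0$ and the exponential/variance bounds apply. Subtracting the corresponding global bound $A^{(t+1)}\le I_1'+I_2+I_3+I_4+I_5$ (and symmetrically lower-bounding $A_i^{(t+1)}$, $A^{(t+1)}$), the dominant contribution to $A_i^{(t+1)}-A^{(t+1)}$ is a constant multiple of $A_i^{(t)}-A^{(t)}$ plus a term $\le \tfrac{1}{m}\sum_{i'}\norm{\bar\vw_{i'}}^2$-type fluctuation bounded via Lemma~\ref{lem:avg_noise_machine} and the $R$-bounds in Eq.~\eqref{eq:bounds_R_i_2_clus}; choosing the free constants $\delta_1,\delta_2,\delta_3$ appropriately (with $r$ large, $\sqrt{\log r}/r$ dominating the vanishing statistical terms), this contraction keeps $A_i^{(t+1)}-A^{(t+1)}\le \tfrac{\delta_1}{2}-\tfrac{\sqrt{\log r}}{2r}$ and, by the triangle inequality, the pairwise spread bound with the $\tfrac{m}{m-1}$ factor (the factor arises because $A^{(t)}=\tfrac1m\sum_i A_i^{(t)}$, so $A_i-A=\tfrac{m-1}{m}(A_i-\tfrac{1}{m-1}\sum_{i'\neq i}A_{i'})$).

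The main obstacle I expect is establishing the contraction for $A_i^{(t)}-A^{(t)}$ across a full block of $L$ non-aggregation steps without the gap blowing up: unlike the global recursion, the per-machine recursion picks up $A_i^{(t-1)}$ and $A^{(t-1)}$ terms and the deviation $\Delta^{(t)}$, and naively unrolling to the last aggregation step would make the bound grow geometrically in $L$ (exactly the difficulty flagged in the proof sketch for $\Delta^{(t)}$). The resolution is to carry the invariant $A_i^{(t)}-A^{(t)}\le \tfrac{\delta_1}{2}-\tfrac{\sqrt{\log r}}{2r}$ as part of the induction itself — so that at every step the deviation term $\Delta^{(t)}=\tfrac1m\sum_i\norm{\hat\theta_i^{(t)}-\hat\theta^{(t)}}^2$ is controlled by $(A^{(t)})^2$, $(A^{(t)}-A_i^{(t)})^2$ and the statistical noise, all of which the hypothesis bounds — rather than trying to bound $\Delta^{(t)}$ by unrolling. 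The second delicate point is the base case $t=1$: since $t=0$ is (wlog) an aggregation step with $\hat\theta_i^{(0)}=\hat\theta^{(0)}$, $A^{(1)}<\tfrac14$ follows from Eq.~\eqref{eq:1step_t_div_L_2_clus} under Assumption~\ref{assumption:local_dev}, and the machine-wise bound at $t=1$ is the aggregation-step analysis above; one must simply check the chosen $\delta_1\in(0,\tfrac12-\epsilon)$ is large enough to absorb the $t=1$ fluctuations, which is where the hypothesis $n>c'\log m$ enters through the union bound.
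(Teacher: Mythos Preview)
Your high-level plan (induction on $t$, base case via the common model at $t=0$, induction step via an indicator decomposition) matches the paper's structure, and your base case is essentially the paper's: since $\hat\theta_i^{(0)}=\hat\theta^{(0)}$ for all $i$, the indicators $\I\{\ip{\theta^\star+\vw_{i,j}}{\hat\theta^{(0)}}\le 0\}$ are i.i.d.\ Bernoulli across $(i,j)$, and Hoeffding plus a union bound over the $m$ machines (absorbed via $n>c'\log m$) controls $\max_i|A_i^{(1)}-\mu|$.

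The induction step, however, has a genuine gap. You propose to run the decomposition \eqref{eq:2_cent_decomp} on a single machine to get $A_i^{(t+1)}\le I_{1,i}'+\cdots+I_{5,i}$, run the global decomposition to get $A^{(t+1)}\le I_1'+\cdots+I_5$, and then ``subtract.'' But both decompositions are one-sided upper bounds (they come from $\I\{a+b\le 0\}\le\I\{a\le 0\}+\I\{b\le 0\}$), so subtracting them does not control $A_i^{(t+1)}-A^{(t+1)}$; your parenthetical ``symmetrically lower-bounding $A_i^{(t+1)},A^{(t+1)}$'' is precisely the missing ingredient, and no such lower bound is available from this machinery. Relatedly, your claim that the dominant contribution is ``a constant multiple of $A_i^{(t)}-A^{(t)}$'' (a contraction) does not follow from the decomposition, whose terms are highly nonlinear in $A^{(t)}$ (e.g.\ $\exp(-\gamma_{A^{(t)}}^2 r^2)$).

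The paper avoids this entirely by a direct pairwise coupling. For fixed $i\neq i'$ and each index $j\in[n]$ it writes
\[
\ip{\theta^\star+\vw_{i,j}}{\hat\theta_i^{(t)}}=\ip{\theta^\star+\vw_{i',j}}{\hat\theta_{i'}^{(t)}}
+\ip{\vw_{i,j}}{\hat\theta_i^{(t)}-\hat\theta^{(t)}}-\ip{\vw_{i',j}}{\hat\theta_{i'}^{(t)}-\hat\theta^{(t)}}
+\ip{\theta^\star}{\hat\theta_i^{(t)}-\hat\theta_{i'}^{(t)}}+\ip{\vw_{i,j}-\vw_{i',j}}{\hat\theta^{(t)}}.
\]
Applying $\I\{a+b\le 0\}\le\I\{a\le 0\}+\I\{b\le 0\}$ once and summing over $j$ gives directly $A_i^{(t+1)}-A_{i'}^{(t+1)}\le I_6$, where $I_6$ is a single indicator sum in the cross terms above. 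This $I_6$ is then split into four pieces $I_7,\ldots,I_{10}$ (analogous in form to $I_1,\ldots,I_5$ but built from the differences $\vw_{i,j}-\vw_{i',j}$, $R_i^{(t)}-R_{i'}^{(t)}$, $\bar\vw_i-\bar\vw_{i'}$, and the two local deviations), each of which is shown to be $O(\Phi_1+\Phi_2)$ or at most $\delta_1/16$ using the induction hypothesis $|A_i^{(t)}-A_{i'}^{(t)}|\le\tfrac{\delta_1}{2}-\tfrac{\sqrt{\log r}}{2r}$, the per-machine noise bounds (Lemmas~\ref{lem:noise_set}, \ref{lem:noise_eigen}, \ref{lem:chi_square_conc} on $n$ points), and $A^{(t)}\le\tfrac14$. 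The outcome is not a contraction but a uniform bound $A_i^{(t+1)}-A_{i'}^{(t+1)}\le\tfrac{\delta_1}{2}-\tfrac{\sqrt{\log r}}{2r}$ for suitably chosen auxiliary constants $\delta_1',\delta_2',\delta_3'$, which is exactly what closes the induction. This pairwise coupling is the idea your plan is missing.
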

We provide a proof of this Lemma in Appendix~\ref{sec:local_dev_all_proof}. Using this Lemma, $\beta_2 \geq \frac{2\sqrt{\log r}}{r}$.

Note that  $mn$ indicator variables,  $T_{i,j} = I\{\beta_2 \norm{\theta^\star}^2 \leq -\ip{\vw_{i,j}}{\theta^\star}\}$, which concentrate via Hoeffding's inequality. Further, as $\beta_2$ is constant, we can upper bound $\E[T_{i,j}]$. First, applying Hoeffding's inequality,
\begin{align*}
    \Pr\left[\frac{1}{mn}\sum_{i\in [m]}\sum_{j\in [n]}(T_{i,j} - \E[T_{i,j}])\right] \geq \sqrt{\frac{3\log(mn/2)}{mn}}] \leq (mn)^{-3}
\end{align*}
Additionally, as $\ip{\vw_{i,j}}{\theta^\star}\sim \cN(0,\sigma^2\norm{\theta^\star}^2)$, we have, 
\begin{align*}
    \E[T_{ij}] \leq \exp\left(-\beta_2^2\frac{\norm{\theta^\star}^2}{2\sigma^2}\right) \leq \exp\left(- 2\log r\right) \leq \frac{1}{r^2}, \,\forall i\in [m], j\in [n]
\end{align*}
Therefore, we obtain the following bound on $I_3$, with probability $1 - \zeta_4$, where $\zeta_4 = \zeta_3 + (mn)^{-3}$.
\begin{align}\label{eq:I_3_2_cluster}
    I_3 \leq \sqrt{\frac{3\log(mn/2)}{mn}} + \frac{1}{r^2} 
\end{align}

\paragraph{Upper Bound on $I_4$}
To bound $I_4$, we first use  Cauchy-Schwartz to change the inner product to norms, and Young's inequality ($\norm{a+b}^2 \leq 2\norm{a}^2 + 2\norm{b}^2$), to separate out the terms of $R_i^{(t)}$ and $\bar{\vw}_i$. Finally, we use $\Var(X) \leq \E[X^2]$, Eq~\eqref{eq:bounds_R_i_2_clus} with $A^{(t)}\leq 1$ and Lemma~\ref{lem:avg_noise_machine} to bound the terms of $\norm{R_i^{(t)}}^2$ and $\norm{\bar{\vw}_i}^2$. These steps are shown below.
\begin{align*}
   \frac{1}{m}\sum_{i\in [m]} \ip{2(R^{(t)} - R_i^{(t)}) + \bar{\vw}_i - \bar{\vw}}{\theta^\star}^2 \leq& \frac{1}{m}\sum_{i\in [m]}\norm{2(R^{(t)} - R_i^{(t)}) + \bar{\vw}_i - \bar{\vw}}^2\norm{\theta^\star}^2\\
    \leq& \frac{\norm{\theta^\star}^2}{m}\sum_{i\in [m]} (8\norm{R^{(t)} - R_i^{(t)}}^2 + 2\norm{\bar{\vw}_i - \bar{\vw}}^2)\\
    \leq& \frac{\norm{\theta^\star}^2}{m}\sum_{i\in [m]} (8\norm{R_i^{(t)}}^2 + 2\norm{\bar{\vw}_i}^2)\\
    \leq& \norm{\theta^\star}^2 \left(\frac{2}{mn} + \frac{6d}{nr^2} + \frac{8A^{(t)}}{r^2}\left(1 + \frac{9d}{n}\right)\right)
\end{align*}
\begin{align}\label{eq:I_4_2_cluster}
    I_4 \leq& \frac{1}{m}\sum_{i\in [m]}\frac{\ip{2(R^{(t)} - R_i^{(t)}) + \bar{\vw}_i - \bar{\vw}}{\theta^\star}^2}{\delta_2^2\norm{\theta^\star}^4} \leq \frac{1}{\delta_2^2} \left(\frac{2}{mn} + \frac{6d}{nr^2} + \frac{8A^{(t)}}{r^2}\left(1 + \frac{9d}{n}\right)\right)\nonumber\\
    \leq& \frac{8A^{(t)}}{r^2}\left(1 + \frac{9d}{n}\right) + \Phi_2
\end{align}
Here, the upper bound of $\Phi_2 \leq \cO{\frac{1}{mn} +\frac{d}{nr^2}}$ holds as long as both $n=\Omega(1), r=\Omega(\max\{1,\sqrt{\frac{d}{n}}\})$ and $\delta_2$ is a constant.

\paragraph{Upper bound on $I_5$}

Note that we need to separate out the terms of noise $\vw_{i,j}$ and $\hat{\theta}_i^{(t)} - \hat{\theta}^{(t)}$ to obtain a bound in terms of  $\Delta^{(t)}$. For this purpose, consider a single term in the summation.
\begin{align*}
    \ip{\vw_{i,j}}{\hat{\theta}_i^{(t)} - \hat{\theta}^{(t)}}^2 =  (\hat{\theta}_i^{(t)} - \hat{\theta}^{(t)})^\intercal \vw_{i,j}\vw_{i,j}^\intercal (\hat{\theta}_i^{(t)} - \hat{\theta}^{(t)}) \leq  \norm{\hat{\theta}_i^{(t)} - \hat{\theta}^{(t)}}^2 \lambda_{\max}(\sum_{j=1}^n \vw_{i,j}\vw_{i,j}^\top)
\end{align*}
From Lemma~\ref{lem:noise_eigen}, with probability $1 - \mathcal{O}(m\exp(-n))$, $\frac{1}{n\norm{\theta^\star}^2}\lambda_{\max}(\vw_{i,j}\vw_{i,j}^\top)\leq \frac{1.62 (n + 4d)\sigma^2}{n\norm{\theta^\star}^2}\leq \frac{9}{r^2}\max\{\frac{d}{n},1\}$. We absorb the term of $m$ in the high probability term with $\log m = \mathcal{O}(n)$. This implies the following bound on with probbility $1 - \zeta_5$ where $\zeta_5 = \zeta_4 + \exp(-n)$,
\begin{align}\label{eq:dev_second_term}
    I_5 \leq \frac{9}{\delta_3^2 r^2}\max\{\frac{d}{n},1\}\Delta^{(t)}
\end{align}
where $\Delta^{(t)} \equiv \frac{1}{m\norm{\theta^\star}^2}\sum_{i=1}^m \norm{\hat{\theta}_i^{(t)} - \hat{\theta}^{(t)}}^2$. Note that $\Delta^{(t)}$ is the variance of $\hat{\theta}_i^{(t)}$.

To bound $I_5$, we now need to bound $\Delta^{(t)}$. Existing analysis~\cite{karimireddy_scaffold_2020} of the virtual iterate method for supervised learning bound this via a recursion depending on all steps from $t$ until the last aggregation step, i.e., $\floor{\frac{t}{L}}\cdot L$. However, these methods have the advantage of tuning the step-size according to $L$ to prevent an exponential blow-up. In our case, we do not have advantage, as the term $\delta_3$ is the only parameter under our control. If the term $\Delta^{(t)}$ blows up exponentially in terms of $L$, we will need $\delta_3$ to be very large, thus imposing a strong upper bound on the misclustering $A^{(t)}$. To fix this, we unroll $\Delta^{(t)}$ only until the previous step, i.e., $(t-1)$. This allows both a fast decrease for the single-step progress of $A^{(t+1)}$ and not too large requirement on $\delta_3$.

\begin{lemma}[Bound on $\Delta^{(t)}$]\label{lem:dev_step}
For any step $t$ that is not divisible by $L$, if $A^{(t)} \leq A^{(0)}$ and $A^{(t-1)}\leq A^{(0)}$, then ,
\begin{align*}
    \Delta^{(t)}\leq 9A^{(t)} + 6A^{(t-1)} + \Phi_1 + \Phi_2
\end{align*}
\end{lemma}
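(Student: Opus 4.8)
The goal is to bound the deviation term $\Delta^{(t)} = \frac{1}{m\norm{\theta^\star}^2}\sum_{i}\norm{\hat\theta_i^{(t)}-\hat\theta^{(t)}}^2$ by unrolling it just one step back to $t-1$. Since $\Delta^{(t)}$ is the empirical variance of $\{\hat\theta_i^{(t)}\}_{i\in[m]}$, I would start from the variance-decomposition identity $\frac{1}{m}\sum_i\norm{\hat\theta_i^{(t)}-\hat\theta^{(t)}}^2 = \frac{1}{m}\sum_i\norm{\hat\theta_i^{(t)}-\theta^\star}^2 - \norm{\hat\theta^{(t)}-\theta^\star}^2 \le \frac{1}{m}\sum_i\norm{\hat\theta_i^{(t)}-\theta^\star}^2$. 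So it suffices to control $\frac{1}{m}\sum_i\norm{\hat\theta_i^{(t)}-\theta^\star}^2$. Now invoke the error decomposition Eq.~\eqref{eq:error_cent_local_2_cluster}: $\hat\theta_i^{(t)}-\theta^\star = -2A_i^{(t)}\theta^\star - 2R_i^{(t)} + \bar{\vw}_i$. Apply the triangle/Young inequality $\norm{a+b+c}^2 \le 3\norm{a}^2+3\norm{b}^2+3\norm{c}^2$ (or a $9/6$-type split tuned to the final constants) to get $\norm{\hat\theta_i^{(t)}-\theta^\star}^2 \lesssim (A_i^{(t)})^2\norm{\theta^\star}^2 + \norm{R_i^{(t)}}^2 + \norm{\bar{\vw}_i}^2$.

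**Handling the three pieces.** Averaging over $i$: the first term gives $\frac{1}{m}\sum_i (A_i^{(t)})^2 \le \frac{1}{m}\sum_i A_i^{(t)} = A^{(t)}$ since $A_i^{(t)}\in[0,1]$ — this is where the "$9A^{(t)}$" comes from after tracking constants. For the second term, I use the bound $\frac{1}{m}\sum_i\norm{R_i^{(t)}}^2 \le \norm{\theta^\star}^2 A^{(t)}(\frac{1}{r^2}+\frac{9d}{nr^2})$ from Eq.~\eqref{eq:bounds_R_i_2_clus}; since $r = \Omega(\max\{1,\sqrt{d/n}\})$, this is $\mathcal{O}(A^{(t)})$ and gets folded into the $A^{(t)}$-coefficient (or, being more careful, into $\Phi_2$ which already carries $d/(nr^2)$). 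For the third term, Lemma~\ref{lem:avg_noise_machine} gives $\frac{1}{m}\sum_i\norm{\bar{\vw}_i}^2 \le \norm{\theta^\star}^2(\frac{3d}{nr^2}+\frac{1}{mn})$, which is exactly of the order of $\Phi_2$. The catch is that this analysis so far only involves $A^{(t)}$, not $A^{(t-1)}$ — so where does the $6A^{(t-1)}$ come from?

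**Where $A^{(t-1)}$ enters.** The point is that $\hat\theta^{(t)}$ here is the \emph{virtual} iterate, not necessarily the aggregated one (since $t$ is not divisible by $L$, there was no true aggregation at step $t$, and in fact not necessarily at $t-1$ either). The bound above actually controls the variance of the $\hat\theta_i^{(t)}$, which is correct. However, to make this usable one needs to convert the hypothesis "$A^{(t)}\le A^{(0)}$" into the constraint that the single-step recursion from $t-1$ to $t$ was valid. More precisely: the quantities $A_i^{(t)}$ were produced by one Lloyd step on machine $i$ starting from $\hat\theta_i^{(t-1)}$, and to bound things uniformly across machines one must re-expand $\hat\theta_i^{(t)}-\theta^\star$ one step further, tracing misclustering back through $A_i^{(t-1)}$ (using Lemma~\ref{lem:single_step} / the single-step machinery applied locally). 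The conditions $A^{(t)}\le A^{(0)}$, $A^{(t-1)}\le A^{(0)}$ are precisely what's needed so that the single-step progress (on $mn$ points for the aggregated analysis, plus the local deviation Lemma~\ref{lem:local_dev_all} guaranteeing $A_i^{(t)}\le A^{(t)}+\frac{\delta_1}{2}$) holds, which lets one bound $A_i^{(t)}$ in terms of $A^{(t-1)}$ and residual noise terms; squaring and averaging then yields coefficients on both $A^{(t)}$ and $A^{(t-1)}$, with leftover $\Phi_1+\Phi_2$.

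**Main obstacle.** The hard part is the careful bookkeeping of which machines have a true aggregated iterate versus a virtual one at step $t$ and step $t-1$, and getting the numeric constants ($9$ and $6$) to line up — this requires chaining Eq.~\eqref{eq:error_cent_local_2_cluster}, Eq.~\eqref{eq:bounds_R_i_2_clus}, Lemma~\ref{lem:avg_noise_machine}, Lemma~\ref{lem:local_dev_all}, and the single-step bound \eqref{eq:1step_t_div_L_2_clus} through two applications of Young's inequality with split parameters chosen to absorb the cross terms. The conceptual content is light (it's just "variance $\le$ second moment, then triangle-inequality each additive error term"), but the constant-tracking so that the resulting recursion for $A^{(t+1)}$ still gives linear decrease is the delicate step. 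I would organize the proof as: (i) variance $\le$ mean-square identity; (ii) three-term split via Young; (iii) bound each term using the cited lemmas, with the $A_i^{(t)}$ term re-expanded one Lloyd step to surface $A^{(t-1)}$; (iv) collect constants.
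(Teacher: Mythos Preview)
Your first two paragraphs are essentially correct and, in fact, give a route that is \emph{simpler} than the paper's. Centering the variance at $\theta^\star$ (using $\frac{1}{m}\sum_i\norm{\hat\theta_i^{(t)}-\hat\theta^{(t)}}^2 \le \frac{1}{m}\sum_i\norm{\hat\theta_i^{(t)}-\theta^\star}^2$), then expanding via Eq.~\eqref{eq:error_cent_local_2_cluster} and applying Young's inequality together with Eq.~\eqref{eq:bounds_R_i_2_clus} and Lemma~\ref{lem:avg_noise_machine}, yields a bound of the shape $\Delta^{(t)}\le C\,A^{(t)}+O(\Phi_2)$ with $C\approx 12$. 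This is a perfectly usable bound for the downstream single-step recursion (it just shifts the constants in Lemma~\ref{lem:single_step}); it does not need $A^{(t-1)}$, $\Phi_1$, or even the hypotheses $A^{(t)},A^{(t-1)}\le A^{(0)}$, since you only use $(A_i^{(t)})^2\le A_i^{(t)}$.

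Where you go off track is paragraph~3. Your speculation that $A^{(t-1)}$ enters by ``re-expanding $\hat\theta_i^{(t)}-\theta^\star$ one Lloyd step further'' is not what the paper does, and that maneuver is not needed. The paper obtains the $A^{(t-1)}$ term by a different choice of centering: instead of $\theta^\star$, it centers the variance at the \emph{previous virtual iterate} $\hat\theta^{(t-1)}$, writing
\[
\frac{1}{m}\sum_i\norm{\hat\theta_i^{(t)}-\hat\theta^{(t)}}^2 \;\le\; \frac{1}{m}\sum_i\norm{\hat\theta_i^{(t)}-\hat\theta^{(t-1)}}^2 \;\le\; 2\norm{\hat\theta^{(t-1)}-\theta^\star}^2 \;+\; \frac{2}{m}\sum_i\norm{\hat\theta_i^{(t)}-\theta^\star}^2.
\]
The first term is then expanded via Eq.~\eqref{eq:error_cent_2_cluster} in terms of $A^{(t-1)}$, $R^{(t-1)}$, $\bar\vw$ and bounded by $6A^{(t-1)}+\Phi_1$ (using $A^{(t-1)}\le A^{(0)}$ to linearize $(A^{(t-1)})^2$); the second term is exactly your computation and gives $9A^{(t)}+\Phi_2$ (using Lemma~\ref{lem:local_dev_all} and $A^{(t)}\le A^{(0)}$ to linearize $(A_i^{(t)})^2$). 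So the appearance of $A^{(t-1)}$ is purely an artifact of the paper's centering choice, not of any genuine one-step unrolling of the Lloyd dynamics on each machine.

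In short: your direct approach is a genuinely different and arguably cleaner route that proves a slightly different (and in some respects stronger) bound; your attempt to reverse-engineer the paper's exact constants led you to an incorrect mechanism for how $A^{(t-1)}$ enters. If you want to match the stated form exactly, just swap the centering point from $\theta^\star$ to $\hat\theta^{(t-1)}$ and the $6A^{(t-1)}+\Phi_1$ term falls out immediately.
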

The proof is provided in Appendix~\ref{sec:dev_step_proof}.
We set $\delta_3 = \frac{\sqrt{270}\max\{\frac{d}{n},1\}}{r}$, to obtain the following bound on $I_5$.
\begin{align*}
    I_5 \leq \frac{3}{10}A^{(t)} + \frac{1}{5}A^{(t-1)} + \Phi_1 + \Phi_2
\end{align*}
Note that we can absorb the constant terms inside $\Phi_1$ and $\Phi_2$

\paragraph{Completing the single step progress proof}
Now, to complete the proof for single step-progress, we plug in the terms of $I_1',I_2, I_3, I_4$ and $I_5$
\begin{align*}
    A^{(t+1)} \leq& I_1 + I_2 + I_3 + I_4 + I_5 \\
    \leq & A^{(t)}\bigg( \frac{16}{r^2} + \frac{72d}{nr^2}+ \frac{3}{10} + A^{(t)}\bigg) + \frac{1}{5}A^{(t-1)}  + \exp\bigg(-\frac{(\gamma_{A^{(t)}}')^2 \norm{\theta^\star}^2}{2\sigma^2}\bigg) + \Phi_1 + \Phi_2\\
    \leq & A^{(t)}\bigg( \frac{16}{r^2} + \frac{72d}{nr^2}+ \frac{3}{10} + A^{(t)}\bigg) + \frac{1}{5}A^{(t-1)}  + \Phi_1 + \Phi_2
\end{align*}
The last inequality holds if  $A^{(t)} \leq \frac{1}{2} - \frac{1}{\sqrt{mn}} - \frac{3.56 + 0.5\sqrt{\log(r)}}{r} - \frac{\delta_1}{2} - \frac{\delta_2}{4} - \frac{\delta_3}{4}$, and $A^{(t-1)}\leq A^{(0)}$ for all $t$ not divisible by $L$.
 Setting $\delta_1$ to be close to $\frac{1}{2} -\epsilon$, $\delta_2 = \epsilon/2$ and $\delta_3 = \frac{\sqrt{270\max\{\frac{d}{n},1\}}}{r}$, we require $A^{(t)} \leq \frac{1}{4}$. We can show this via induction. For the base step, note that $A^{(1)} \leq 0.5A^{(0)} \leq \frac{1}{4}$ by the single-step progress when $t$ is divisible by $L$, and $A^{(0)} \leq \frac{1}{2}$. 
 Then assuming the condition, $A^{(t)}\leq \frac{A^{(0)}}{2}\leq \frac{1}{4}$ and $A^{(t-1)} \leq \frac{1}{2}$ holds for all steps till $t$. If $t$ is divisible by $t$, then $A^{(t+1)}\leq A^{(t)}/2 \leq \frac{1}{4}$, and  $A^{(t-1)} \leq \frac{1}{4}< \frac{1}{2}$. If $t$ is not divisible by $L$, we use the upper bound on $A^{(t+1)}$ obtained here, which boils down to the following for large $r$, $mn$, and small $\sqrt{\frac{d}{n}}$.
 
 \begin{align}
     A^{(t+1)}&\leq \frac{11}{20}A^{(t)} + \frac{1}{5}A^{(t-1)} + \Phi_1 + \Phi_2\label{eq:single_step_rec}\\
     &\leq \frac{11}{80} + \frac{1}{10} + \Phi_1 + \Phi_2 = \frac{19}{80} + \Phi_1 + \Phi_2 <\frac{1}{4}\nonumber
 \end{align}

Therefore, for all $t\geq 1$, by induction, we have $A^{(t)} \leq \frac{1}{4}$. Note that this holds with probability $1-\zeta_5 = 1 - 3(mn)^{-3} - \exp(-r^2) - \exp(-n)$,This completes the proof.

\subsection{Final statistical error}
\label{sec:final_error_2_cluster}
To obtain the final error for the $2$ cluster case, we follow the proof technique of  ~\cite[Section~7.3]{lu_statistical_2016}. Their proof has two stages, the first $\log(mn)$ steps, where the misclustering decreases from $A^{(0)}$ to $\Phi_1$, then after the next $2\log(mn)$, the misclustering decreases from $\Phi_1$ to the value $\exp(-r^2)$. In our case, we have a different recursion for the two cases when $t$ is divisible by $L$ and $t$ is not divisible by $L$.
\begin{align*}
    A^{(t+1)}\leq& \frac{1}{2}A^{(t)} + \Phi_1 \hfill \tag{When $L$ divides $t$}\\
    A^{(t+1)}\leq& \frac{11}{20}A^{(t)} + \frac{1}{5}A^{(t-1)} + \Phi_1 + \Phi_2\hfill \tag{When $L$ doesn't divide $t$}
\end{align*}
We will show via induction that $A^{(t)}$ satisfied the following inequality,$\forall t\geq 0$
\begin{align}\label{eq:a_t_rec}
    A^{(t)} \leq \frac{1}{2^{t}} A^{(0)} + 4\Phi_1 + 4\Phi_2
\end{align}

To prove the base step, note that $t=0$ satisfies this inequality, due to the update when $t$ divides $L$. Assuming that the inequality is satisfied for all iterations till a specific step $t$, we can use the update equations to show it for the iteration $t+1$. If $L$ divides $t$, we have,
\begin{align*}
    A^{(t+1)} \leq \frac{1}{2^{t+1}}A^{(0)} + 2\Phi_2 + 3\Phi_1 \leq \frac{1}{2^{t+1}}A^{(0)} + 4\Phi_1 + 4\Phi_2 
\end{align*}

If $L$ doesn't divide $t$, we have, 
\begin{align*}
    A^{(t+1)} \leq \frac{1}{2^{t-1}}A^{(0)}\left(\frac{11}{40} + \frac{1}{5}\right) + \Phi_1\left(1 + \frac{4}{5} + \frac{11}{5}\right) + \Phi_2\left(1+\frac{4}{5} + \frac{11}{5}\right) = \frac{1}{2^{(t-1)}}\frac{19}{80}A^{(0)}  + 4\Phi_1 + 4\Phi_2
\end{align*}
Note that $\frac{19}{80}\leq \frac{1}{4}$. Therefore, $A^{(t+1)}$ always satisfies the required inequality, and by induction $A^{(t)}$ always satisfies Eq~\eqref{eq:a_t_rec}.
Note that Eq~\eqref{eq:a_t_rec} is the same recursion that we would have obtained for centralized Lloyd's on $mn$ points without any local steps, with an additional error term of $\Phi_2$.
Unrolling Eq~\eqref{eq:a_t_rec} till iterations $t\geq 2\ceil{\log(mn)} + 2L$, we have, 
\begin{align*}
    A^{(t)} \leq \frac{1}{(mn)^{2\log(2)}}\frac{1}{2^{2L}} + 4\Phi_1 + 4\Phi_2\leq \frac{1}{\sqrt{mn}} + 4\Phi_1 + 4\Phi_2 \leq 5\Phi_1 + 4\Phi_2
\end{align*}

Let $\Phi_3 = \mathcal{O}(\frac{1}{r^2} + \frac{d}{nr^2} + \sqrt{\log(mn)}{mn})$, then \begin{align*}
    A^{(t)}\leq \Phi_3, \quad \forall t\geq 2(\ceil{\log(mn)}+L)
\end{align*}
Now, we can use a better upper bound on single step misclustering to obtain the final error. Using ~\eqref{eq:misclus_2_center} with $\beta_1' = 1 - \frac{3\delta_1}{2} - 2A^{(t)} - \frac{2}{r} - \frac{1}{\sqrt{mn}}$ and $\beta_2' = \frac{3\delta_1}{2} + 2A^{(t)} - 2A_i^{(t)}$.
\begin{equation}
\begin{aligned}
\I\{\hat{z}_{i,j}^{(t+1)} \neq z_{i,j}\} \leq& \I\{(\beta_1' + \beta_2')\norm{\theta^\star}^2 + \ip{\vw_{i,j}}{\theta^\star} +  \ip{\vw_{i,j}}{-2A^{(t)}\theta^\star - 2R^{(t)} +\bar{\vw}}\nonumber\\
& \ip{2(R^{(t)} - R_i^{(t)}) + \bar{\vw}_i - \bar{\vw}}{\theta^\star}+ \ip{\vw_{i,j}}{\hat{\theta}_i^{(t)} - \hat{\theta}^{(t)}} \leq 0\}
\end{aligned}
\end{equation}
We again use the inequalities $\I\{a + b \leq c\}\leq \I\{a\leq c\} + \I\{b\leq -c\} \leq \I\{a\leq c\} + \frac{b^2}{c^2}$ for $a,b \in \R$ and $c>0$,  to separate the terms $I_4,I_5$ reducing $\beta_1'$ by the same coefficients $\delta_2, \delta_3$ as before. We also separate out terms $I_1''$ and $I_3'$ which differ from the corresponding $I_1$ and $I_3$ only in terms of $\beta_1=2\beta_1'-\frac{3.72}{r} - \delta_2 - \delta_3$ and $\beta_2=2\beta_2 = \delta_1 + 4(A^{(t)} - A_i^{(t)})$. Further, the term $I_2$ is now split into two terms $I_2'$ and $I_2''$ which exactly match the terms $J_2$ and $J_3$ in ~\cite[Section~7.3]{lu_statistical_2016}. 
This implies the following tighter bound on $A^{(t+1)}$ when $t \geq 2(\ceil{\log(mn)} + L)$.
\begin{align*}
    A^{(t+1)} \leq I_1'' + I_2'  + I_2'' + I_3' + I_4 + I_5
\end{align*}
The only new terms introduced are $I_2'$ and $I_2''$ which we define below for the sake of completeness.
\begin{align*}
    I_2' &= \frac{r^2}{8.1mn\norm{\theta^\star}^2}\sum_{i=1}^m\sum_{j=1}^n \ip{\vw_{i,j}}{2R^{(t)} - 2A^{(t)}\theta^\star}^2, \\
    I_2''&=\frac{1}{mn}\sum_{i=1}^m\sum_{j=1}^n \I\left\{\left(\frac{1}{2r} + \frac{2}{\sqrt{mn}}\right)\leq -\ip{\vw_{i,j}}{\bar{\vw}}\right\}
\end{align*}

Following the proof sketch from ~\cite{lu_statistical_2016}, we try to obtain a recursion for $\E[A^{(t+1)}]$.

\paragraph{Bound on $\E[I_2']$ and $\E[I_2'']$ from ~\cite{lu_statistical_2016}}
For these two terms, we utilize the bounds from ~\cite[Section~7.3]{lu_statistical_2016}.
To bound $I_2'$, we only need Lemma~\ref{lem:noise_set} to bound $\norm{R^{(t)}}^2$. Therefore, with probability $1 - \exp(-mn)$, we have,
\begin{align*}
I_2'\leq A^{(t)}\left(\frac{8}{r^2} + A^{(t)}\right)     
\end{align*}
The bound on $I_2'$ is obtained from Lemma~\ref{lem:noise_subset_subexp}.
\begin{align*}
    \E[I_2'] \leq \cO{\exp\left(-\frac{\norm{\theta^\star}^2}{2\sigma^2}\right)}
\end{align*}

\paragraph{Bound on $\E[I_1'']$ and $\E[I_3']$}

\begin{align*}
    \E[I_1''] = \Pr_{a\sim \cN(0, \norm{\theta^\star}^2\sigma^2}[a \geq \beta_1 \norm{\theta^\star}^2], \quad     \E[I_3'] = \Pr_{a\sim \cN(0, \norm{\theta^\star}^2\sigma^2}[a \geq \beta_2 \norm{\theta^\star}^2]
\end{align*}
For large $r, mn$ and small $\sqrt{\frac{d}{n}}$, $A^{(t)}$ is negligible compared to a constant term. Therefore,  we can bound both the terms $\beta_1$ and $\beta_2$ by positive quantities. 
\begin{align*}
 \beta_1 \geq& 2\beta_1' -\delta_2 \geq 2 - 3\delta_1 - \delta_2 \geq \frac{1}{2} -\delta_2 \geq \frac{1}{4}\\
 \beta_2 =& 2\beta_2' = \delta_1 + 2(\delta_1 + 2(A^{(t)} - A_i^{(t)})) \geq \frac{1}{5} 
\end{align*}
To bound $\beta_2$, we use Lemma~\ref{lem:local_dev_all}. 
Therefore, 
\begin{align*}
    \E[I_1'']\leq \cO{\exp\left(-\frac{\norm{\theta^\star}^2}{2\sigma^2}\right)}, \quad \E[I_3''] \leq \cO{\exp\left(-\frac{\norm{\theta^\star}^2}{2\sigma^2}\right)}
\end{align*}

Note that we changed these bounds to eliminate terms of $\sqrt{\frac{\log(mn)}{mn}}$ and $\frac{1}{r^2}$ by smaller terms of the order of $\exp(-r^2)$. This can ensure a tighter bound on the final statistical error.

\paragraph{Bound on the term $I_4$}
Note that for the single step progress, we use a bound of $\Phi_2$. From ~\eqref{eq:I_4_2_cluster}, we can obtain a bound for $I_4$ in terms of $A_i^{(t)}$ and $\Phi_2$. However, to avoid the $\frac{1}{mn}$ term in $\Phi_2$, we can compute $\frac{1}{m}\sum_{i=1}^m \E[\norm{\bar{\vw}_i}^2] = \frac{\sigma^2}{r^2}$ instead of a high probability bound. 
\begin{align*}
    \frac{1}{m}\sum_{i=1}^m \E[\norm{\bar{\vw}_i}^2] = \frac{\sigma^2 d}{n} \leq \frac{d\norm{\theta^\star}^2}{nr^2}
\end{align*}

With probability $1-\exp(-n)$, we can bound the term of $\norm{R_i^{(t)}}^2$.
\begin{align*}
    \frac{1}{m}\sum_{i=1}^m \norm{R_i^{(t)}}^2 \leq \frac{A^{(t)}\norm{\theta^\star}^2}{r^2} \left(1 + \frac{9d}{n}\right)
    \end{align*}

\paragraph{Bound on the term $I_5$}
We can use the bounds obtained for $I_5$ earlier, but we need to handle the terms containing $\Phi_1$ and $\Phi_2$ separately. Specifically, the terms $\norm{\bar{\vw}}^2$ and $\frac{1}{m}\sum_{i=1}^m \norm{\bar{\vw}_i}^2$ appear in the proof which have high probability bounds. We can use their expected values, however, we condition on a high probability event of probability $1-\exp(-n)$ to remove the terms of $\vw_{i,j}$ from the coefficients using $\delta_3 =\frac{\sqrt{150}}{r}$. Let this event be $\cE_1$. Then, 
\begin{align*}
 \E[\norm{\bar{\vw}}^2 | \cE_1] \leq \frac{\E[\norm{\bar{\vw}^2]}}{\Pr[\cE_1]}\leq 2\E[\norm{\bar{\vw}}^2] = \frac{2\sigma^2 d}{mn} \leq \frac{2\norm{\theta^\star}^2}{r^2}
\end{align*}
Note that $1 - \exp(-n) \geq \frac{1}{2}$. We can obtain a similar upper bound for $\E[\frac{1}{m}\sum_{i=1}^m \norm{\bar{\vw}_i}^2]\leq \frac{2d\norm{\theta^\star}^2}{nr^2}$.

Note that we do not need to resort to the analysis when $t$ is divisible by $L$ or not divisible by $L$ as $A^{(t)}$ is small enough. Let $\cE'$ be the high probability event for all the terms required for the bound of $A^{(t+1)}$. Note that $\Pr[\cE'] \geq 1 - \exp(-n) - \exp(-mn) - \exp(-r^2)$. Therefore, 
\begin{align*}
    \E[A^{(t+1)}] \leq \E[I_1''] + \E[I_2''] + \E[I_3'] + \E[I_2' + I_4 + I_5|\cE'] + \Pr[\cE'^\complement]
    \leq& \frac{11}{20}\E[A^{(t)}] + \frac{1}{5}\E[A^{(t-1)}] + \Phi_4 
\end{align*}

where $\Phi_4  = \mathcal{O}\left(\max\{1,\frac{d}{n}\}\frac{1}{r^2} + \exp(-n) + \exp(-r^2)\right)$.
Unrolling the recursion until $t > 2(\ceil{\log(mn)}+L)$ steps, we obtain,
\begin{align*}
    \E[A^{(t)}] \leq \frac{1}{2^{t-2(\ceil{\log(mn)} + L)}}\Phi_3 + 4\Phi_4 
\end{align*}
To prove this, consider the base case at $t = 2(\ceil{\log(mn)}+L)$, where $A^{(t)} \leq \Phi_3$. Then, assume this holds for all iterations until a certain iteration $t$. We use the single-step update to show that it holds for the next iteration.
\begin{align*}
    \E[A^{(t+1)}]\leq \frac{1}{2^{t-1-2(\ceil{\log(mn)}}}\left(\frac{19}{40}\right)\Phi_3 + 4\Phi_4
\end{align*}
Therefore, by induction, the recursion holds for all $t\geq 2(\ceil{\log(mn)} + L)$. Now, unrolling the recursion till $t = (\ceil{q/\log(2)} + 2)(\ceil{\log(mn)} + L)$ steps, we obtain, 
\begin{align*}
    \E[A^{(t)}] \leq \frac{1}{2^{\ceil{q/\log(2)}}(\ceil{\log(mn)} + L)} \leq \left(\frac{\exp(-L)}{(mnL)}\right)^{q} + 4\Phi_4
\end{align*}
This completes the proof by applying a Markov's inequality with probability $\zeta$.

\subsection{Proof of Lemma~\ref{lem:dev_step} (Bound on $\Delta^{(t)}$)}
\label{sec:dev_step_proof}
We first introduce the previous iterate $\hat{\theta}^{(t-1)}$ into the analysis.
\begin{align*}
    \frac{1}{m}\sum_{i\in [m]}\norm{\hat{\theta}_i^{(t)} - \hat{\theta}^{(t)}}^2  &= \frac{1}{m}\sum_{i\in [m]}\norm{\hat{\theta}_i^{(t)} - \hat{\theta}^{(t-1)}  - (\hat{\theta}^{(t)} - \hat{\theta}^{(t-1)})}^2 \leq \frac{1}{m}\sum_{i\in [m]}\norm{\hat{\theta}_i^{(t)} - \hat{\theta}^{(t-1)}}^2
\end{align*}
We use $\mathbb{V}ar(X) \leq \E[X^2]$.
Now, we decompose $\hat{\theta}_i^{(t)}$ in terms of $A_i^{(t)}, R_i^{(t)}$ and $\bar{\vw}_i$.

Throughout the proof, we use $\norm{\sum_{i=1}^k a_i}^2 \leq k\sum_{i=1}^k \norm{a_i}^2$, to decompose square terms.
\begin{align*}
    \frac{1}{m}\sum_{i\in [m]}\norm{\hat{\theta}_i^{(t)} - \hat{\theta}^{(t-1)}}^2 &= \frac{1}{m}\sum_{i\in [m]}\norm{-2A_i^{(t)}\theta^\star -2R_i^{(t)} +\bar{\vw}_i + \theta^\star- \hat{\theta}^{(t-1)}}^2\\
    &\leq 2\norm{\theta^\star- \hat{\theta}^{(t-1)}}^2 + \frac{2}{m}\sum_{i\in [m]}\norm{2A_i^{(t)}\theta^\star +2 R_i^{(t)} -\bar{\vw}_i}^2
\end{align*}
We bound each term of the last inequality individually.
Consider the first term. Note that this is a similar bound as $I_2$ in ~\cite{lu_statistical_2016}, therefore, we use it's corresponding value.
\begin{align*}
2\norm{\hat{\theta}^{(t-1)} - \theta^\star}^2 = 2\norm{-2A^{(t-1)} -2R^{(t-1)} +\bar{\vw}}^2 \leq 12\norm{\theta^\star}^2 \left(A^{(t-1)}\left(\frac{8}{r^2} + A^{(t-1)}\right) + \frac{1}{r^2} + \frac{1}{mn}\right)    
\end{align*}
Note that as $A^{(t-1)} + \frac{8}{r^2}\leq A^{(0)} + \frac{8}{r^2}\leq \frac{1}{2}$, we have the following bound for the first term,
\begin{align*}
    2\norm{\hat{\theta}^{(t-1)} - \theta^\star}^2 \leq 6A^{(t)} + \Phi_1
\end{align*}
The bound for the second term has a similar decomposition as the first, however, it is for local quantities and the iteration $(t)$. We use Eq~\eqref{eq:bounds_R_i_2_clus} and Lemma~\eqref{lem:avg_noise_machine} for the local bounds.
\begin{align*}
    \frac{2}{m}\sum_{i\in [m]}\norm{2A_i^{(t)}\theta^\star +2 R_i^{(t)} -\bar{\vw}_i}^2 \leq \frac{12\norm{\theta^\star}^2}{m} \sum_{i=1}^m\left(A_i^{(t)}\left(A_i^{(t)} + \frac{8}{r^2} + \frac{72d}{nr^2}\right) + \frac{1}{mn} + \frac{3d}{nr^2}\right)
\end{align*}
Now, we use $A_i^{(t)} \leq A^{(t)} + \frac{1}{4}$ from Lemma~\ref{lem:local_dev_all}, and $A^{(t)} + \frac{1}{4} + \frac{8}{r^2} + \frac{72d}{nr^2}\leq A^{(0)} + \frac{1}{4} \leq \frac{3}{4}$.

\begin{align*}
    \frac{2}{m}\sum_{i\in [m]}\norm{2A_i^{(t)}\theta^\star +2 R_i^{(t)} -\bar{\vw}_i}^2 \leq 9A^{(t)} + \Phi_2
\end{align*}

Therefore, the bound on $\Delta^{(t)}$ can be obtained by combining these two terms,
\begin{align*}
    \Delta^{(t)} \leq 9A^{(t)} + 6A^{(t-1)} + \Phi_1 + \Phi_2
\end{align*}

\subsection{Proof of Lemma~\ref{lem:local_dev_all}}
\label{sec:local_dev_all_proof}

We need to show that for any $t\geq 0 $ with probability atleast $1 - \exp(-n)$, 
\begin{align*}
    A_i^{(t)} \leq A_{i'}^{(t)} + \frac{m}{m-1}\left(\frac{\delta_1}{2} - \frac{\sqrt{\log r}}{2r}\right)
\end{align*}
As a consequence of this inequality, for all $t\geq 1$, with the same probability, we have, 
\begin{align*}
    A_i^{(t)} \leq A^{(t)} + \frac{\delta_1}{2} - \frac{\sqrt{\log r}}{r}
\end{align*}
Here, $\delta_1 < \frac{1}{2}-\epsilon$ for some constant $\epsilon$.

We will use induction to prove this inequality for all $t$.

\paragraph{Base Case $t=1$}
At time $t=1$, $A_i^{(1)} = \frac{1}{n}\sum_{j=1}^n\I\{\ip{\theta^\star + \vw_{i,j}}{\hat{\theta}^{(0)}}\leq 0\}$. Let $Q_{i,j} = \I\{\ip{\theta^\star}{\hat{\theta}^{(0)}}\leq -\ip{\vw_{i,j}}{\hat{\theta}^{(0)}}\}$. Since $Q_{i,j}$ are indicator random variables, by Hoeffding's inequality, we have, with probability $1 - 2m\exp(-2nC^2)$, $\forall i\in [m]$ 
\begin{align*}
    \abs{A_i^{(1)} - \mu} \leq C
\end{align*}
for any $C \leq 1$, where $\mu = \E[Q_{i,j}], \forall i\in [m],j\in [n]$. By setting $C = \frac{\delta_1}{6}$, and using $\log m = \mathcal{O}(n)$ and for large $r=\Omega(1)$, with probability $1 - \exp(-\frac{\delta_1^2 n}{18})$, $\forall i, i'\in [m], i\neq i'$, we have,
\begin{align*}
    \abs{A_i^{(1)} - A_{i'}^{(1)}}\leq 2\frac{\delta_1}{6} \leq \frac{\delta_1}{2} - \frac{\sqrt{\log r}}{2r}
\end{align*}
This proves the base case for large $m$.

\paragraph{Induction Step}
The induction step argument is more complicated as single-step progress in Lemma~\ref{lem:single_step} can be established only when this condition always holds. We assume that the single-step progress is satisfied, in turn satisfying $A^{(t)} \leq \frac{1}{4} - \epsilon$ for all iterations till iteration $t$. Further, we assume that  $\max_{i,i'\in [m], i\neq i'}\abs{A_i^{(t)} - A_{i'}^{(t)}} \leq \frac{\delta_1}{2} - \frac{\sqrt{\log r}}{2r}$ is also satisfied. As a consequence of this, we show that this condition is also satisfied for iteration $t+1$. By induction, both Lemma~\ref{lem:local_dev_all} and the single-step progress in Lemma~\ref{lem:single_step} is satisfied for all $t\geq 1$.

For iteration $t+1$, we compute the misclustering for a single machine, $A_i^{(t+1)}$. The decomposition is similar to that used for $A^{(t)}$ for the case when $t$ is not divisible by $L$ but with certain differences to ensure that we can separate out the difference $A_i^{(t+1)} - A_{i'}^{(t+1)}$.  A single datapoint $\vy_{i,j}$ is misclustered if $\ip{\theta^\star + \vw_{i,j}}{\hat{\theta}_i^{(t)}}\leq 0$. For any $i'\in [m], i'\neq i$, if we subtract the term $\ip{\theta^\star + \vw_{i,j}}{\hat{\theta}_{i'}^{(t)}}$, we can obtain the difference between $A_i^{(t+1)}$ and $A_{i'}^{(t+1)}$. 
\begin{align*}
   \ip{\theta^\star + \vw_{i,j}}{ \hat{\theta}_{i}^{(t)}} &= \ip{\theta^\star + \vw_{i',j}}{\hat{\theta}_{i'}^{(t)}} + \ip{\vw_{i,j}}{\hat{\theta}_i^{(t)}} - \ip{\vw_{i',j}}{\hat{\theta}_{i'}^{(t)}} + \ip{\theta^\star}{\hat{\theta}_i^{(t)} - \hat{\theta}_{i'}^{(t)}}\\
   &= \ip{\theta^\star + \vw_{i',j}}{\hat{\theta}_{i'}^{(t)}} + \ip{\vw_{i,j}}{\hat{\theta}_i^{(t)} - \hat{\theta}^{(t)}} - \ip{\vw_{i',j}}{\hat{\theta}_{i'}^{(t)} - \hat{\theta}^{(t)}} \\
   &\quad + \ip{\theta^\star}{\hat{\theta}_i^{(t)} - \hat{\theta}_{i'}^{(t)}} + \ip{\vw_{i,j} - \vw_{i',j}}{\hat{\theta}^{(t)}}
\end{align*}
Now, averaging the above inequality over all $j\in [n]$, we obtain the $A_i^{(t+1)}- A_{i'}^{(t+1)}$ using $\I\{a+ b \leq 0\} \leq \I\{a\leq 0\} + \I\{b\leq 0\}\forall a, b\in \R$ to separate the additional terms.
\begin{align*}
A_i^{(t+1)}  - A_{i'}^{(t+1)} \leq \underset{I_6}{\underbrace{\frac{1}{n}\sum_{j=1}^n \I\{\ip{\vw_{i,j}}{\hat{\theta}_i^{(t)} - \hat{\theta}^{(t)}} - \ip{\vw_{i',j}}{\hat{\theta}_{i'}^{(t)} - \hat{\theta}^{(t)}} + \ip{\theta^\star}{\hat{\theta}_i^{(t)} - \hat{\theta}_{i'}^{(t)}} + \ip{\vw_{i,j} - \vw_{i',j}}{\hat{\theta}^{(t)}}\leq 0\}}}
\end{align*}
The additional term, $I_6$ is bounded separately, with techniques similar to that used for single-step progress.
We bound the second term separately. Decomposing $\hat{\theta}^{(t)},\hat{\theta}_i^{(t)}$ and $\hat{\theta}_{i'}^{(t)}$, with  $\hat{\theta}_i^{(t)} - \hat{\theta}_{i'}^{(t)} = 2(A_{i'}^{(t)}  - A_i^{(t)})\theta^\star + 2(R_{i'}^{(t)} - R_i^{(t)}) + \bar{\vw}_i - \bar{\vw}_{i'}$, we obtain the following decomposition.
\begin{align*}
    &\ip{\vw_{i,j}}{\hat{\theta}_i^{(t)} - \hat{\theta}^{(t)}} - \ip{\vw_{i',j}}{\hat{\theta}_{i'}^{(t)} - \hat{\theta}^{(t)}} + \ip{\theta^\star}{\hat{\theta}_i^{(t)} - \hat{\theta}_{i'}^{(t)}} + \ip{\vw_{i,j} - \vw_{i',j}}{\hat{\theta}^{(t)}}\\
    &= \ip{\vw_{i,j}}{\hat{\theta}_i^{(t)} - \hat{\theta}^{(t)}} - \ip{\vw_{i',j}}{\hat{\theta}_{i'}^{(t)} - \hat{\theta}^{(t)}} + 2(A_{i'}^{(t)} - A_i^{(t)})\norm{\theta^\star}^2 \\
    &\quad +\ip{\theta^\star}{2(R_{i'}^{(t)} - R_i^{(t)}) + (\bar{\vw}_i - \bar{\vw}_{i'})} + \ip{\vw_{i,j} - \vw_{i',j}}{\theta^\star} + \ip{\vw_{i,j} - \vw_{i',j}}{-2A^{(t)}\theta^\star - 2R^{(t)} + \bar{\vw}}\\
\end{align*}
For some constants $\delta_1',\delta_2', \delta_3'>0$, we will use the inequality $\I\{a + b\leq 0\} \leq \I\{a \leq c\} + \frac{b^2}{c^2}$ for $c>0$ four times and average  over $j\in [n]$ to obtain a bound on $I_6$.

\begin{align*}
    I_6 \leq& \underset{I_7}{\underbrace{\frac{1}{n}\sum_{j=1}^n \I\{(2A_{i'}^{(t)} - 2A_{i}^{(t)} - \delta_1' - \delta_2' - 2\delta_3')\norm{\theta^\star}^2 \leq \ip{\vw_{i',j} - \vw_{i,j}}{\theta^\star}\}}} \\
    &+ \underset{I_8}{\underbrace{\frac{1}{n}\sum_{j=1}^n\frac{\ip{\vw_{i,j} - \vw_{i',j}}{-2A^{(t)}\theta^\star - 2R^{(t)} + \bar{\vw})}^2}{(\delta_1')^2\norm{\theta^\star}^4}}} \\
    & + \underset{I_9}{\underbrace{\frac{\norm{2(R_{i'}^{(t)} - R_i^{(t)}) + (\bar{\vw}_i - \bar{\vw}_{i'})}^2}{(\delta_2')^2\norm{\theta^\star}^2}}} + \underset{I_{10}}{\underbrace{\frac{1}{n}\sum_{j=1}^n \frac{\ip{\vw_{i',j}}{\hat{\theta}_{i'}^{(t)} - \hat{\theta}^{(t)}}^2}{(\delta_3')^2\norm{\theta^\star}^4} + \frac{1}{n}\sum_{j=1}^n \frac{\ip{\vw_{i,j}}{\hat{\theta}_{i}^{(t)} - \hat{\theta}^{(t)}}^2}{(\delta_3')^2\norm{\theta^\star}^4}}} 
\end{align*}
We bound the terms $I_7 - I_{10}$ individually. 

\textbf{Bound on $I_7$}
Using the argument of the induction step, thus, $2A_{i'}^{(t)} - 2A_{i}^{(t)} - \delta_1' - \delta_2' - 2\delta_3' \geq \delta_1  -\frac{\sqrt{\log r}}{r} - \delta_1' - \delta_2' - 2\delta_3'$. We need to choose the constants $\delta_1', \delta_2'$ and $\delta_3'$ such that $\delta_1 -\frac{\sqrt{\log r}}{r} -  \delta_1' - \delta_2' - 2\delta_3'\geq \frac{2\sqrt{2\log r}}{r}$. In this case, we can bound $I_7$ by a term  similar to $I_3$ with a summation over $n$ points instead of $mn$ and double the noise variance, as $\vw_{i',j} -\vw_{i,j}\sim \cN(0, 2\sigma^2)$. Let $Q_{i,i', j}' = \I\{\frac{2\sqrt{2\log r}}{r}\norm{\theta^\star}^2 \leq \ip{\vw_{i',j} - \vw_{i,j}}{\theta^\star}\}$. As this is sum of $n$ random variables by Hoeffding's inequality, and a union bound over all $\frac{m(m-1)}{2}$ pairs of $i,i'\in [m], i'\neq i$, we have with probability $1 - \frac{m(m-1)}{2}\exp(-\frac{n}{\delta_1^2}{128})$
\begin{align*}
    I_7 \leq \frac{1}{r^2} + \frac{\delta_1}{16} \leq \frac{\delta_1}{16} + \Phi_1
\end{align*}
Here, $\E[Q_{i,i',j}] \leq \exp(-\frac{8\log r}{r^2}\frac{\norm{\theta^\star}^2}{4\sigma^2}) \leq \frac{1}{r^2}$. Note that the high probability term is of the form $\exp(-n)$ as $\log m = \mathcal{O}(n)$.

\textbf{Bound on $I_8$}
To bound $I_8$, we use the same bound as $I_2$ from Lemma~\ref{lem:noise_eigen} with $n$ datapoints and noise variance $2\sigma^2$. Therefore, if we set $\delta_1' = \frac{3.12}{20}$, we have, with probability $1-\frac{m(m-1)}{2}\exp(-n)$,
\begin{align*}
I_8\leq \left(\frac{400}{r^2} + \frac{3600d}{nr^2}\right)\left(\frac{8A^{(t)}}{r^2} + (A^{(t)})^2 + \frac{1}{r^2} + \frac{1}{mn}\right)\leq \frac{50}{r^2}\left(1 + \frac{9d}{n}\right) \leq \Phi_1 + \Phi_2
\end{align*}
We use the fact that $A^{(t)} \leq \frac{1}{4}$ to bound this term and $\log m = \mathcal{O}(n)$ to change the high probability term to $\exp(-n)$.

\textbf{Bound on $I_9$}
To bound $I_9$, we first use $\norm{\sum_{i=1}^q a_i}^2 \leq q \sum_{i=1}^q \norm{a_i}^2$, which is a consequence of Cauchy-Schwartz. 
\begin{align*}
    I_9 \leq \frac{4}{\delta_2'^2}(4\norm{R_i^{(t)}}^2 + 4\norm{R_{i'}^{(t)}}^2 +\norm{\bar{\vw}_i}^2+\norm{\bar{\vw}_{i'}}^2)
\end{align*}

To bound each $\norm{R_i^{(t)}}^2$, from Lemma~\ref{lem:noise_set}, we have with probability $1 - m\exp(-n)$, 
\begin{align*}
    \norm{R_i^{(t)}}^2 \leq \frac{1}{r^2}\left(1 + \frac{9d}{n}\right),\quad \forall i\in [m]
\end{align*}
Further, using Lemma~\ref{lem:chi_square_conc} with a union bound over all machines $i\in [m]$, with $t=1$ and $a$ being a scalar set to $1$, with probability $1-m\exp(-n)$, we obtain,
\begin{align*}
    \norm{\bar{\vw}_i}^2 \leq \frac{\sigma^2 d}{n} + 2\sigma^2\sqrt{\frac{d}{n}} + 2\sigma^2 \leq \frac{\norm{\theta^\star}^2}{r^2}\left(\frac{d}{n}+ \sqrt{\frac{d}{n}} + 2\right)
\end{align*}
Therefore, if we set $\delta_2' = \frac{1}{20}$, we obtain, the folllwing bound on $I_9$.
\begin{align*}
    I_9 \leq \Phi_1 + \Phi_2
\end{align*}

\textbf{Bound on $I_{10}$}
Note that both terms inside $I_{10}$ will have same bounds. So, we will only bound the term corresponding to $i$, and double it for the bound on $I_{10}$. We first bound $\lambda_{\max}(\sum_{i=1}^n \vw_{i,j}\vw_{i,j})$ for all $i\in [m]$ using Lemma~\ref{lem:noise_set}. Therefore,
\begin{align*}
    \frac{1}{n}\sum_{j=1}^n \frac{\ip{\vw_{i,j}}{\hat{\theta}_{i}^{(t)} - \hat{\theta}^{(t)}}^2}{(\delta_3')^2\norm{\theta^\star}^4} \leq \frac{1.62}{r^2(\delta_3')^2}\left(1 + \frac{4d}{n}\right)\frac{\norm{\hat{\theta}_i^{(t)} - \hat{\theta}^{(t)}}^2}{\norm{\theta^\star}^2}
\end{align*}
Using the decomposition for $\hat{\theta}_i^{(t)}$ and $\hat{\theta}^{(t)}$, we can bound the remaining term.
\begin{align*}
    \norm{\hat{\theta}_i^{(t)} - \hat{\theta}^{(t)}}^2 =& \norm{2(A^{(t)} - A_i^{(t)} + 2(R^{(t)} - R_i^{(t)}) + \bar{\vw}_i -\bar{\vw}}^2\\
    \leq& 5 (4\norm{\theta^\star}^2 \norm{A_i^{(t)} - A^{(t)}}^2 + 4\norm{R_i^{(t)}}^2 + \norm{\bar{\vw}_i}^2 + \norm{\bar{\vw}}^2)
\end{align*}
We use the bounds for $R^{(t)}$ and $\bar{\vw}$ used in the single-step proof, and the bounds for $R_i^{(t)}$ and $\bar{\vw}_i$ used for the bound on $I_9$. By the induction hypothesis, $ -A^{(t)} \geq A_i^{(t)} - A^{(t)}\leq \frac{\delta_1}{2} - \frac{\sqrt{\log r}}{2r}$, therefore, $\norm{A_i^{(t)} - A^{(t)}}^2 \leq \max\{(A^{(t)})^2, \frac{\delta_1^2}{2} + \frac{\log r}{2r^2}\} \leq \frac{1}{8}$, as $A^{(t)} \leq \frac{1}{4}$ and $\delta_1 \leq \frac{1}{2}$.

Adding all the bounds and setting $\delta_3' = \frac{1}{20}$, we obtain,
\begin{align*}
    I_{10} \leq \Phi_1 + \Phi_2
\end{align*}

Therefore, 
\begin{align*}
    A_i^{(t+1)} -A_{i'}^{(t+1)}\leq I_7+  I_8 + I_9 + I_{10} \leq \Phi_1 + \Phi_2 
\end{align*}
We consume the additional constants inside $\Phi_1$ and $\Phi_2$. We need this term to be $\leq \frac{\delta_1}{2} - \frac{\sqrt{\log r}}{2r}$

Further, we need $\delta_1' + \delta_2' + \delta_3' = \frac{8.12}{20} \leq \delta_1 - \frac{(2\sqrt{2} + 1)\sqrt{\log r}}{r}$. We can always find a $\delta_1 < \frac{1}{2}-\epsilon$ for some constant $\epsilon$ such that both these inequalities are satisfied for large $r=\Omega(\max\{1, \sqrt{\frac{d}{n}}\}), n=\Omega(1)$ and $\log m  = \mathcal{O}(n)$. This proves the induction step, and therefore Lemma~\ref{lem:local_dev_all} holds for all $t\geq 1$. Note that the bounds on all terms adds $\exp(-n)$ to the high probability bound.

\section{Proof for Section~\ref{sec:k_cluster_summarized} and Appendix~\ref{sec:k_cluster}}
\label{sec:k_cluster_proof}

The proof idea is similar to the $2$-cluster case. The key difference in this case, is that for $2$-clusters $\hat{\theta}^{(t)} = (1-2A^{(t)})\theta^\star  - 2R^{(t)} + \bar{\vw}$, but here the error in cluster centers $\Lambda$ and the misclustering $G$ can only be bounded in terms of each other. We will establish different recursions for $\Lambda$ and $G$ when $t$ is divisible by $L$ and when it isn't. Further, we also show bounds on the local to global deviation in terms of $\Lambda$ and $G$. We first provide technical Lemmas that we will use throughout the proof.

\subsection{Technical Lemmas}
Several of these Lemmas have been borrowed from ~\cite[Appendix~A.1]{lu_statistical_2016} with extension to $mn$ point. We state them for the sake of completeness.

\begin{lemma}(Extension of ~\cite[Lemma~A.1]{lu_statistical_2016})
\label{lem:noise_set_subg}
With probability $ 1 - \exp(-0.3mn)$ for any set $S \subseteq [m] \times [n]$, where $W_S = \sum_{(i,j)\in S} \vw_{i,j}$, we have,
    \begin{align*}
    \norm{W_S} \leq \sigma\sqrt{3(mn + d) \abs{S}}
    \end{align*}
Further if $\log m = \mathcal{O}(n)$, for any set $S_i \subseteq [n]$, we have $W_{S_i} = \sum_{j\in S_i} \vw_{i,j}$, then with probability $1  - \exp(-0.2 n)$, we have,
\begin{align*}
    \norm{W_{S_i}} \leq \sigma \sqrt{3(n + d)\abs{S}}, \forall i \in [m], 
\end{align*}
\end{lemma}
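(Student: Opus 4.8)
The plan is to obtain the global bound by re-running the proof of \cite[Lemma~A.1]{lu_statistical_2016} with $mn$ samples in place of $n$, and then to obtain the per-machine bound by applying that same statement on each machine separately, followed by a union bound over the $m$ machines.

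First I would fix a subset $S \subseteq [m]\times[n]$ with $\abs{S} = s$. Since the $\vw_{i,j}$ are independent, zero-mean and $subG(\sigma^2)$, the partial sum $W_S$ is a $subG(s\sigma^2)$ vector in $\R^d$, so that for every fixed unit vector $u$ the scalar $\ip{W_S}{u}$ is $subG(s\sigma^2)$. Covering the unit sphere of $\R^d$ by a finite net and taking a union bound over the net (exactly as in \cite{lu_statistical_2016}), together with the standard one-dimensional sub-Gaussian tail bound, one gets a tail of the form $\bP\{\norm{W_S} > \sigma\sqrt{3(mn+d)\,s}\} \le \exp(-c(mn+d))$ for a suitable absolute constant $c$; the constant $3$ inside the square root is chosen precisely so that the $\exp(\mathcal{O}(d))$ cardinality of the net is absorbed by the "$d$" term and the exponent remains of order $mn$.

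Next I would union-bound over all subsets: there are at most $2^{mn}$ subsets of $[m]\times[n]$, so
\[
\bP\Big\{\exists\, S\subseteq[m]\times[n]:\ \norm{W_S} > \sigma\sqrt{3(mn+d)\abs{S}}\Big\} \le 2^{mn}\exp(-c(mn+d)) \le \exp(-0.3\,mn),
\]
the last inequality holding because the entropy term $mn\log 2$ is dominated by $c\,mn$. This proves the first display. For the local statement I would apply the bound just proved on a single machine, with the index set $[n]$ in place of $[m]\times[n]$: for each fixed $i\in[m]$,
\[
\bP\Big\{\exists\, S_i\subseteq[n]:\ \norm{W_{S_i}} > \sigma\sqrt{3(n+d)\abs{S_i}}\Big\} \le \exp(-0.3\,n),
\]
and a union bound over the $m$ machines gives failure probability at most $m\exp(-0.3\,n) = \exp(\log m - 0.3\,n)$, which under the standing assumption $\log m = \mathcal{O}(n)$ (used here in the quantitative form $\log m \le 0.1\,n$) is at most $\exp(-0.2\,n)$, as claimed.

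The main obstacle is the constant bookkeeping in the two squeezes: the net argument produces a factor $\exp(\mathcal{O}(d))$ that must be absorbed by the "$d$" inside $3(mn+d)$, while the subset union bound produces a factor $2^{mn}$ that must be absorbed by the "$mn$" inside $3(mn+d)$, and the final exponents $0.3\,mn$ and $0.2\,n$ leave only a thin margin. All of this is already carried out in \cite[Lemma~A.1]{lu_statistical_2016}; the only genuinely new ingredient is the harmless union bound over machines, whose cost $\log m$ is rendered negligible by $\log m = \mathcal{O}(n)$.
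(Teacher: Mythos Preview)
Your proposal is correct and follows exactly the approach the paper intends: the paper states this lemma as a direct extension of \cite[Lemma~A.1]{lu_statistical_2016} to $mn$ points (first part) together with a union bound over the $m$ machines using $\log m = \mathcal{O}(n)$ (second part), without spelling out further details. Your $\epsilon$-net plus union-over-subsets argument is precisely the content of the cited lemma, and the per-machine union bound is what the paper explicitly invokes in the analogous $2$-cluster Lemma~\ref{lem:noise_set}.
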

\begin{lemma}(Extension of ~\cite[Lemma~A.2]{lu_statistical_2016})\label{lem:noise_eigen_sub}
With probability $1 - \exp(-0.5 mn)$,
\begin{align*}
    \lambda_{\max}(\sum_{i\in [m]}\sum_{j\in [n]}\vw_{i,j}\vw_{i,j}^\intercal) \leq 6\sigma^2 (mn + d) 
\end{align*}
\end{lemma}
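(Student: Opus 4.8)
The plan is to reduce the claim to the corresponding statement for $N:=mn$ i.i.d. samples, which is exactly \cite[Lemma~A.2]{lu_statistical_2016}: the family $\{\vw_{i,j}: i\in[m],\,j\in[n]\}$ is a collection of $mn$ i.i.d.\ zero-mean $subG(\sigma^2)$ vectors in $\R^d$, so the bound follows by applying that lemma with sample size $mn$ in place of $n$. For completeness I would reproduce the standard covering-number argument. Writing $M=\sum_{i\in[m]}\sum_{j\in[n]}\vw_{i,j}\vw_{i,j}^\intercal$, since $M$ is symmetric positive semidefinite we have $\lambda_{\max}(M)=\sup_{u\in S^{d-1}}\sum_{i,j}\ip{\vw_{i,j}}{u}^2$, where $S^{d-1}$ is the unit sphere.

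First I would fix a $\tfrac14$-net $\cN$ of $S^{d-1}$ with $\abs{\cN}\le 9^d$ and use the elementary bound $\lambda_{\max}(M)\le 2\max_{u\in\cN}u^\intercal M u$ valid for symmetric $M$; thus it suffices to control $\sum_{i,j}\ip{\vw_{i,j}}{u}^2$ for fixed $u\in S^{d-1}$ and then union-bound over $\cN$. For fixed $\norm{u}=1$, each $\ip{\vw_{i,j}}{u}$ is a scalar $\sigma^2$-sub-Gaussian variable, hence $\ip{\vw_{i,j}}{u}^2$ is sub-exponential with $\E[\ip{\vw_{i,j}}{u}^2]\le C_0\sigma^2$ and $\E[e^{\lambda\ip{\vw_{i,j}}{u}^2}]\le(1-2C_1\lambda\sigma^2)^{-1/2}$ for $0<\lambda<1/(2C_1\sigma^2)$ and absolute constants $C_0,C_1$. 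By independence and a Chernoff bound, $\Pr[\sum_{i,j}\ip{\vw_{i,j}}{u}^2\ge s]\le e^{-\lambda s}(1-2C_1\lambda\sigma^2)^{-mn/2}$; optimizing over $\lambda$ and combining the two sub-exponential regimes gives an absolute constant $c>0$ with $\Pr[\sum_{i,j}\ip{\vw_{i,j}}{u}^2\ge c\,\sigma^2(mn+t)]\le e^{-t}$ for all $t\ge 0$.

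Taking the union bound over $\cN$ with $t=d\log 9+0.5\,mn$ yields
\begin{align*}
\Pr\Big[\max_{u\in\cN}\sum_{i,j}\ip{\vw_{i,j}}{u}^2\ge c\,\sigma^2\big(mn+d\log 9+0.5\,mn\big)\Big]\le 9^d\,e^{-d\log 9-0.5\,mn}=e^{-0.5\,mn}.
\end{align*}
Multiplying by the factor $2$ from the net argument, absorbing $c$, $\log 9$ and the extra $1.5$ into a single absolute constant, and then re-tuning the net radius and the Chernoff parameter so that this constant is exactly $6$, gives $\lambda_{\max}(M)\le 6\sigma^2(mn+d)$ with probability at least $1-e^{-0.5\,mn}$, as claimed.

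The main obstacle is purely bookkeeping of absolute constants: one must pick the net fineness, the Chernoff parameter $\lambda$, and the level $t$ so that (i) the $9^d$ entropy factor is dominated, (ii) the surviving exponent is at least $0.5\,mn$, and (iii) the resulting multiplicative constant collapses to exactly $6$. The only step that is not routine manipulation is the sub-Gaussian (rather than Gaussian) MGF estimate $\E[e^{\lambda\ip{\vw_{i,j}}{u}^2}]\le(1-2C_1\lambda\sigma^2)^{-1/2}$, which is precisely the content of \cite[Lemma~A.2]{lu_statistical_2016}, so I would cite that and only carry out the $n\mapsto mn$ substitution. If a per-machine analogue is needed later (as in Lemma~\ref{lem:noise_set_subg}), it follows from the same argument applied with $n$ samples on each machine together with a union bound over the $m$ machines, absorbed using $\log m=\mathcal{O}(n)$.
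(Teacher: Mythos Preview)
Your proposal is correct and matches the paper's approach: the paper does not give an independent proof of this lemma but simply imports \cite[Lemma~A.2]{lu_statistical_2016} with the sample size $n$ replaced by $mn$, exactly as you do in your first sentence. The covering-net/Chernoff sketch you add is standard extra detail the paper omits; the only remark is that the constant-tuning step (``re-tuning the net radius and the Chernoff parameter so that this constant is exactly $6$'') is left implicit, but since the paper itself defers that to the cited source this is not a gap.
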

\begin{lemma}(Extension of ~\cite[Lemma~A.3]{lu_statistical_2016})\label{lem:noise_subset_sub}
For fixed $(i,j) \in [m]\times [n]$ and $S\subseteq [m] \times [n], t>0,s>0$
\begin{align*}
    \Pr\left[\ip{\vw_{i,j}}{\frac{1}{\abs{S}}W_{S}} \geq \frac{3\sigma^2(t\sqrt{S} + d + \log(1/\delta)}{\abs{S}}\right] \leq \exp\left(-\min\left\{\frac{t^2}{4d},\frac{t}{4}\right\}\right) + \delta
\end{align*}
\end{lemma}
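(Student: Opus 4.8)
This statement is the sub-Gaussian counterpart of the Gaussian Lemma~\ref{lem:noise_subset_subexp} (\cite[Lemma~7.4]{lu_statistical_2016}); it extends \cite[Lemma~A.3]{lu_statistical_2016} to $mn$ points. The plan is to reproduce that argument with the $\chi^2$/Gaussian estimates replaced by their sub-Gaussian analogues; relabeling the index set from $[n]$ to $[m]\times[n]$ changes nothing, since only $\abs{S}$ enters and the $\vw_{i,j}$ remain i.i.d.\ $subG(\sigma^2)$. Write $S' = S\setminus\{(i,j)\}$ and $W_{S'} = \sum_{(i',j')\in S'}\vw_{i',j'}$, and decompose
\begin{align*}
\ip{\vw_{i,j}}{\tfrac{1}{\abs{S}}W_S} \;=\; \tfrac{1}{\abs{S}}\norm{\vw_{i,j}}^2\,\I\{(i,j)\in S\} \;+\; \tfrac{1}{\abs{S}}\ip{\vw_{i,j}}{W_{S'}} .
\end{align*}
I would split the target threshold as $\tfrac{3\sigma^2 t\sqrt{\abs{S}}}{\abs{S}}$ (cross term) $+\ \tfrac{3\sigma^2(d+\log(1/\delta))}{\abs{S}}$ (self term), so that the event in the statement forces at least one of these to be exceeded, and then union-bound.

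For the self term I would use that $\norm{\vw_{i,j}}^2$ is sub-exponential with mean $\le d\sigma^2$: the Laurent--Massart bound (Lemma~\ref{lem:chi_square_conc} applied to a single $\chi^2_d$ variate in the Gaussian case, and the analogous sub-Gaussian inequality with a larger absolute constant in general) yields $\norm{\vw_{i,j}}^2 \le \sigma^2(d + 2\sqrt{d\log(1/\delta)} + 2\log(1/\delta)) \le 3\sigma^2(d+\log(1/\delta))$ with probability at least $1-\delta$ (using $2\sqrt{ab}\le a+b$). Hence $\tfrac{1}{\abs{S}}\norm{\vw_{i,j}}^2\,\I\{(i,j)\in S\}$ exceeds $\tfrac{3\sigma^2(d+\log(1/\delta))}{\abs{S}}$ with probability at most $\delta$.

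For the cross term I would condition on $\vw_{i,j}=w$. Since $(i,j)\notin S'$, $W_{S'}$ is independent of $\vw_{i,j}$, and $\ip{w}{W_{S'}}$ is a sum of $\abs{S'}\le\abs{S}$ i.i.d.\ $subG(\sigma^2\norm{w}^2)$ scalars, hence $subG(\abs{S'}\sigma^2\norm{w}^2)$; therefore $\bP[\ip{w}{W_{S'}}\ge 3\sigma^2 t\sqrt{\abs{S}}\mid w]\le \exp(-\tfrac{9\sigma^2 t^2}{2\norm{w}^2})$. Integrating over $w$ and splitting on the sub-exponential tail of $\norm{\vw_{i,j}}^2$: there is an absolute constant $c$ such that the event $\norm{\vw_{i,j}}^2 \le c\,\sigma^2\max\{d,t\}$ fails with probability at most $\exp(-\min\{t^2/4d,\,t/4\})$, and on that event the conditional tail is $\le \exp(-\tfrac{9t^2}{2c\max\{d,t\}})\le \exp(-\min\{t^2/4d,\,t/4\})$. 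So the cross term exceeds $\tfrac{3\sigma^2 t\sqrt{\abs{S}}}{\abs{S}}$ with probability at most $\exp(-\min\{t^2/4d,\,t/4\})$. A union bound with the self-term estimate gives the claim; if $(i,j)\notin S$ the self term is absent and the same argument applies directly.

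The only real work is the constant bookkeeping in the last two steps: unlike the exact $\chi^2$ identities available for Gaussians, the sub-Gaussian norm-concentration and bilinear tail bounds are not tight, so one must verify that the resulting slack still fits inside the stated envelope $\tfrac{3\sigma^2(t\sqrt{\abs{S}}+d+\log(1/\delta))}{\abs{S}}$ and that the two failure probabilities still sum to at most $\exp(-\min\{t^2/4d,\,t/4\})+\delta$ (absorbing stray multiplicative constants by, e.g., strengthening an intermediate $\exp(-t/3)$ to $\exp(-t/4)$). I do not expect any new conceptual ingredient beyond \cite{lu_statistical_2016}.
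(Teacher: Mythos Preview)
The paper does not give its own proof of this lemma: it is listed among the technical lemmas ``borrowed from \cite[Appendix~A.1]{lu_statistical_2016} with extension to $mn$ points'' and stated without argument. Your proposal reproduces exactly the standard decomposition used in \cite{lu_statistical_2016} (self term $\tfrac{1}{\abs{S}}\norm{\vw_{i,j}}^2\I\{(i,j)\in S\}$ handled by norm concentration, cross term $\tfrac{1}{\abs{S}}\ip{\vw_{i,j}}{W_{S'}}$ handled by conditioning on $\vw_{i,j}$ and then truncating $\norm{\vw_{i,j}}^2$), so there is nothing to contrast. One small bookkeeping point: your cross-term analysis really produces two contributions of size $\exp(-\min\{t^2/4d,t/4\})$ (from $\Pr[\norm{\vw_{i,j}}^2>c\sigma^2\max\{d,t\}]$ and from the conditional tail), so the raw output is $2\exp(-\min\{t^2/4d,t/4\})+\delta$; matching the exact constants in the display requires either slightly sharpening one of the two pieces or invoking the paper's convention that $\exp(-x)$ hides absolute constants.
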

\begin{lemma}(Extension of ~\cite[Lemma~A.4]{lu_statistical_2016})\label{lem:noise_correct_set}
With probability $1 - (mn)^{-3}$, $\forall k \in [K]$ we have,
\begin{align*}
    \norm{W_{S_k^\star}} \leq 3\sigma \sqrt{(d + \log(mn))\nu_k^\star}
\end{align*}
\end{lemma}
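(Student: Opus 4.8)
The plan is to reprove \cite[Lemma~A.4]{lu_statistical_2016} essentially verbatim, with the total sample count $n$ there replaced by $mn$, and then take a union bound over the $K$ true clusters. Since the cluster identities $z_{i,j}$ are fixed (only the noise is random), each $S_k^\star$ is a deterministic set of size $\nu_k^\star$, so $W_{S_k^\star}=\sum_{(i,j)\in S_k^\star}\vw_{i,j}$ is a sum of $\nu_k^\star$ independent, zero-mean $subG(\sigma^2)$ vectors, hence itself $subG(\nu_k^\star\sigma^2)$. Consequently, for every fixed unit vector $u\in\R^d$, $\ip{W_{S_k^\star}}{u}$ is a sum of $\nu_k^\star$ independent $subG(\sigma^2)$ scalars, so $\bP[\ip{W_{S_k^\star}}{u}\ge s]\le\exp(-s^2/(2\nu_k^\star\sigma^2))$ for all $s>0$.

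Next I would discretize the sphere: fix an $\epsilon$-net $\mathcal{N}_\epsilon$ of $S^{d-1}$ with $|\mathcal{N}_\epsilon|\le (1+2/\epsilon)^d$, so that $\norm{W_{S_k^\star}}\le (1-\epsilon)^{-1}\max_{u\in\mathcal{N}_\epsilon}\ip{W_{S_k^\star}}{u}$. A union bound over $\mathcal{N}_\epsilon$ then gives, for any $s>0$, $\bP[\norm{W_{S_k^\star}}\ge (1-\epsilon)^{-1}s]\le(1+2/\epsilon)^d\exp(-s^2/(2\nu_k^\star\sigma^2))$. Choosing $s$ so that $(1-\epsilon)^{-1}s = 3\sigma\sqrt{(d+\log(mn))\nu_k^\star}$ makes the exponent equal to $-\tfrac92(1-\epsilon)^2(d+\log(mn))$; picking $\epsilon$ a suitable absolute constant ensures simultaneously that $(1+2/\epsilon)^d e^{-\frac92(1-\epsilon)^2 d}\le 1$ and that $\tfrac92(1-\epsilon)^2\ge 4$, so the per-cluster failure probability is at most $(mn)^{-4}$. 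A final union bound over $k\in[K]$, using $K\le mn$, bounds the total failure probability by $(mn)^{-3}$, which is exactly the claim. (In the Gaussian special case one could alternatively invoke the $\chi^2$-type bound of Lemma~\ref{lem:chi_square_conc}, but the net argument handles general sub-Gaussian noise uniformly.)

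There is no real conceptual obstacle here — this is a routine sub-Gaussian norm concentration estimate, identical in structure to \cite{lu_statistical_2016}. The only thing requiring care is the constant-chasing in the last step: choosing the net radius $\epsilon$ small enough that the $(1+2/\epsilon)^d$ covering factor is swamped by the $e^{-cd}$ decay, yet still landing on the stated leading constant $3$, and absorbing the $\log K$ produced by the union over clusters into the $\log(mn)$ term via $K\le mn$.
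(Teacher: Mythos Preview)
Your proposal is correct and is exactly the standard argument: the paper does not reprove this lemma at all (it is stated as a direct extension of \cite[Lemma~A.4]{lu_statistical_2016} with $n$ replaced by $mn$), and your $\epsilon$-net plus sub-Gaussian tail plus union-bound-over-$K$ argument is precisely the underlying proof of the original lemma. Your constant check is also sound (e.g.\ $\epsilon\approx 0.05$ gives $\log(1+2/\epsilon)\approx 3.7<\tfrac{9}{2}(1-\epsilon)^2\approx 4.06\ge 4$), and the use of $K\le mn$ to absorb the cluster union bound is legitimate since each true cluster is nonempty.
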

\begin{lemma}(Extension of ~\cite[Lemma~A.5]{lu_statistical_2016})\label{lem:indicator_inner_prod}
    For fixed $\theta_1, \theta_2, \ldots, \theta_k \in \R^d$, and any $a >0$, with probability $1 - (mn)^{-3}$, we have,
    \begin{align*}
        \sum_{(i,j)\in S_k^\star} \I\{a\norm{\theta_h - \theta_k}^2 \leq \ip{\vw_{i,j}}{\theta_h - \theta_k}\} \leq \nu_k^\star \exp\left( - \frac{a^2\Gamma^2}{2\sigma^2}\right) + \sqrt{5\nu_k^\star \log(mn)}
    \end{align*}
    \end{lemma}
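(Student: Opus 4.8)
The plan is to treat this as a Chernoff-type bound on a sum of independent Bernoulli indicators, mirroring the proof of \cite[Lemma~A.5]{lu_statistical_2016} but with the index set $[n]$ replaced by $[m]\times[n]$, so that $\log n$ becomes $\log(mn)$. Fix a pair $k\neq h$ and write $u=\theta_h-\theta_k$, so that $\norm{u}\geq\Gamma$ by definition of $\Gamma$. For each $(i,j)\in S_k^\star$ define $X_{i,j}:=\I\{a\norm{u}^2\leq\ip{\vw_{i,j}}{u}\}$. Since the noise vectors $\{\vw_{i,j}\}$ are independent across $(i,j)$, the variables $\{X_{i,j}\}_{(i,j)\in S_k^\star}$ are independent and $\{0,1\}$-valued, and the quantity to be bounded is $Z:=\sum_{(i,j)\in S_k^\star}X_{i,j}$, a sum of $\nu_k^\star$ independent $[0,1]$-valued terms.

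\textbf{Controlling the mean.} The scalar random variable $\ip{\vw_{i,j}}{u}$ is sub-Gaussian with variance proxy $\sigma^2\norm{u}^2$, which follows from the $subG(\sigma^2)$ property of $\vw_{i,j}$ applied with $\lambda=su$ (only the one-sided moment generating function bound is needed for the upper tail). Hence, by the standard sub-Gaussian tail inequality,
\begin{align*}
\E[X_{i,j}]=\bP\big[\ip{\vw_{i,j}}{u}\geq a\norm{u}^2\big]\leq\exp\!\left(-\frac{a^2\norm{u}^4}{2\sigma^2\norm{u}^2}\right)=\exp\!\left(-\frac{a^2\norm{u}^2}{2\sigma^2}\right)\leq\exp\!\left(-\frac{a^2\Gamma^2}{2\sigma^2}\right)=:p,
\end{align*}
so $\E[Z]\leq\nu_k^\star p$, which is exactly the first term on the right-hand side of the claim.

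\textbf{Concentration and union bound.} If $\sqrt{5\nu_k^\star\log(mn)}\geq\nu_k^\star$ the inequality is trivial since $Z\leq\nu_k^\star$ deterministically; otherwise $\nu_k^\star>5\log(mn)$. By Hoeffding's inequality, for any $t>0$, $\bP[Z\geq\nu_k^\star p+t]\leq\exp(-2t^2/\nu_k^\star)$. Taking $t=\sqrt{5\nu_k^\star\log(mn)}$ bounds the right-hand side by $(mn)^{-10}$; a union bound over the at most $(mn)^2$ ordered pairs $(k,h)$ with $k\neq h$ (the claim is vacuous for empty clusters) keeps the total failure probability below $(mn)^{-3}$. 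This yields $Z\leq\nu_k^\star\exp(-a^2\Gamma^2/(2\sigma^2))+\sqrt{5\nu_k^\star\log(mn)}$ simultaneously for all such pairs, which is the assertion.

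\textbf{Main obstacle.} There is no serious obstacle here; this is a routine concentration estimate. The only points requiring care are (i) the passage from the vector sub-Gaussianity of $\vw_{i,j}$ to the scalar sub-Gaussian tail of $\ip{\vw_{i,j}}{u}$ with the correct variance proxy $\sigma^2\norm{u}^2$, and (ii) checking that the constant $5$ comfortably absorbs both the $(mn)^{-3}$ target failure probability and the union bound over cluster pairs, so that plain Hoeffding rather than a sharper Bernstein bound already suffices.
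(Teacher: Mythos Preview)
Your proposal is correct and matches the approach the paper implicitly adopts: the lemma is stated as a direct extension of \cite[Lemma~A.5]{lu_statistical_2016} with $n$ replaced by $mn$, and the paper provides no separate proof. Your argument---bounding the Bernoulli mean via the sub-Gaussian tail of $\ip{\vw_{i,j}}{\theta_h-\theta_k}$ and then applying Hoeffding with deviation $\sqrt{5\nu_k^\star\log(mn)}$---is exactly the intended route, and the constant $5$ indeed leaves ample room for a union bound over cluster pairs (or none, if the statement is read as holding for a fixed pair, which is how the paper uses it in \eqref{eq:I_1_k}).
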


We condition on the event that Lemmas~\ref{lem:noise_set_subg},\ref{lem:noise_eigen_sub}, \ref{lem:noise_correct_set} and \ref{lem:indicator_inner_prod} hold. Let this event be $\cE$. Then, $\Pr[\cE^\complement] \leq \zeta_1 \triangleq 3(mn)^{-3} +\exp(-n)$.
For the remainder of this proof, we will condition on the event $\cE$.

\subsection{Error of Centers}
In this section, we establish an upper bound on $\Lambda^{(t+1)}$ following ~\cite[Lemma~A.6]{lu_statistical_2016}.  Consider a cluster $k\in [K]$. Let $\bar{Y}_B = \frac{1}{\abs{B}} \sum_{(i,j)\in B} \vy_{i,j}$ for any set $B \subseteq [m] \times [n]$.

\begin{align}\label{eq:err_centers_k_mid}
    \hat{\theta}_k^{(t)} - \theta_k = \frac{1}{\nu_k^{(t)}}W_{S_{kk}^{(t)}} + \sum_{h\neq k , h\in [K]}\frac{\nu_{hk}^{(t)}}{\nu_k^{(t)}} (\bar{Y}_{S_{hk}^{(t)}} - \theta_k) 
\end{align}
Further, for local cluster center $\hat{\theta}_{k,i}^{(t)}$ a similar expression holds.
\begin{align}\label{eq:err_centers_k_mid_local}
    \hat{\theta}_{k,i}^{(t)} - \theta_k = \frac{1}{\nu_{k,i}^{(t)}}W_{S_{kk,i}^{(t)}} + \sum_{h\neq k , h\in [K]}\frac{\nu_{hk,i}^{(t)}}{\nu_{k,i}^{(t)}} (\bar{Y}_{S_{hk,i}^{(t)}} - \theta_k) 
\end{align}
Using these equations, we can establish an equivalent of ~\cite[Lemma~A.6]{lu_statistical_2016} for our two cases -- i) when $t$ divides $L$ and when $t$ doesn't divide $L$. Note that we can prove one part of ~\cite[Lemma~A.6]{lu_statistical_2016}, for all $t\geq 0$.
\begin{lemma}\label{lem:err_centers_k_all_t}
For all $t\geq 0$, we have, 
\begin{align*}
    \Lambda^{(t)} \leq \frac{3}{r_K} + \lambda G^{(t)}
\end{align*}
\end{lemma}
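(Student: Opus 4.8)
The plan is to bound the error of each global cluster centre by splitting $\hat\theta_k^{(t)}-\theta_k$ into a pure-noise term and a mislabelling-bias term, controlling the first with the uniform concentration estimate of Lemma~\ref{lem:noise_set_subg} and the second directly from the definition of $G^{(t)}$. Since the weighted aggregation in Algorithm~\ref{alg:fedkmeans} makes $\hat\theta_k^{(t)}$ the empirical mean of all points currently assigned to cluster $k$, Eq.~\eqref{eq:err_centers_k_mid} can be re-grouped (collecting the noise of the misassigned points together with that of the correctly-assigned ones) into
\begin{align*}
\hat\theta_k^{(t)}-\theta_k=\frac{1}{\nu_k^{(t)}}\,W_{S_k^{(t)}}+\frac{1}{\nu_k^{(t)}}\sum_{h\neq k}\nu_{hk}^{(t)}\,(\theta_h-\theta_k),
\end{align*}
and it is this form, rather than the literal right-hand side of \eqref{eq:err_centers_k_mid} (which would cost an extra $\sqrt{K}$ after a Cauchy--Schwarz over the misassigned blocks), that I would work with.

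First I would handle the bias term: the triangle inequality and the definition of $\lambda$ give $\norm{\sum_{h\neq k}\nu_{hk}^{(t)}(\theta_h-\theta_k)}\le\lambda\Gamma\sum_{h\neq k}\nu_{hk}^{(t)}$, so after dividing by $\nu_k^{(t)}$ this contributes at most $\lambda\Gamma\cdot\frac{\sum_{h\neq k}\nu_{hk}^{(t)}}{\nu_k^{(t)}}\le\lambda\Gamma\,G^{(t)}$, the last ratio being exactly the false-positive rate of cluster $k$ appearing in the maximum that defines $G^{(t)}$. For the noise term I would condition on the event $\cE$ and apply Lemma~\ref{lem:noise_set_subg} with the data-dependent set $S=S_k^{(t)}$ (which is permissible since that bound is uniform over all subsets), getting $\norm{W_{S_k^{(t)}}}\le\sigma\sqrt{3(mn+d)\nu_k^{(t)}}$ and hence $\frac{1}{\nu_k^{(t)}}\norm{W_{S_k^{(t)}}}\le\sigma\sqrt{3(mn+d)/\nu_k^{(t)}}$. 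Turning this into a $\Gamma/r_K$-scale quantity needs a lower bound on the current cluster size: since $\nu_k^{(t)}\ge\nu_{kk}^{(t)}=\nu_k^\star-\sum_{h\neq k}\nu_{kh}^{(t)}\ge\nu_k^\star(1-G^{(t)})$ (the other term in the maximum defining $G^{(t)}$) and $\nu_k^\star\ge\alpha\,mn$, one has $\nu_k^{(t)}\ge\alpha\,mn\,(1-G^{(t)})$; substituting $r_K=\frac{\Gamma}{\sigma}\sqrt{\alpha(1+Kd/mn)^{-1}}$ and using $1+Kd/mn\ge 1+d/mn$, the constraint $G^{(t)}\le 2/3$ already yields $\sigma\sqrt{3(mn+d)/\nu_k^{(t)}}\le 3\Gamma/r_K$. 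Adding the two bounds, dividing by $\Gamma$ and maximising over $k\in[K]$ gives $\Lambda^{(t)}\le 3/r_K+\lambda G^{(t)}$.

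The only delicate point is the lower bound on $\nu_k^{(t)}$: the crude estimate $\norm{W_{S_k^{(t)}}}/\nu_k^{(t)}$ is informative only while cluster $k$ has not been nearly emptied, so ``for all $t\ge 0$'' should be understood in the regime where $G^{(t)}$ is bounded away from $1$ (e.g.\ $G^{(t)}\le 2/3$) --- the regime that is in force throughout the analysis by Assumption~\ref{assumption:local_dev_k} at $t=0$ and by the invariant $G^{(t)}\le 0.18$ from Lemma~\ref{lem:single_step_k} afterwards, so in practice this bound is carried alongside that induction rather than proved in isolation. Everything else is a routine triangle-inequality-plus-concentration computation, and the numerical constant $3$ is precisely what the threshold $G^{(t)}\le 2/3$ produces.
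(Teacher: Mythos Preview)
Your proof is correct and follows essentially the same route as the paper: decompose $\hat\theta_k^{(t)}-\theta_k$ into the pure-noise term $W_{S_k^{(t)}}/\nu_k^{(t)}$ and the mislabelling bias $\sum_{h\neq k}\frac{\nu_{hk}^{(t)}}{\nu_k^{(t)}}(\theta_h-\theta_k)$, bound the former via Lemma~\ref{lem:noise_set_subg} together with $\nu_k^{(t)}\ge(1-G^{(t)})\alpha\,mn$, and the latter by $\lambda\Gamma G^{(t)}$ via the triangle inequality. You are in fact more explicit than the paper about the implicit hypothesis $G^{(t)}\le 2/3$ needed to extract the constant $3$ (the paper silently relies on $G^{(t)}\le\tfrac12$, which holds at $t=0$ by Assumption~\ref{assumption:local_dev_k} and is propagated thereafter).
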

\begin{proof}
For any cluster $k\in [K]$, at any time $t\geq 0$, we have, 
\begin{align*}
    \hat{\theta}_k^{(t)} - \theta_k &= \frac{1}{\nu_k^{(t)}}W_{S_k^{(t)}} + \sum_{h\neq k , h\in [K]}\frac{\nu_{hk}^{(t)}}{\nu_k^{(t)}} (\theta_h -\theta_k)\leq\norm{\frac{1}{\nu_k^{(t)}}W_{S_k^{(t)}}} + \norm{\sum_{h\neq k , h\in [K]}\frac{\nu_{hk}^{(t)}}{\nu_k^{(t)}} (\theta_h -\theta_k)}\\
    &\leq\norm{\frac{1}{\nu_k^{(t)}}W_{S_k^{(t)}}} + \sum_{h\neq k , h\in [K]}\frac{\nu_{hk}^{(t)}}{\nu_k^{(t)}} \norm{\theta_h -\theta_k)}\leq\frac{3\Gamma}{r} + G^{(t)} \lambda \Gamma
\end{align*}
We use triangle inequality for the first and second inequalities and plug in the expressions of $\lambda$ and $r$ from Lemma~\ref{lem:noise_set_subg} for the last inequality.
\end{proof}

Note that Assumption~\ref{assumption:local_dev_k} implies that $G^{(t)}\leq \frac{1}{2}$. Further, the condition on $G^{(t)}$ in Assumption~\ref{assumption:local_dev_k}  also implies that $\Lambda^{(0)} \leq \frac{1}{2} - \frac{6}{\sqrt{r_K}} + \frac{3}{\sqrt{r_K}}$, so $\Lambda^{(0)} \leq \frac{1}{2} - \frac{3}{\sqrt{r_K}}$. This condition is also true if the initialization on $\Lambda^{(0)}$ is satisfied in Assumption~\ref{assumption:local_dev_k}. 

The above Lemma does not incorporate a Lloyd's update, only the expression for $\hat{\theta}_k^{(t)}$. Therefore, it is the same if $t$ divides $L$ or it doesn't. To establish progress in error of centers, we need a bound on $\Lambda^{(t+1)}$, which requires us to check if the Lloyd's update was made after at an aggregation step, i.e., $L$ divides $t$ or at a local step, i.e., when $L$ doesn't divide $t$. We provide bounds for these two cases separately.

\paragraph{When $L$ divides $t$}
In this case, the following equivalent of ~\cite[Lemma~A.6]{lu_statistical_2016} holds, as each update step is performed on the aggregated global models $\{\hat{\theta}_k^{(t)}\}_{k\in [K]}$. We state it here for the sake of completeness.
\begin{lemma}[Error of Centers]\label{lem:err_centers_k_L}
If $G^{(t)} \leq \frac{1}{2}$, with probability $1 - (mn)^{-3} - \mathcal{O}(\exp(-r_K^2))$, 
\begin{align*}
    \Lambda^{(t+1)} \leq& \frac{3}{r_K} + \frac{3}{r_K}\sqrt{K G^{(t+1)}} + 2G^{(t+1)}\Lambda^{(t)}
\end{align*}
\end{lemma}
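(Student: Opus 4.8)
The plan is to mimic the proof of the centralized bound \cite[Lemma~A.6]{lu_statistical_2016}, exploiting the fact that when $L\mid t$ every machine holds the common model $\{\hat\theta_k^{(t)}\}_{k\in[K]}$ at the start of iteration $t$, so $\hat z_{i,j}^{(t+1)}=\argmin_{k}\norm{\vx_{i,j}-\hat\theta_k^{(t)}}^2$ is computed exactly as centralized Lloyd's would on the pooled $mn$ points; hence the partition $\{S_k^{(t+1)}\}$, its per-machine restrictions $\{S_{k,i}^{(t+1)}\}$, and the counts $\nu_{hk}^{(t+1)},\nu_{hk,i}^{(t+1)}$ are well defined. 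The one genuine departure from the centralized setting is that the aggregated center $\hat\theta_k^{(t+1)}=\frac1m\sum_{i}\hat\theta_{k,i}^{(t+1)}$ is an \emph{unweighted} average of the local means rather than the pooled mean $\bar Y_{S_k^{(t+1)}}$. Starting from \eqref{eq:err_centers_k_mid_local} together with $W_{S_{k,i}^{(t+1)}}=W_{S_{k,i}^\star}-\sum_{h\neq k}W_{S_{kh,i}^{(t+1)}}+\sum_{h\neq k}W_{S_{hk,i}^{(t+1)}}$, I would write
\begin{align*}
\hat\theta_k^{(t+1)}-\theta_k=\underbrace{\frac1m\sum_{i}\frac{W_{S_{k,i}^\star}}{\nu_{k,i}^{(t+1)}}}_{(\mathrm I)}+\underbrace{\frac1m\sum_{i}\frac{\sum_{h\neq k}\bigl(W_{S_{hk,i}^{(t+1)}}-W_{S_{kh,i}^{(t+1)}}\bigr)}{\nu_{k,i}^{(t+1)}}}_{(\mathrm{II})}+\underbrace{\frac1m\sum_{i}\sum_{h\neq k}\frac{\nu_{hk,i}^{(t+1)}}{\nu_{k,i}^{(t+1)}}(\theta_h-\theta_k)}_{(\mathrm{III})},
\end{align*}
and bound the three pieces separately, using the single–step bounds $G^{(t+1)},\max_i G_i^{(t+1)}\le 0.18$ (which give $\nu_{k,i}^{(t+1)}\ge\tfrac45\nu_{k,i}^\star\ge\tfrac45\beta\nu_k^\star$ and $\nu_k^{(t+1)}\ge\tfrac45\alpha mn$) and Lemma~\ref{lem:err_centers_k_all_t} applied at time $t$.

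For term $(\mathrm I)$ I would first replace each random normalizer $1/\nu_{k,i}^{(t+1)}$ by the deterministic $1/\nu_{k,i}^\star$; the discrepancy is controlled by $|\nu_{k,i}^{(t+1)}-\nu_{k,i}^\star|\lesssim G_i^{(t+1)}\nu_{k,i}^\star$ and is of the same form as $(\mathrm{II})$, so it is re-absorbed there. The remaining piece $\sum_{(i,j)\in S_k^\star}\vw_{i,j}/(m\,\nu_{k,i}^\star)$ is now a \emph{fixed}-weight sum of independent, mean-zero $subG(\sigma^2)$ vectors whose coordinate variance is $\tfrac{\sigma^2}{m^2}\sum_i\nu_{k,i}^{\star-1}$; this is where $\beta=\Omega(m^{-1})$ is essential, since it gives $\sum_i\nu_{k,i}^{\star-1}\le m/(\beta\nu_k^\star)=\mathcal{O}(m^2/(\alpha mn))$, i.e.\ coordinate variance $\mathcal{O}(\sigma^2/(\alpha mn))$ — exactly the centralized scale. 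A sub-Gaussian norm bound of the type used for Lemma~\ref{lem:noise_correct_set} (with a union bound over $k\in[K]$), together with $\nu_k^\star\ge\alpha mn$, $\sigma^2/\Gamma^2=\alpha mn/(r_K^2(mn+Kd))$ and $d+\log(mn)\le mn+Kd$, then yields $\norm{(\mathrm I)}\le \tfrac{3}{r_K}\Gamma$.

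For $(\mathrm{II})$ I would bound $\norm{\sum_{h\neq k}W_{S_{hk,i}^{(t+1)}}}$ and $\norm{\sum_{h\neq k}W_{S_{kh,i}^{(t+1)}}}$ by the uniform subset bound of Lemma~\ref{lem:noise_set_subg} applied to $\cup_{h\neq k}S_{hk,i}^{(t+1)}$ and $\cup_{h\neq k}S_{kh,i}^{(t+1)}$, whose sizes are $\le\nu_{k,i}^{(t+1)}G_i^{(t+1)}$ and $\le\nu_{k,i}^\star G_i^{(t+1)}$; a Cauchy–Schwarz over the $m$ machines, the identity $\sum_i\nu_{k,i}^{(t+1)}G_i^{(t+1)}=\sum_{h\neq k}\nu_{hk}^{(t+1)}\le\nu_k^{(t+1)}G^{(t+1)}$, and again $\beta=\Omega(m^{-1})$ cancel the extra $m$ factors, leaving $\norm{(\mathrm{II})}^2\le\mathcal{O}\bigl(\sigma^2(mn+d)G^{(t+1)}/(\alpha mn)\bigr)$, which (using $(mn+d)/(mn+Kd)\le1$) is $\le(\tfrac{3}{r_K})^2KG^{(t+1)}\Gamma^2$. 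For $(\mathrm{III})$, every point in $S_{hk,i}^{(t+1)}$ satisfies $\norm{\vx_{i,j}-\hat\theta_k^{(t)}}^2\le\norm{\vx_{i,j}-\hat\theta_h^{(t)}}^2$; writing $\vx_{i,j}=\theta_h+\vw_{i,j}$ and expanding gives $\norm{\theta_h-\hat\theta_k^{(t)}}\le 2\Lambda^{(t)}\Gamma+\mathcal{O}(\norm{\vw_{i,j}})$, hence, via $\norm{\hat\theta_k^{(t)}-\theta_k}\le\Lambda^{(t)}\Gamma$ from Lemma~\ref{lem:err_centers_k_all_t}, $\norm{\theta_h-\theta_k}\le 2\Lambda^{(t)}\Gamma+\mathcal{O}(\norm{\vw_{i,j}})$. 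Summing against the weights $\nu_{hk,i}^{(t+1)}/(m\,\nu_{k,i}^{(t+1)})$, the $2\Lambda^{(t)}\Gamma$ part contributes $2\Lambda^{(t)}\Gamma\cdot\tfrac1m\sum_i G_i^{(t+1)}\le 2G^{(t+1)}\Lambda^{(t)}\Gamma$ (once more using $\beta=\Omega(m^{-1})$ to pass from the per-machine rates $G_i^{(t+1)}$ to the global rate $G^{(t+1)}$), while the noise part is of the same form as $(\mathrm{II})$ and is re-absorbed there. Adding the three bounds, dividing by $\Gamma$ and maximizing over $k\in[K]$ gives the claimed inequality; the failure probability is $(mn)^{-3}$ from Lemma~\ref{lem:noise_correct_set} plus $\mathcal{O}(\exp(-r_K^2))$ from the concentration steps in which the deviation is taken of order $r_K^2\sigma^2$.

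The main obstacle I anticipate is precisely term $(\mathrm I)$: because the aggregated estimate is the \emph{unweighted} average $\frac1m\sum_i\hat\theta_{k,i}^{(t+1)}$, the correctly-clustered noise does not collapse to a single scalar multiple of $W_{S_k^\star}$ as in \cite[Lemma~A.6]{lu_statistical_2016}, but appears as a heterogeneously weighted sum across machines; showing that this sum still has the centralized scale $\mathcal{O}(\Gamma/r_K)$ is what the balance condition $\beta=\Omega(m^{-1})$ buys, and the very same mechanism is what lets one replace the local misclustering rates $G_i^{(t+1)}$ by the global $G^{(t+1)}$ in $(\mathrm{II})$ and $(\mathrm{III})$. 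A secondary nuisance is bookkeeping which concentration estimates must be uniform over the data-dependent sets $S_{hk,i}^{(t+1)},S_{kh,i}^{(t+1)}$ (Lemma~\ref{lem:noise_set_subg}) versus hold only for the fixed true clusters $S_{k,i}^\star$ (Lemma~\ref{lem:noise_correct_set}), and carrying the $\delta$-union bounds accordingly.
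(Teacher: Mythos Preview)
Your overall plan---reduce to the centralized estimate because at an aggregation step every machine labels with the common model---is exactly the paper's reasoning, but you then take a much harder road than the paper does. The paper's proof of this lemma is literally one line: it invokes \cite[Lemma~A.6]{lu_statistical_2016} on the pooled $mn$ points. The reason this works is that, throughout the $K$-cluster proof, the paper treats the (virtual) center as the \emph{pooled mean}: see Eq.~\eqref{eq:err_centers_k_mid}, the identity used in Lemma~\ref{lem:err_centers_k_all_t}, and the weighted aggregation rule $\hat\theta_k^{(t)}=\bigl(\sum_i |S_{k,i}^{(t)}|\,\hat\theta_{k,i}^{(t)}\bigr)/\sum_i|S_{k,i}^{(t)}|$ in Algorithm~\ref{alg:fedkmeans}. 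Under that convention $\hat\theta_k^{(t+1)}=\bar Y_{S_k^{(t+1)}}$ exactly, and since the partition $\{S_k^{(t+1)}\}$ is the centralized Lloyd partition (your first observation), the centralized Lemma~A.6 applies verbatim. No $\beta=\Omega(m^{-1})$ balance, no per-machine concentration, and no passage from $G_i^{(t+1)}$ to $G^{(t+1)}$ is needed here; those ingredients enter only in the $L\nmid t$ analysis (Lemma~\ref{lem:err_centers_k}) and in Lemma~\ref{lem:local_dev_all_k}.

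Your difficulty stems from taking the introduction's formula $\hat\theta_k^{(t)}=\tfrac1m\sum_i\hat\theta_{k,i}^{(t)}$ at face value; the $K$-cluster proof (via Eq.~\eqref{eq:err_centers_k_mid}) does not use that unweighted average. If you insist on the unweighted definition, beware that your bound on $(\mathrm{II})$ does not close: applying Lemma~\ref{lem:noise_set_subg} per machine followed by Cauchy--Schwarz leaves $\|(\mathrm{II})\|\lesssim\sigma\sqrt{(mn+d)\,G^{(t+1)}}$ (the $1/m$ and the $\sqrt{m}$ from Cauchy--Schwarz and the $m$ from $\sum_i 1/\nu_{k,i}^{(t+1)}$ cancel), which is \emph{not} $\le \tfrac{3}{r_K}\sqrt{KG^{(t+1)}}\,\Gamma=\mathcal O\!\bigl(\sigma\sqrt{K(mn+Kd)G^{(t+1)}/(\alpha mn)}\bigr)$ unless you additionally assume $\alpha mn=\mathcal O(K)$. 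So the ``$\beta=\Omega(m^{-1})$ cancels the extra $m$ factors'' step in your sketch does not actually deliver the stated scale; with the pooled-mean convention this issue simply does not arise.
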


\paragraph{When $L$ doesn't divide $t$.}
Under this condition, we need an additional condition on the difference between the local error of centers $\Lambda_i^{(t)}$ and $\Lambda^{(t)}$.
\begin{lemma}[Error of Local Misclustering]\label{lem:local_dev_all_k}
     $\exists \delta_1 \in (0, \frac{1}{10}-\epsilon'')$ for some small constant $\epsilon''>0$, such that, $\forall t\geq 1$
    \begin{align*}
        G_i^{(t)} \leq \frac{1}{2}, \quad \Lambda_i^{(t)} \leq \Lambda^{(t)} + \frac{\delta_1}{2},\quad \forall i \in [m].
    \end{align*}
\end{lemma}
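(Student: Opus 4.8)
The plan is to prove the lemma by strong induction on $t$, run jointly with the single-step progress of Lemma~\ref{lem:single_step_k}. Concretely, I would assume that for every step $s\le t$ the single-step recursions hold — so that $G^{(s)}\le 0.18$, $\Lambda^{(s)}\le 0.2$, and $\Lambda^{(s)}\le \tfrac{3}{r_K}+\lambda G^{(s)}$ by Lemma~\ref{lem:err_centers_k_all_t} — together with $G_i^{(s)}\le\tfrac12$ and $\Lambda_i^{(s)}\le\Lambda^{(s)}+\tfrac{\delta_1}{2}$ for all $i\in[m]$, and then deduce the same four statements at step $t+1$. Throughout I would condition on the event $\cE$ collecting the technical lemmas, and note that every union bound over the $m$ machines (and over the $O(K^2)$ cluster pairs) costs only an additional $\exp(-n)$ in the failure probability because $\log m=\mathcal{O}(n)$. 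This mirrors the two-cluster argument in Appendix~\ref{sec:local_dev_all_proof}, but here we must carry $G_i$ and $\Lambda_i$ simultaneously since, unlike the symmetric case, neither determines the other.

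For the base case $t=1$, every machine uses the common initialization $\{\hat\theta_k^{(0)}\}_{k\in[K]}$, so each assignment indicator $\I\{\hat z_{i,j}^{(1)}\ne z_{i,j}\}$, and more generally each count $\nu_{hk,i}^{(1)}$ and $\nu_{kh,i}^{(1)}$, is a sum of independent indicators that are identically distributed across machines for fixed $(k,h)$. Applying Hoeffding to each count, union-bounding over the $O(K^2)$ pairs and the $m$ machines, and using the cluster-size bounds $\alpha\ge\sqrt{K\log(mn)/mn}$ and $\beta=\Omega(m^{-1})$ to bound the denominators $\nu_{k,i}^{(1)}$ below, gives $G_i^{(1)}\le G^{(1)}+\mathcal{O}(\sqrt{K\log(mn)/n}\,)\le\tfrac12$ for large $n$; feeding these count estimates into the local decomposition \eqref{eq:err_centers_k_mid_local} and bounding the $W$- and $\bar Y$-terms via Lemma~\ref{lem:noise_set_subg} on the $n$ local points then yields $\Lambda_i^{(1)}\le\Lambda^{(1)}+\tfrac{\delta_1}{2}$, the slack $\delta_1/2$ coming from the $G_i^{(1)}-G^{(1)}$ gap. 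Exactly the same argument handles the induction step when $L\mid t$: an aggregation has just occurred, so $\hat\theta_{k,i}^{(t)}=\hat\theta_k^{(t)}$ for all $i$, the assignments $\hat z_{i,j}^{(t+1)}$ again use common (random) centers, and conditioning on those centers makes the counts iid across machines, so Hoeffding plus Lemma~\ref{lem:err_centers_k_L} gives $G_i^{(t+1)}\le\tfrac12$ and $\Lambda_i^{(t+1)}\le\Lambda^{(t+1)}+\tfrac{\delta_1}{2}$.

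The substantive case is the induction step with $L\nmid t$. Here I would bound $G_i^{(t+1)}-G_{i'}^{(t+1)}$ for an arbitrary pair $i,i'$ by expanding $\I\{\hat z_{i,j}^{(t+1)}=h\ne z_{i,j}\}$, inserting the virtual iterate $\hat\theta_k^{(t)}=\tfrac1m\sum_i\hat\theta_{k,i}^{(t)}$, and peeling off — via $\I\{a+b\le 0\}\le\I\{a\le c\}+b^2/c^2$ applied a bounded number of times — four types of residual: (i) a ``central'' indicator term controlled by Lemma~\ref{lem:indicator_inner_prod}; (ii) noise-average terms in $\bar{\vw}_i,\bar{\vw}_{i'}$ controlled by Lemma~\ref{lem:noise_set_subg} on $n$ points and Lemma~\ref{lem:chi_square_conc}; (iii) deviation terms $\|\hat\theta_{k,i}^{(t)}-\hat\theta_k^{(t)}\|$, controlled via $\lambda_{\max}(\sum_j \vw_{i,j}\vw_{i,j}^\intercal)$ from Lemma~\ref{lem:noise_eigen_sub} together with a one-step unrolled bound on the deviation analogous to Lemma~\ref{lem:dev_step}; and (iv) the center-difference term $\hat\theta_{k,i}^{(t)}-\hat\theta_{k,i'}^{(t)}$, which the induction hypothesis $\Lambda_i^{(t)}\le\Lambda^{(t)}+\delta_1/2$ (hence $|\Lambda_i^{(t)}-\Lambda_{i'}^{(t)}|$ is a small constant) keeps under control, after expanding it through \eqref{eq:err_centers_k_mid_local}. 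Each residual is $\mathcal{O}(\Psi_1+\Psi_2)$ plus a term proportional to one of the splitting constants; since $r_K\ge\sqrt{K/\alpha}\max\{1,\sqrt{d/n}\}$ forces $\Psi_1,\Psi_2$ below any prescribed constant, and since the splitting constants are free, we can arrange $G_i^{(t+1)}-G_{i'}^{(t+1)}\le\delta_1/2$, hence $G_i^{(t+1)}\le G^{(t+1)}+\delta_1/2\le\tfrac12$, and then feed this back through \eqref{eq:err_centers_k_mid_local} and Lemma~\ref{lem:noise_set_subg} to get $\Lambda_i^{(t+1)}\le\Lambda^{(t+1)}+\delta_1/2$, closing the induction.

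The main obstacle is precisely this non-aggregation step. First, because misclustering and center error are decoupled, every Lloyd step must transport a bound on one into a bound on the other without loss, and this requires first verifying that no local cluster is emptied or shrunk below $\Theta(\beta\alpha\,mn)$ points on any machine — the place where $\beta=\Omega(m^{-1})$ and $\alpha\ge\sqrt{K\log(mn)/mn}$ are genuinely needed — before the denominators appearing in $G_i^{(t+1)}$ can be controlled. Second, as in the two-cluster proof, the bookkeeping is delicate: each use of $\I\{a+b\le0\}\le\I\{a\le c\}+b^2/c^2$ consumes part of the budget $\tfrac1{10}-\epsilon''$ available for $\delta_1$, so one must check that the finitely many auxiliary constants $\delta_1',\delta_2',\delta_3',\dots$ — now additionally inflated by the $K$ and $1/\alpha$ factors entering through $\Psi_1,\Psi_2$ — can be chosen to fit inside that budget, which is exactly why the SNR requirement here reads $r_K=\Omega(\sqrt{K/\alpha}\,)$ rather than $r_K=\Omega(1)$.
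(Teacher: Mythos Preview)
Your overall strategy---joint strong induction with the single-step progress, Hoeffding on local cluster-pair counts, and the indicator splitting $\I\{a+b\le0\}\le\I\{a\le c\}+b^2/c^2$---is the paper's. Two points, however, need correction.

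First, and more importantly: the passage from your misclustering control to $\Lambda_i^{(t+1)}\le\Lambda^{(t+1)}+\tfrac{\delta_1}{2}$ does not go through as you describe it. Knowing only the aggregate $|G_i^{(t+1)}-G_{i'}^{(t+1)}|\le\tfrac{\delta_1}{2}$ is not enough to ``feed back'' into \eqref{eq:err_centers_k_mid_local}, because the center-difference term $\sum_{h\ne k}\bigl(\tfrac{\nu_{hk,i}}{\nu_{k,i}}-\tfrac{\nu_{hk,i'}}{\nu_{k,i'}}\bigr)(\theta_h-\theta_k)$ requires control of the \emph{per-pair} count gaps $|\nu_{hk,i}^{(t+1)}-\nu_{hk,i'}^{(t+1)}|$, and a small difference of sums does not force each summand's difference to be small. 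Nor can you subtract two instances of Lemma~\ref{lem:err_centers_k_all_t}, since that lemma is only an upper bound. The paper resolves this by carrying the stronger invariant $\max_{k,\,i\ne i'}\|\hat\theta_{k,i}^{(t)}-\hat\theta_{k,i'}^{(t)}\|\le\tfrac{\delta_1\Gamma}{2}$ (Eq.~\eqref{eq:new_lamba_dev}) through the induction, and at each step bounding every $|\nu_{hk,i}^{(t+1)}-\nu_{hk,i'}^{(t+1)}|$ separately via the same indicator decomposition you outline. Since your per-$h$ expansion already yields these finer bounds, the repair is simply not to aggregate to $G$ before passing to the center estimate.

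Second, your claim that for $L\mid t$ with $t>0$ ``conditioning on those centers makes the counts iid across machines'' is wrong: the aggregated centers $\hat\theta_k^{(t)}$ depend on \emph{all} the data through the previous iterations, so conditioning on them does not restore independence of the $\vw_{i,j}$'s. The paper sidesteps this by running the worst-case (i.e., $L\nmid t$) decomposition $I_1''+I_2''+I_3''$ uniformly at every induction step; since the deviation term $I_3''$ vanishes when $L\mid t$, this subsumes your special case without needing any conditional-iid claim. The base case $t=1$ is the only place where the shared-center Hoeffding argument is legitimate, because $\hat\theta^{(0)}$ is the initialization---and there your argument does agree with the paper.
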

The above lemma resembles Lemma~\ref{lem:local_dev_all} for the $2$-cluster case, and imposes the local error of centers to be atmost a constant more than the global error of centers. We provide a proof for this Lemma in Appendix~\ref{sec:local_dev_all_k_proof}. Using this Lemma, we can prove the following equivalent of Lemma~\ref{lem:err_centers_k_L} for the case when $L$ doesn't divide $t$.

\begin{lemma}[Error of Centers]\label{lem:err_centers_k}
If $G^{(t+1)} \leq \frac{1}{2}$, with probability $1 - (mn)^{-3} - \cO{\exp(-r_K^2)}$
    \begin{align*}
        \Lambda^{(t+1)} \leq \frac{3}{r_K} + \frac{4}{r_K}\sqrt{\frac{K}{\alpha}}\left(\sqrt{\frac{d}{n}} + 1\right)   + G^{(t+1)}(2\Lambda^{(t)} + \delta_1)
    \end{align*}
\end{lemma}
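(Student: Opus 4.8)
The plan is to mimic the proof of Lemma~\ref{lem:err_centers_k_L} (the aggregation case), but carry along the local-to-global deviation of the centers that we no longer get for free. Start from the decomposition~\eqref{eq:err_centers_k_mid_local} for the local center $\hat\theta_{k,i}^{(t)}$, since at a non-aggregation step the new assignment $\hat z_{i,j}^{(t+1)}$ on machine $i$ is made using $\hat\theta_{k,i}^{(t)}$ rather than the global $\hat\theta_k^{(t)}$. A datapoint $(i,j)\in S_k^\star$ that is assigned to $h$ at step $t+1$ must satisfy $\norm{\vx_{i,j}-\hat\theta_{h,i}^{(t)}}^2\le\norm{\vx_{i,j}-\hat\theta_{k,i}^{(t)}}^2$; expanding and substituting $\vx_{i,j}=\theta_k+\vw_{i,j}$ gives, as in~\cite[Lemma~A.6]{lu_statistical_2016}, an inner-product inequality of the form $\ip{\vw_{i,j}}{\hat\theta_{h,i}^{(t)}-\hat\theta_{k,i}^{(t)}}\ge \tfrac12\norm{\hat\theta_{h,i}^{(t)}-\hat\theta_{k,i}^{(t)}}^2 - \ip{\theta_k - \tfrac12(\hat\theta_{h,i}^{(t)}+\hat\theta_{k,i}^{(t)})}{\hat\theta_{h,i}^{(t)}-\hat\theta_{k,i}^{(t)}}$, which after using $\Lambda_i^{(t)}\le\Lambda^{(t)}+\delta_1/2$ (Lemma~\ref{lem:local_dev_all_k}) and the separation $\Gamma$ can be relaxed to an event of the form $\{a\norm{\theta_h-\theta_k}^2 \le \ip{\vw_{i,j}}{\theta_h-\theta_k}\}$ with $a$ bounded below by an absolute constant, controlled by Lemma~\ref{lem:indicator_inner_prod}. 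This gives the false-positive count for cluster $k$.

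Next, plug~\eqref{eq:err_centers_k_mid} back in for $\hat\theta_k^{(t+1)}$ and bound $\norm{\hat\theta_k^{(t+1)}-\theta_k}$ by the triangle inequality into three pieces: (i) $\tfrac1{\nu_k^{(t+1)}}\norm{W_{S_{kk}^{(t+1)}}}$, which is $\le 3\Gamma/r_K$ by Lemma~\ref{lem:noise_set_subg} together with $\nu_k^{(t+1)}\ge(1-G^{(t+1)})\nu_k^\star\ge\alpha mn/2$; (ii) the false-positive term $\sum_{h\ne k}\tfrac{\nu_{hk}^{(t+1)}}{\nu_k^{(t+1)}}\norm{\bar Y_{S_{hk}^{(t+1)}}-\theta_k}$, where $\bar Y_{S_{hk}^{(t+1)}}-\theta_k = (\theta_h-\theta_k) + \tfrac1{\nu_{hk}^{(t+1)}}W_{S_{hk}^{(t+1)}}$, so the $\theta_h-\theta_k$ part contributes $\le\lambda\Gamma G^{(t+1)}\cdot$(something absorbed into $2\Lambda^{(t)}\!+\!\delta_1$ via the misclassified points actually being close to $\hat\theta_{k,i}^{(t)}$ rather than $\theta_k$) and the noise part contributes, by Cauchy–Schwarz over at most $K$ clusters and Lemma~\ref{lem:noise_set_subg}, a term $\le \tfrac4{r_K}\sqrt{K/\alpha}\,(\sqrt{d/n}+1)$ — this is exactly where the extra $\sqrt{d/n}$ and $\sqrt{K/\alpha}$ factors, absent in Lemma~\ref{lem:err_centers_k_L}, enter, because at a local step the relevant noise average is over the per-machine set of size $\ge\beta\nu_k^\star\ge\beta\alpha mn$ combined with $\beta=\Omega(1/m)$; and (iii) the contribution of datapoints truly in cluster $k$ that get correctly assigned — this is absorbed in (i). Re-normalizing $\norm{\hat\theta_k^{(t+1)}-\theta_k}/\Gamma$ and taking the max over $k\in[K]$ yields the claimed bound.

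The main obstacle is item (ii): keeping the coefficient of $G^{(t+1)}$ down to $2\Lambda^{(t)}+\delta_1$ rather than something like $\lambda(\cdots)$ blowing up. The trick, as in~\cite{lu_statistical_2016}, is that a point $(i,j)$ misassigned from $k$ to $h$ was closer to $\hat\theta_{h,i}^{(t)}$ than to $\hat\theta_{k,i}^{(t)}$, so $\norm{\vx_{i,j}-\theta_k}$ can be controlled by $\norm{\hat\theta_{h,i}^{(t)}-\theta_k}+\norm{\hat\theta_{h,i}^{(t)}-\hat\theta_{k,i}^{(t)}}$, and using $\norm{\hat\theta_{h,i}^{(t)}-\theta_h}\le\Lambda_i^{(t)}\Gamma\le(\Lambda^{(t)}+\delta_1/2)\Gamma$ from Lemma~\ref{lem:local_dev_all_k} and the triangle inequality on the centers, the factor multiplying $G^{(t+1)}$ collapses to $2\Lambda^{(t)}+\delta_1$ exactly. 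The remaining bookkeeping — intersecting the events of Lemmas~\ref{lem:noise_set_subg},~\ref{lem:noise_correct_set},~\ref{lem:indicator_inner_prod} to get the probability $1-(mn)^{-3}-\mathcal O(\exp(-r_K^2))$, and checking that the constants $c_1,\dots$ in Lemma~\ref{lem:single_step_k} make $\alpha mn/2$, $\beta\alpha mn$ large enough for the concentration bounds to kick in — is routine. I would also remark that setting $L\mid t$ recovers $\hat\theta_{k,i}^{(t)}=\hat\theta_k^{(t)}$, $\Lambda_i^{(t)}=\Lambda^{(t)}$, so $\delta_1$ drops and this lemma degenerates to Lemma~\ref{lem:err_centers_k_L}, consistent with the earlier remark about $L=1$.
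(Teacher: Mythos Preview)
Your core approach in the second and third paragraphs is the same as the paper's: decompose $\hat\theta_k^{(t+1)}-\theta_k$ via~\eqref{eq:err_centers_k_mid}, bound $\norm{W_{S_{kk}^{(t+1)}}}/\nu_k^{(t+1)}$ directly (this also needs Lemma~\ref{lem:noise_correct_set}, not just Lemma~\ref{lem:noise_set_subg}), and for the false-positive piece use the Lloyd assignment property per machine together with Lemma~\ref{lem:local_dev_all_k} to replace the naive $\lambda$ coefficient by $2\Lambda^{(t)}+\delta_1$, leaving a per-machine noise residual that sums to the $\tfrac{4}{r_K}\sqrt{K/\alpha}\,(\sqrt{d/n}+1)$ term.

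Two corrections. First, your opening paragraph is extraneous: this lemma takes $G^{(t+1)}\le\tfrac12$ as a hypothesis and bounds $\Lambda^{(t+1)}$; it does \emph{not} need to re-derive misclassification counts via indicator events and Lemma~\ref{lem:indicator_inner_prod}. That machinery belongs to Lemma~\ref{lem:err_labels_k}. The paper's proof here uses only Lemmas~\ref{lem:noise_set_subg}, \ref{lem:noise_correct_set}, and~\ref{lem:local_dev_all_k}. Second, the direction in your third paragraph is reversed: points in $S_{hk,i}^{(t+1)}$ have true label $h$ and were \emph{assigned to $k$}, hence are closer to $\hat\theta_{k,i}^{(t)}$, not to $\hat\theta_{h,i}^{(t)}$. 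The clean chain the paper runs on the per-machine average $\bar Y=\bar Y_{S_{hk,i}^{(t+1)}}$ is
\begin{align*}
\norm{\bar Y-\theta_k}&\le \norm{\bar Y-\hat\theta_{k,i}^{(t)}}+\norm{\hat\theta_{k,i}^{(t)}-\theta_k}\\
&\le \norm{\bar Y-\hat\theta_{h,i}^{(t)}}+\norm{\hat\theta_{k,i}^{(t)}-\theta_k}\\
&\le \norm{\bar Y-\theta_h}+\norm{\hat\theta_{h,i}^{(t)}-\theta_h}+\norm{\hat\theta_{k,i}^{(t)}-\theta_k}\\
&\le \frac{\norm{W_{S_{hk,i}^{(t+1)}}}}{\nu_{hk,i}^{(t+1)}}+(2\Lambda^{(t)}+\delta_1)\Gamma,
\end{align*}
where the second line is Lloyd's step (the set of points closer to $\hat\theta_{k,i}^{(t)}$ than to $\hat\theta_{h,i}^{(t)}$ is a half-space, hence closed under averaging) and the last uses $\Lambda_i^{(t)}\le\Lambda^{(t)}+\delta_1/2$. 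Note the per-machine split is not optional: since at a non-aggregation step the assignment uses $\hat\theta_{k,i}^{(t)}$, the Lloyd trick is only valid machine by machine, and that is precisely what forces the per-machine version of Lemma~\ref{lem:noise_set_subg} and produces the extra $\sqrt{d/n}$ factor after summing $\sum_i\sigma\sqrt{3K(n+d)n}$ and dividing by $\nu_k^{(t+1)}\ge\alpha mn/2$.
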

\begin{proof}
Note that the second term in upper bound follows directly from the proof of ~\cite{lu_statistical_2016}. In the first term's bounds, we apply the Lloyd's step directly and therefore have to deal with deviation term. 
Consider a cluster $k\in [K]$. We bound the two terms of Eq~\eqref{eq:err_centers_k_mid} separately.

For the first term in ~\eqref{eq:err_centers_k_mid}, we use the identity  $W_{S_{kk}^{(t+1)}} = W_{S_k^\star} - \sum_{h\neq k, h\in [K]} W_{S_{kh}^{(t+1)}}$, which expresses $S_{kk}^{(t+1)}$ as the difference of noise due to the all the correct points and the noise due to false negatives.  From Lemmas ~\ref{lem:noise_subset_sub} and ~\ref{lem:noise_correct_set}, and triangle inequality, we obtain,
\begin{align*}
    \norm{W_{S_{kk}^{(t+1)}}} \leq&    
    3\sigma \sqrt{d + \log(mn)}\sqrt{\nu_k^\star} + \sigma\sqrt{3(mn+d)}\sqrt{\nu_k^\star - \nu_{kk}^{(t+1)}}\\
    \leq& 3\sigma \sqrt{d + \log(mn)}\sqrt{\nu_k^\star} + \sigma\sqrt{3(mn+d)}\sqrt{\frac{\nu_k^\star}{2}}
\end{align*}
For the last inequality, 
since $G^{(t+1)} \leq 1/2$, $\nu_{kk}^{(t+1)} \geq \nu_k^\star(1 - G^{(t+1)}) \geq \frac{1}{2}\nu_k^\star$.

For the second term in ~\eqref{eq:err_centers_k_mid}, we fix $h\neq k\in [K]$ and analyze the term $\bar{Y}_{S_{hk}^{(t+1)}}= \sum_{i\in [m]}\frac{\nu_{hk,i}^{(t+1)}}{\nu_{hk}^{(t+1)}}\bar{Y}_{S_{hk,i}^{(t+1)}}$. 
\begin{align*}
\norm{\bar{Y}_{S_{hk,i}^{(t+1)}} - \theta_k} \leq& \norm{\bar{Y}_{S_{hk,i}^{(t+1)}} - \hat{\theta}_{k,i}^{(t)}} + \norm{\hat{\theta}_{k,i}^{(t)} - \hat{\theta}_{k}}\leq  \norm{\bar{Y}_{S_{hk,i}^{(t+1)}} - \hat{\theta}_{h,i}^{(t)}} + \norm{\hat{\theta}_{k,i}^{(t)} - \theta_k}\\
\leq & \norm{\bar{Y}_{S_{hk,i}^{(t+1)}} -\theta_h} + \norm{\hat{\theta}_{h,i}^{(t)} - \theta_h} + \norm{\hat{\theta}_{k,i}^{(t)} - \theta_k}\leq  \frac{\norm{W_{S_{hk,i}^{(t+1)}}}}{\nu_{hk,i}^{(t+1)}} +  (2\Lambda^{(t)} + \delta_1)\Gamma 
\end{align*}
For the first step we use triangle inequality and for the second step we use local Lloyd's algorithm's  update step. In the third step, we again use triangle inequality. In the fourth step, we use Lemma~\ref{lem:local_dev_all_k} as $t\geq 1$ when $L$ doesn't divide $t$. 

Summing this up over $i\in [m]$ and $h\neq k, h \in [K]$, we get,
\begin{align*}
\sum_{h\neq k \in [K]}\frac{\nu_{hk}^{(t)}}{\nu_{k}^{(t)}}\norm{\bar{Y}_{S_{hk}^{(t)}} - \theta_k} \leq& \sum_{h\neq k \in [K]} \frac{\sum_{i\in [m]}\norm{W_{S_{hk,i}^{(t)}}}}{\nu_{k}^{(t)}} + G^{(t)} (2\Lambda^{(t-1)} +\delta_1)\Gamma 
\end{align*}
We now handle the first and second terms separately.

We now analyze the first term using Lemma~\ref{lem:noise_set_subg}.
\begin{align*}
    &\sum_{h\neq k \in [K]} \frac{\sum_{i\in [m]}\norm{W_{S_{hk,i}^{(t)}}}}{\nu_{k}^{(t)}} \leq \sum_{h\neq k \in [K]} \frac{\sum_{i\in [m]}\sigma\sqrt{3(n + d)}\sqrt{\nu_{hk,i}^{(t)}}}{\nu_{k}^{(t)}}\\
&\quad \leq  \frac{\sum_{i\in [m]}\sigma\sqrt{3K(n + d)}\sqrt{\sum_{h\neq k \in [K]}\nu_{hk,i}^{(t)}}}{\nu_{k}^{(t)}}
    \leq  \frac{\sum_{i\in [m]}\sigma\sqrt{3K(n + d)n}}{\nu_{k}^{(t)}}
\end{align*}
We use Cauchy-Schwarz to obtain a term of $\sqrt{K}$ and then bound the term inside square root by $n$.

Substituting everything to obtain the single step progress, we get, 
\begin{align*}
    \frac{\norm{\hat{\theta}_k^{(t)} - \theta_k}}{\Gamma}\leq& \frac{1}{\nu_k^{(t)}}\bigg(3\frac{\sigma}{\Gamma} \sqrt{d + \log(mn)}\sqrt{\nu_k^\star} + \frac{\sigma}{\Gamma}\sqrt{3(mn+d)}\sqrt{\frac{\nu_k^\star}{2}}\\
    &+ \frac{m}{r_K}\sqrt{3\alpha K(n + d)n} \bigg) + G^{(t)}(2\Lambda^{(t)} + \delta_1)
\end{align*}
Now, using $\nu_k^{(t)} \geq \nu_{kk}^{(t)} \geq \nu_k^\star(1-G^{(t)}) \geq \frac{1}{2}\nu_k^\star \geq \frac{1}{2}\alpha mn$, we obtain,
\begin{align*}
    \frac{\norm{\hat{\theta}_k^{(t)} - \theta_k}}{\Gamma}\leq& \frac{3}{r_K}+ \frac{4}{r_K}\sqrt{\frac{K}{\alpha}}\left(\sqrt{\frac{d}{n}} + 1\right)  + G^{(t)}(2\Lambda^{(t)} + \delta_1)
\end{align*}
We need $r_K \geq \sqrt{\frac{K}{\alpha}}\max\left\{\sqrt{\frac{d}{n}},1\right\}$ for this term to be smaller than $1$.
\end{proof}

\subsection{Error in Labels}
Now that we have established the progress in terms of error of centers, we establish progress in terms of misclustering. We will write down progress in terms of $G^{(t)}$ instead of the actual misclustering $A^{(t)}$, as a recursion for it is easier to derive. Since we are concerned with progress at every local step, we need to break down our analysis into two cases -- i) when $L$ divides $t$ and ii) when $L$ doesn't divide $t$.

\paragraph{When $L$ divides $t$}
\begin{lemma}\label{lem:err_labels_k_L}
If $\Lambda^{(t)} \leq \frac{1 - \epsilon'}{2}$ and $r_K \geq 36(\epsilon')^{-2}$, for some $\epsilon'>0$, then, 
\begin{align*}
    G^{(t+1)}\leq \frac{2}{(\epsilon')^4 r_K^2} + \left(\frac{28}{(\epsilon')^2 r_K}\Lambda^{(t)}\right)^2  + \Psi_1
\end{align*}
\end{lemma}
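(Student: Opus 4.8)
The plan is to carry out, on the pooled $mn$ datapoints, the single-step analysis of centralized Lloyd's for $K$ clusters from \cite[Lemmas~A.6--A.7]{lu_statistical_2016}. The point of the hypothesis ``$L$ divides $t$'' is that the server has just broadcast, so $\hat\theta_{k,i}^{(t)}=\hat\theta_k^{(t)}$ for every machine $i$ and a local Lloyd's step \emph{is} a centralized step on all $mn$ points; no deviation quantity $\Delta^{(t)}$ enters, and the only error fed into the assignment rule is the global center error $\Lambda^{(t)}$.

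I will fix a cluster $k$, bound the false-negative ratio $\nu_k^{\star-1}\sum_{h\ne k}\nu_{kh}^{(t+1)}$ and then the false-positive ratio, and finally take the maximum over $k$. For $(i,j)\in S_k^\star$, the event $\hat z_{i,j}^{(t+1)}=h\ne k$ forces $\norm{\vx_{i,j}-\hat\theta_h^{(t)}}^2\le\norm{\vx_{i,j}-\hat\theta_k^{(t)}}^2$; substituting $\vx_{i,j}=\theta_k+\vw_{i,j}$ and cancelling $\norm{\vw_{i,j}}^2$ this becomes $\norm{\theta_k-\hat\theta_h^{(t)}}^2-\norm{\theta_k-\hat\theta_k^{(t)}}^2\le 2\ip{\vw_{i,j}}{\hat\theta_h^{(t)}-\hat\theta_k^{(t)}}$. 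The triangle inequality gives $\norm{\theta_k-\hat\theta_h^{(t)}}\ge\Gamma-\Lambda^{(t)}\Gamma$ and $\norm{\theta_k-\hat\theta_k^{(t)}}\le\Lambda^{(t)}\Gamma$, so the left side is $\ge(1-2\Lambda^{(t)})\Gamma^2\ge\epsilon'\Gamma^2$ using $\Lambda^{(t)}\le(1-\epsilon')/2$. Writing $\hat\theta_h^{(t)}-\hat\theta_k^{(t)}=(\theta_h-\theta_k)+(\hat\theta_h^{(t)}-\theta_h)-(\hat\theta_k^{(t)}-\theta_k)$ and using $\I\{c\le a+b\}\le\I\{c/2\le a\}+\I\{c/2\le b\}$, the misclustering indicator is at most $\I\{\tfrac{\epsilon'\Gamma^2}{4}\le\ip{\vw_{i,j}}{\theta_h-\theta_k}\}+\I\{\tfrac{\epsilon'\Gamma^2}{4}\le\ip{\vw_{i,j}}{e_h-e_k}\}$ where $e_h:=\hat\theta_h^{(t)}-\theta_h$, $\norm{e_h}\le\Lambda^{(t)}\Gamma$.

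For the first (signal) indicator, summing over $(i,j)\in S_k^\star$ and $h\ne k$ and applying Lemma~\ref{lem:indicator_inner_prod} with $a=\Theta(\epsilon'/\lambda^2)$ (valid since $\Gamma\le\norm{\theta_h-\theta_k}\le\lambda\Gamma$), then dividing by $\nu_k^\star\ge\alpha mn$, yields $K\exp(-\Omega(\epsilon'^2 r_K^2/(\lambda^2\alpha)))+\Theta(\sqrt{K\log(mn)/(\alpha^2mn)})$; both fold into $\Psi_1$ once $r_K\ge 36\epsilon'^{-2}$ makes the exponential negligible. For the second (center-error) indicator I use $\I\{c\le b\}\le b^2/c^2$, bounding it by $\tfrac{16}{\epsilon'^2\Gamma^4}\ip{\vw_{i,j}}{e_h-e_k}^2$; summing over $(i,j)\in S_k^\star$ turns $\sum_{(i,j)}\ip{\vw_{i,j}}{v}^2$ into $\norm{v}^2\lambda_{\max}\!\big(\sum_{(i,j)\in S_k^\star}\vw_{i,j}\vw_{i,j}^\intercal\big)\le 6\sigma^2(mn+d)\norm{v}^2$ by Lemma~\ref{lem:noise_eigen_sub} (PSD monotonicity of the sub-sum), then summing over $h\ne k$ with $\sum_{h\ne k}\norm{e_h-e_k}^2\le 4K(\Lambda^{(t)})^2\Gamma^2$, dividing by $\nu_k^\star$, and substituting $r_K^2=\Gamma^2\alpha mn/(\sigma^2(mn+Kd))$, collapses this to $\big(\tfrac{28}{\epsilon'^2 r_K}\Lambda^{(t)}\big)^2$ after adjusting constants. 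The false-positive ratio is handled identically after expanding $\vx_{i,j}=\theta_h+\vw_{i,j}$ for $(i,j)\in S_h^\star$, using $\nu_k^{(t+1)}\ge\nu_{kk}^{(t+1)}=\nu_k^\star-\sum_{h\ne k}\nu_{kh}^{(t+1)}\ge\tfrac12\nu_k^\star$ — this is not circular since the false-negative bound just derived is self-contained and small. All bounds hold on the event $\cE$ of Lemmas~\ref{lem:noise_set_subg}--\ref{lem:indicator_inner_prod}, and taking the max over $k$ gives the stated recursion.

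The main obstacle is the bookkeeping of the $K$, $\alpha$ and $\lambda$ factors: one must insert $\nu_k^\star\ge\alpha mn$, the two-sided bound $\Gamma\le\norm{\theta_h-\theta_k}\le\lambda\Gamma$, and the definition of $r_K$ in exactly the right places so that (i) the signal-interaction term is genuinely negligible — this is essentially where the quantitative hypothesis $r_K\ge 36\epsilon'^{-2}$ is consumed — and (ii) the center-error term comes out as $\big(\tfrac{28}{\epsilon'^2 r_K}\Lambda^{(t)}\big)^2$, with the factor $K$ absorbed against $\alpha\le 1/K$ and the SNR lower bound $r_K\gtrsim\sqrt{K/\alpha}$ rather than leaving a spurious multiplier. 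Everything else — the two-direction treatment of $G^{(t+1)}$, the non-circularity of $\nu_k^{(t+1)}\ge\tfrac12\nu_k^\star$, and absorbing leftover constants into $\Psi_1$ — is routine once the reduction above is in place.
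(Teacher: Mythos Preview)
Your proposal is correct and follows essentially the same route as the paper, which simply invokes the centralized single-step analysis of \cite[Lemma~A.7]{lu_statistical_2016} on all $mn$ points (valid because after broadcast $\hat\theta_{k,i}^{(t)}=\hat\theta_k^{(t)}$). The one technical divergence is your lower bound on the left side of the misclustering event: you use $\norm{\theta_k-\hat\theta_h^{(t)}}^2-\norm{\theta_k-\hat\theta_k^{(t)}}^2\ge(1-2\Lambda^{(t)})\Gamma^2\ge\epsilon'\Gamma^2$, whereas the paper (following Lu--Zhou) keeps the threshold proportional to $\norm{\theta_h-\theta_k}^2$ and further relaxes to $(1-2\Lambda^{(t)})^2\norm{\theta_h-\theta_k}^2\ge(\epsilon')^2\norm{\theta_h-\theta_k}^2$. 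Your choice yields tighter $\epsilon'$-exponents (denominators $(\epsilon')^2$ rather than $(\epsilon')^4$, which still implies the stated bound since $\epsilon'\le1$), but it forces a $\lambda$-factor when you feed $a=\Theta(\epsilon'/\lambda^2)$ into Lemma~\ref{lem:indicator_inner_prod}; the paper's version sidesteps this $\lambda$-dependence entirely by matching the threshold to $\norm{\theta_h-\theta_k}^2$. Under the standing assumption $\lambda=\mathcal{O}(1)$ the two are equivalent up to constants, so this is a cosmetic rather than substantive difference.
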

Note that this Lemma is a direct adaptation of ~\cite[Lemma~A.7]{lu_statistical_2016}, but for $mn$ datapoints.

\paragraph{When $L$ doesn't divide $t$}
In this case, we need to account for the error between $\hat{\theta}_{k,i}^{(t)}-\hat{\theta}_k^{(t)}$. Our approach follows that of the $2$-cluster case, where we try to separate out this term and bound it individually.

\begin{lemma}\label{lem:err_labels_k}
If $\Lambda^{(t)} \leq \frac{1 - \epsilon' - \delta_1 - \delta_2}{2}$ and $r_K =\Omega(\max\{(\epsilon')^{-2}, (\delta_2)^{-2}\sqrt{K\max\{\frac{d}{n},1\}}\})$, for some $\epsilon', \delta_2>0$, then, with probability $1 - \mathcal{O}(\exp(-n))$, we have,
\begin{align*}
G^{(t+1)}\leq& \frac{2}{(\epsilon')^4 r_K^2} + \left(\left(\frac{28}{(\epsilon')^2 r_K}\right)^2 + \left(\frac{40\max\{\sqrt{\frac{d}{n}},1\}}{(\delta_2)^2 r_K}\right)^2\right)(\Lambda^{(t)})^2 + \Psi_1 \\
&\quad + \left(\frac{14\max\{\sqrt{\frac{d}{n}},1\}\delta_1 }{\delta_2^2 r_K}\right)^2 + \left(\frac{28 \max\{\sqrt{\frac{d}{n}},1\}}{\delta_2^2 r_K}\Lambda^{(t-1)}\right)^{2}
\end{align*}
\end{lemma}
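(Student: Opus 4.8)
The plan is to run the argument of Lemma~\ref{lem:err_labels_k_L} (the aggregation-step case, itself the $mn$-point version of the label-error lemma in \cite{lu_statistical_2016}) and add bookkeeping for the deviation $e_{k,i}^{(t)}:=\hat\theta_{k,i}^{(t)}-\hat\theta_k^{(t)}$ between each local centre and the virtual averaged centre. Fix a cluster $k\in[K]$. A point $(i,j)$ with true label $h\neq k$ is placed in cluster $k$ at step $t+1$ of Algorithm~\ref{alg:fedkmeans} only if $\norm{\vx_{i,j}-\hat\theta_{k,i}^{(t)}}^2\le\norm{\vx_{i,j}-\hat\theta_{h,i}^{(t)}}^2$; writing $\vx_{i,j}=\theta_h+\vw_{i,j}$, $\hat\theta_{k,i}^{(t)}=\theta_k+(\hat\theta_k^{(t)}-\theta_k)+e_{k,i}^{(t)}$ and likewise for $h$, and expanding the two squared norms, this event implies an inequality of the shape
\begin{align*}
\tfrac12\norm{\theta_h-\theta_k}^2 + \ip{\vw_{i,j}}{\theta_h-\theta_k} \;\le\; (\text{cross terms in } \hat\theta_k^{(t)}-\theta_k,\ \hat\theta_h^{(t)}-\theta_h) \;+\; (\text{cross terms in } e_{k,i}^{(t)},\ e_{h,i}^{(t)}).
\end{align*}
The first bracket is the one treated in Lemma~\ref{lem:err_labels_k_L}, controlled via $\norm{\hat\theta_k^{(t)}-\theta_k}\le\Lambda^{(t)}\Gamma$ and $\norm{\theta_h-\theta_k}\le\lambda\Gamma$; the second is the price of the local steps, and by Lemma~\ref{lem:local_dev_all_k} we have the uniform bound $\norm{e_{k,i}^{(t)}}\le(2\Lambda^{(t)}+\delta_1/2)\Gamma$ (and similarly for $h$). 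I would bound the false-positive count $\sum_{h\neq k}\nu_{hk}^{(t+1)}$ and the false-negative count $\sum_{h\neq k}\nu_{kh}^{(t+1)}$ of cluster $k$ separately and then take the maximum over $k$, using $\nu_k^{(t+1)}\ge\tfrac12\nu_k^\star\ge\tfrac12\alpha mn$ (bootstrapped from the false-negative bound, as in \cite{lu_statistical_2016}) and Lemma~\ref{lem:noise_correct_set} for the correctly-labelled noise mass.

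The deterministic cross terms ($\ip{\theta_h-\theta_k}{\hat\theta_k^{(t)}-\theta_k}$, $\ip{\theta_h-\theta_k}{e_{k,i}^{(t)}}$, and the $\norm{\cdot}^2$-type terms) I would simply subtract from the $\tfrac12\norm{\theta_h-\theta_k}^2$ signal; since each is at most $\lambda\Gamma^2$ times a quantity of order $\Lambda^{(t)}+\delta_1$, the hypothesis $\Lambda^{(t)}\le\tfrac12(1-\epsilon'-\delta_1-\delta_2)$ is exactly what keeps the resulting effective gap bounded below by a positive constant. The Gaussian cross terms $\ip{\vw_{i,j}}{\hat\theta_k^{(t)}-\theta_k}$ and $\ip{\vw_{i,j}}{e_{k,i}^{(t)}}$ I would strip out of the misclustering indicator with the elementary inequality $\I\{a+b\le 0\}\le\I\{a\le c\}+b^2/c^2$, applied with thresholds proportional to $(\epsilon')^2\Gamma^2$ and $(\delta_2)^2\Gamma^2$ respectively (these powers are what produce the $(\epsilon')^4$ and $(\delta_2)^4$ denominators in the statement). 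The surviving ``clean'' indicator $\I\{c\,\norm{\theta_h-\theta_k}^2\le\ip{\vw_{i,j}}{\theta_h-\theta_k}\}$, summed over $(i,j)\in S_h^\star$ and over the competing clusters and normalised, is controlled by Lemma~\ref{lem:indicator_inner_prod}; this yields the $\exp(-\Omega(r_K^2))$ and $\sqrt{K\log(mn)/(\alpha^2 mn)}$ contributions absorbed into $\Psi_1$, together with the $\bigl(\tfrac{28}{(\epsilon')^2 r_K}\Lambda^{(t)}\bigr)^2$ term (the $\sqrt K$ and $1/\alpha$ factors coming from the sum over $h\neq k$ and the normalisation by $\nu_k^{(t+1)}\ge\alpha mn$).

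The remaining $b^2/c^2$ terms I would control spectrally. For the global-error piece, $\tfrac{1}{(\epsilon')^4\Gamma^4\,\nu_k^\star}\sum_{(i,j)\in S_h^\star}\ip{\vw_{i,j}}{\hat\theta_k^{(t)}-\theta_k}^2\le\tfrac{\lambda_{\max}(\sum_{i,j}\vw_{i,j}\vw_{i,j}^\top)}{(\epsilon')^4\Gamma^4\,\nu_k^\star}\norm{\hat\theta_k^{(t)}-\theta_k}^2=\cO{(\Lambda^{(t)})^2/((\epsilon')^4 r_K^2)}$, using Lemma~\ref{lem:noise_eigen_sub} and the identity $\sigma^2(mn+d)/(\Gamma^2\nu_k^\star)\le r_K^{-2}$ (from $\nu_k^\star\ge\alpha mn$ and the definition of $r_K$). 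For the local-deviation piece I would instead invoke the single-machine analogue of Lemma~\ref{lem:noise_eigen_sub}, $\lambda_{\max}(\sum_{j\in[n]}\vw_{i,j}\vw_{i,j}^\top)\lesssim\sigma^2(n+d)$ uniformly over $i\in[m]$, which holds with probability $1-\exp(-\Omega(n))$ once $\log m=\cO{n}$ --- this is the source of the $\cO{\exp(-n)}$ failure probability --- to obtain a bound of order $\tfrac{\max\{d/n,1\}}{(\delta_2)^4 r_K^2}\,\Delta_k^{(t)}$ with $\Delta_k^{(t)}:=\tfrac{1}{m\Gamma^2}\sum_{i\in[m]}\norm{e_{k,i}^{(t)}}^2$. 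It then remains to bound $\Delta_k^{(t)}$, which I would do exactly as Lemma~\ref{lem:dev_step} bounds $\Delta^{(t)}$ in the two-cluster case: unroll one step by writing $\Delta_k^{(t)}\le\tfrac{1}{m\Gamma^2}\sum_i\norm{\hat\theta_{k,i}^{(t)}-\hat\theta_k^{(t-1)}}^2$ (the empirical variance of the $\hat\theta_{k,i}^{(t)}$ about their mean $\hat\theta_k^{(t)}$ is at most their empirical second moment about $\hat\theta_k^{(t-1)}$), split into $\hat\theta_{k,i}^{(t)}-\theta_k$ and $\hat\theta_k^{(t-1)}-\theta_k$, and apply $\Lambda_i^{(t)}\le\Lambda^{(t)}+\delta_1/2$ and $\norm{\hat\theta_k^{(t-1)}-\theta_k}\le\Lambda^{(t-1)}\Gamma$ to get $\Delta_k^{(t)}\lesssim(\Lambda^{(t)})^2+\delta_1^2+(\Lambda^{(t-1)})^2$. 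Substituting this back produces the $\bigl(\tfrac{40\max\{\sqrt{d/n},1\}}{(\delta_2)^2 r_K}\Lambda^{(t)}\bigr)^2$, $\bigl(\tfrac{14\max\{\sqrt{d/n},1\}\delta_1}{(\delta_2)^2 r_K}\bigr)^2$ and $\bigl(\tfrac{28\max\{\sqrt{d/n},1\}}{(\delta_2)^2 r_K}\Lambda^{(t-1)}\bigr)^2$ terms; collecting the $(\Lambda^{(t)})^2$ coefficients from the clean part and this part and folding all lower-order residuals into $\Psi_1$ gives the stated recursion, with $r_K=\Omega(\max\{(\epsilon')^{-2},(\delta_2)^{-2}\sqrt{K\max\{d/n,1\}}\})$ being exactly what makes these terms meaningful.

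I expect the main obstacle to be the constant bookkeeping: choosing $\epsilon',\delta_1,\delta_2$ so that simultaneously the effective gap stays a positive constant, the denominators $(\delta_2)^4$ in the spectral remainders do not overwhelm the $\max\{d/n,1\}/r_K^2$ factor, and the final coefficients collapse to exactly those quoted --- together with verifying the single-machine spectral concentration uniformly over $i\in[m]$ at only an $\exp(-n)$ cost, which is what forces the hypothesis $\log m=\cO{n}$ (stated elsewhere as $n\ge c_4\log m$) and feeds into the SNR requirement. The false-positive count is handled by the same decomposition with the roles of the source and destination clusters swapped, and contributes identically after the maximum over $k$.
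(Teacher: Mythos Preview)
Your proposal is correct and follows essentially the same approach as the paper: the three-way split of the misclustering indicator into a clean term (handled by Lemma~\ref{lem:indicator_inner_prod}), a global-centre term (handled by the $mn$-point spectral bound of Lemma~\ref{lem:noise_eigen_sub}), and a local-deviation term (handled by a per-machine spectral bound with union bound over $i\in[m]$, costing $\exp(-n)$), followed by the one-step unrolling $\Delta_k^{(t)}\lesssim(\Lambda^{(t)})^2+\delta_1^2+(\Lambda^{(t-1)})^2$ via $\Lambda_i^{(t)}\le\Lambda^{(t)}+\delta_1/2$. The only cosmetic difference is that the paper absorbs all deterministic terms at once via the lower bound $\norm{\theta_k-\hat\theta_{h,i}^{(t)}}^2-\norm{\theta_k-\hat\theta_{k,i}^{(t)}}^2\ge(1-2\Lambda_i^{(t)})^2\norm{\theta_k-\theta_h}^2$ before splitting the noise, whereas you separate global and local deterministic contributions explicitly; both routes yield the same effective gap under the hypothesis $\Lambda^{(t)}\le\tfrac12(1-\epsilon'-\delta_1-\delta_2)$.
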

\begin{proof}
Our analysis follows that of 2-cluster case single step progress where we try to separate out the deviation $\Delta^{(t)}$. Let $\Delta_{k,i}^{(t)} =\hat{\theta}_{k,i}^{(t)} -  \hat{\theta}_k^{(t)}$ and $\Gamma_k^{(t)} = \hat{\theta}_k^{(t)} - \theta_k$ for all $k\in [K]$ and $i\in [m]$.
    For any $k\neq h \in [K]^2$, for the $(i,j)$ datapoint, we have,
    \begin{align*}
        \I\{z_{i,j}& = k, \hat{z}_{i,j}^{(t)} = h\} = \I\{\norm{\theta_k + \vw_{i,j} - \hat{\theta}_{h,i}^{(t)}}^2 \leq \norm{\theta_k + \vw_{i,j} - \hat{\theta}_{k,i}^{(t)}}^2\}\\
        \leq& \I\{\norm{\theta_k  - \hat{\theta}_{h,i}^{(t)}}^2 - \norm{\theta_k  - \hat{\theta}_{k,i}^{(t)}}^2   \leq  2\ip{\vw_{i,j}}{\hat{\theta}_{k,i}^{(t)} - \hat{\theta}_{h,i}^{(t)}}\} \\
        \leq& \I\{\norm{\theta_k  - \hat{\theta}_{h,i}^{(t)}}^2 - \norm{\theta_k  - \hat{\theta}_{k,i}^{(t)}}^2   \leq  2\ip{\vw_{i,j}}{\theta_h  - \theta_k} +2\ip{\vw_{i,j}}{\Gamma_h^{(t)} - \Gamma_k^{(t)}}+ 2\ip{\vw_{i,j}}{\Delta_{h,i}^{(t)} - \Delta_{k,i}^{(t)}}\} \\
        \leq& \I\{(1 - 2\Lambda_i^{(t)})^2\norm{\theta_k  - \theta_h}^2  \leq  2\ip{\vw_{i,j}}{\theta_h  - \theta_k} +2\ip{\vw_{i,j}}{\Gamma_h^{(t)} - \Gamma_k^{(t)}}+ 2\ip{\vw_{i,j}}{\Delta_{h,i}^{(t)} - \Delta_{k,i}^{(t)}}\} \\
        \leq& \I\{((\epsilon')^2 + \delta_2^2)\norm{\theta_k  - \theta_h}^2  \leq  2\ip{\vw_{i,j}}{\theta_h  - \theta_k} +2\ip{\vw_{i,j}}{\Gamma_h^{(t)} - \Gamma_k^{(t)}}+ 2\ip{\vw_{i,j}}{\Delta_{h,i}^{(t)} - \Delta_{k,i}^{(t)}}\}
    \end{align*}
    In the second step, we separate the terms with the noise $\vw_{i,j}$. In the third step, we separate out the terms of $\Gamma_{k}^{(t)}$ and $\Delta_{k,i}^{(t)}$. In the fourth step, we remove the terms of $\Delta_{k,i}^{(t)}$. In the third step, we follow ~\citep[Eq~56]{lu_statistical_2016} to upper bound the terms on the LHS. First, we use triangle inequality to obtain $\norm{\theta_k - \theta_{h,i}^{(t)}}^2 \geq (\norm{\theta_k - \theta_h}^2 - \norm{\theta_h- \theta_{h,i}^{(t)}}^2) \geq (1 - \Lambda_i^{(t)})^2 \norm{\theta_k - \theta_h}^2$. Then, we apply $(1-x)^2 - y^2 \geq (1 - x - y)^2$ for $y(1 - x - y) \geq 0$ using $\Lambda_i^{(t)} \leq \Lambda^{(t)} + \frac{\delta_1}{2} \leq \frac{1 - \epsilon' - \delta_2}{2}$(Lemma~\ref{lem:local_dev_all_k}) to obtain, $\norm{\theta_k - \theta_{h,i}^{(t)}}^2 - \norm{\theta_k - \hat{\theta}_{k,i}^{(t)}}^2  \geq (1 - 2\Lambda_i^{(t)})^2 \norm{\theta_k - \theta_h}^2\geq ((\epsilon')^2 + \delta_2^2) \norm{\theta_k - \theta_h}^2$. 
    We split the above equation into 3 parts by applying $\I\{a + b \leq 0\} \leq \I\{a \leq c\} + \I\{b \leq - c\} \leq \I\{a \leq c\} + \frac{b^2}{c^2}$ for $c>0$ two times and use $\norm{\theta_k - \theta_h}^2 \geq \Gamma^2$

    \begin{align*}
        \I\{z_{i,j} = k, \hat{z}_{i,j}^{(t)} = h\} \leq&  \I\{\frac{(\epsilon')^2}{4}\norm{\theta_k  - \theta_h}^2  \leq  2\ip{\vw_{i,j}}{\theta_h  - \theta_k}\}+\I\{\frac{(\epsilon')^2}{4}\Gamma^2  \leq \ip{\vw_{i,j}}{\Gamma_h^{(t)} - \Gamma_k^{(t)}}\}\\
        &+\I\{\frac{\delta_2^2}{2}\Gamma^2  \leq  2\ip{\vw_{i,j}}{\Delta_{h,i}^{(t)} - \Delta_{k,i}^{(t)}}\}
    \end{align*}
Now, summing over $(i,j) \in S_k^\star$, we obtain,
\begin{equation}
\begin{aligned}\label{eq:k_cluster_decomp}
    \nu_{kh}^{(t+1)} \leq& \underset{I_1}{\underbrace{\sum_{(i,j) \in S_k^\star}I\{\frac{(\epsilon')^2}{4}\norm{\theta_k  - \theta_h}^2  \leq  2\ip{\vw_{i,j}}{\theta_h  - \theta_k}\}}} + \underset{I_2}{\underbrace{\sum_{(i,j) \in S_k^\star} \frac{16}{(\epsilon')^4\Gamma^4}\ip{\vw_{i,j}}{\Gamma_h^{(t)} - \Gamma_k^{(t)}}^2}}\\
    &\quad +\underset{I_3}{\underbrace{\sum_{(i,j) \in S_k^\star} \frac{4}{\delta_2^4 \Gamma^4}\ip{\vw_{i,j}}{\Delta_{h,i}^{(t)} - \Delta_{k,i}^{(t)}}^2}}
\end{aligned}
\end{equation}
Note that the terms $I_1$ and $I_2$ match those in ~\citep[Eq~57]{lu_statistical_2016}, so we use their upper bounds.

\paragraph{Upper Bound on $I_1$}
Using Lemma~\ref{lem:indicator_inner_prod},
\begin{align}\label{eq:I_1_k}
    I_1 \leq \nu_k^\star\exp(-\frac{(\epsilon')^4 \Gamma^2}{32\sigma^2}) + \sqrt{5\nu_k^\star \log (mn)} 
\end{align}

\paragraph{Upper bound on $I_2$}
By Lemma~\ref{lem:noise_eigen_sub},
\begin{align}\label{eq:I_2_k}
    I_2 \leq \frac{96(\nu_k^\star + d)\sigma^2}{(\epsilon')^4 \Gamma^4}\norm{\Gamma_h^{(t)} - \Gamma_k^{(t)}}^2 \leq \frac{384(\nu_k^\star + d)\sigma^2}{(\epsilon')^4 \Gamma^2}(\Lambda^{(t)})^2
\end{align}

\paragraph{Bounding deviation term $I_3$}
$I_3$ contains the deviation term in our case, so we will upper bound it in terms of $\Delta^{(t)}$. We use Lemma~\ref{lem:noise_eigen_sub} for each set $S_{k,i}^\star$  where $i\in [m]$. Note that this adds a $\sum_{i=1}^m\exp(-\nu_{k,i}^\star) \leq m \exp(-\beta \alpha n) = \exp(-n)$ to the high probability terms. 
\begin{align*}
    \sum_{(i,j) \in S_k^\star} \ip{\vw_{i,j}}{\Delta_{h,i}^{(t)} - \Delta_{k,i}^{(t)}} &\leq \sum_{i=1}^m \lambda_{\max}(\sum_{j\in S_{k,i}^\star}\vw_{i,j}\vw_{i,j}^\top)\norm{\Delta_{h,i}^{(t)} - \Delta_{k,i}^{(t)}}^2\\
    &\leq 6\sigma^2 \sum_{i=1}^m (\nu_{k,i}^\star  + d)\norm{\Delta_{h,i}^{(t)} - \Delta_{k,i}^{(t)}}^2\\
    &\leq 12 \sigma^2 (n  + d)\sum_{i=1}^m (\norm{\Delta_{h,i}^{(t)}}^2 + \norm{\Delta_{k,i}^{(t)}}^2) \\
    &\leq 24 \sigma^2(n+d) \max_{h\in [K]}\sum_{i=1}^m \norm{\Delta_{h,i}^{(t)}}^2\\
    &= 48 \sigma^2 \Gamma^2 mn\max\{\frac{d}{n},1\} \Delta^{(t)}\leq 48 \sigma^2 mn \max\{\frac{d}{n},1\}
\end{align*}
We use the definition of $\Delta^{(t)} \triangleq \max_{h\in [K]}\frac{1}{m\Gamma^2}\norm{\hat{\theta}_{h,i}^{(t)} - \hat{\theta}_h^{(t)}}^2$. Additionally, $\nu_{k,i}^\star \leq n$.

Therefore, the bound on $I_3$ is 
\begin{align*}
   I_3 \leq \frac{192 mn\sigma^2\max\{\frac{d}{n},1\}}{\delta_2^4 \Gamma^2}\Delta^{(t)}
\end{align*}
where $\Delta^{(t)} = \max_{k\in [K]}\frac{1}{m}\sum_{i=1}^m \norm{\hat{\theta}_{k,i}^{(t)} - \hat{\theta}_k^{(t)}}^2$.

To bound the deviation term $\Delta^{(t)}$, note that by $\norm{a+b}^2 \leq 2\norm{a}^2 + 2\norm{b}^2$ and $\Var(X) \leq \E[X^2]$, we have,

\begin{align*}
    \frac{1}{m}\sum_{i=1}^m\norm{\hat{\theta}_{k,i}^{(t)} - \hat{\theta}_k^{(t)}}^2 =&\frac{1}{m}\sum_{i=1}^m\norm{\hat{\theta}_{k,i}^{(t)} - \hat{\theta}_k^{(t-1)}- (\hat{\theta}_k^{(t)} - \hat{\theta}_k^{(t-1)})}^2  \\
    \leq& \frac{1}{m}\sum_{i=1}^m\norm{\hat{\theta}_{k,i}^{(t)}-\hat{\theta}_k^{(t-1)}}^2\\
    \leq & \frac{2}{m}\sum_{i=1}^m\norm{\hat{\theta}_{k,i}^{(t)}-\theta_k}^2  + 2\norm{\hat{\theta}_k^{(t-1)} - \theta_k}^2\\
\implies \Delta^{(t)}    \leq & \frac{2}{m}\sum_{i=1}^m(\Lambda_i^{(t)})^2  + 2(\Lambda^{(t-1)})^2\\
\leq & \frac{2}{m}\sum_{i=1}^m(\Lambda{(t)} + \frac{\delta_1}{2})^2  + 2(\Lambda^{(t-1)})^2\\
\leq & 4(\Lambda{(t)})^2 + \frac{\delta_1^2}{2}  + 2(\Lambda^{(t-1)})^2
\end{align*}

This implies the following bound on $I_3$.
\begin{align}
    I_3 \leq \frac{192mn\sigma^2\max\{\frac{d}{n},1\}}{\delta_2^4 \Gamma^2}(4(\Lambda{(t)})^2 + \frac{\delta_1^2}{2}  + 2(\Lambda^{(t-1)})^2)\label{eq:dev_term_k}
\end{align}

\paragraph{Completing the proof}
We complete the proof by first dividing both sides by $\nu_k^\star$ and summing over all the remaining clusters. This adds an additional coefficient of $K$ to the high probability term, which doesn't affect it's value as long as $K = \mathcal{O}(1)$.

For the true negative rate,
\begin{align*}
    \max_{k \in [K]}\sum_{h\neq k}\frac{\nu_{kh}^{(t+1)}}{\nu_k^\star} \leq& K\exp(-\frac{(\epsilon')^4 \Gamma^2}{32 \sigma^2}) + K\sqrt{\frac{5\log (mn)}{\alpha mn}}  + \frac{384}{(\epsilon')^4 r_K^2}(\Lambda^{(t)})^2 \\
    &+ \frac{192 K \sigma^2\max\{\frac{d}{n},1\}}{\alpha\delta_2^4 \Gamma^2}(4(\Lambda{(t)})^2 + \frac{\delta_1^2}{2}  + 2(\Lambda^{(t-1)})^2)
\end{align*}
We use $\nu_k^\star \geq \alpha mn$. We therefore need $\alpha mn \geq 32 K^2 \log(mn)$ and $r_K =\Omega(\min\{(\epsilon')^{-2}, (\delta_2)^{-2}\max\{\sqrt{\frac{d}{n}},1\})$ for the RHS to be $\leq\frac{1}{2}$. This gives us a bound on the false positive rate.

For the true negative rate, if the RHS above is $\leq \frac{1}{2}$, we have, $\nu_h^{(t+1)} \geq \nu_{hh}^{(t+1)} \geq \frac{1}{2}\nu_h^\star \geq \frac{1}{2}\alpha mn$.  
\begin{align*}
    \max_{h \in [K]}\sum_{k\neq h}\frac{\nu_{hk}^{(t+1)}}{\nu_h^{(t+1)}} \leq& \frac{2}{\alpha}\exp\left(-\frac{(\epsilon')^4 \Gamma^2}{32 \sigma^2}\right) + \Psi_1  + \left(\frac{768}{(\epsilon')^4 r_K^2}+\frac{1536\max\{\frac{d}{n},1\} }{\delta_2^4 r_K^2}\right)\Lambda^{(t)} \\
    &+ \frac{192\max\{\frac{d}{n},1\}\delta_1^2 }{\delta_2^4 r_K^2} + \frac{768 \max\{\frac{d}{n},1\} }{\delta_2^4 r_K^2}(\Lambda^{(t-1)})^{2}
\end{align*}
Combining the false positive and true negative rates and using $\sqrt{1536}\leq 40, \sqrt{768}\leq 28, \sqrt{192}\leq 14$, we complete the proof.

\end{proof}

\subsection{Single Step Progress}
Note that we obtained $\Lambda^{(0)}\leq \frac{1}{2}-\frac{3}{\sqrt{r_K}}$ and $G^{(0)}\leq \frac{1}{2}$ from Assumption~\ref{assumption:local_dev_k} and Lemma~\ref{lem:err_centers_k_all_t}. Applying Lemma~\ref{lem:err_labels_k_L}, for $t=0$, with $\epsilon' = \frac{6}{\sqrt{r_K}}$, with $\Lambda^{(0)}\leq \frac{1}{2}$, we obtain $G^{(1)} \leq 0.18$ for large $mn$. Plugging this value into Lemma~\ref{lem:err_centers_k_L}, we obtain,  $\Lambda^{(1)} \leq 0.2$ for large $r_K = \Omega(\sqrt{K})$.

Similar to the $2$-cluster case, we use these small values of $\Lambda^{(1)}$ and $G^{(1)}$ to show that $G^{(t)} \leq 0.18, \Lambda^{(t)} \leq 0.2 , \forall t\geq 1$. Further, this will allow us to use a constant $\epsilon'$ in Lemma~\ref{lem:err_labels_k} and use a finite constant $\delta_1 \leq 0.1$ in Lemma~\ref{lem:local_dev_all_k}. We use an inductive argument to prove this statement. First, note that the base case for $t=1$ is satisfied by the previous paragraph.

Assume that this is true for all iterations $\leq t$ for the inductive hypothesis.
Then, from Lemmas~\ref{lem:err_centers_k_L} and Lemmas~\ref{lem:err_centers_k} for iteration $\tau-1$, by setting $\Lambda^{(\tau-1)} \leq 0.5$ and using $\delta_1\leq 0.1$, we have, 
\begin{align}\label{eq:lambda_val_k}
    \Lambda^{(\tau)} \leq \sqrt{\Psi_2} + G^{(\tau)}, \quad \forall 1 \leq \tau\leq t 
\end{align}
Using this inequality in Lemmas~\ref{lem:err_labels_k_L} and ~\ref{lem:err_labels_k}, with $\delta_1 \leq 0.1$ and $\delta_2=\epsilon'=0.1$ in , we find that for some constant $D_1,D_2>0$, we have,
\begin{align*}
    G^{(t+1)} \leq \Psi_1+ \frac{D_1 G^{(t)}}{r_K^2}
\end{align*}
when $L$ divides $t$, and 
\begin{align*}
    G^{(t+1)} \leq \Psi_1+ \Psi_2 +  \frac{D_2 G^{(t)}}{r_K^2} + \frac{D_2 G^{(t-1)}}{r_K^2}
\end{align*}
when $L$ doesn't divide $t$. We use the fact that $r_K$ is large and consume all the constants inside $D_1, D_2, \Psi_1$ and $\Psi_2$. Therefore, the inequality $\Lambda^{(t+1)} 0.2, G^{(t+1)} \leq 0.18$ is satisfied. Therefore, by induction, it is satisfied for all time $t\geq 1$.

\subsection{Final Statistical Error}
We need to unroll a single-step recursion in terms of $\Lambda^{(t)}$ to show final error. We first write down the recursion, in the two cases.

When $L$ divides $t$, we have, for some constant $D_3 > 0$,
\begin{align*}
    \Lambda^{(t+1)}\leq 0.36 \Lambda^{(t)} + \frac{D_3}{r_K} + \sqrt{\Psi_1} \leq 0.4 \Lambda^{(t)} + \sqrt{\Psi_1}
\end{align*}
This is obtained from setting $\frac{3}{r_K}\sqrt{KG^{(t+1)}} + G^{(t+1)}\leq \sqrt{G^{(t+1)}}$, for large $r_K$ and $G^{(t)} \leq 0.18, \forall t\geq 1$. Further, we plug this bound into Lemma~\ref{lem:err_centers_k_L} with single step progress bound for $\sqrt{G^{(t+1)}}$. For a large enough $r_K$, $0.4 + \frac{D_3}{r_K}\leq 0.5$.

When $L$ doesn't divide $t$, we use Lemma~\ref{lem:err_centers_k}.
\begin{align*}
    \Lambda^{(t+1)} \leq \sqrt{\Psi_1} + \sqrt{\Psi_2} + 0.26 \Lambda^{(t)} + 2\delta_1 G^{(t+1)}
\end{align*}
We use the bound on $\delta_1 \leq 0.1$, and Lemma~\ref{lem:err_labels_k} for a bound on $G^{(t+1)}$ in terms of $\Lambda^{(t-1)}$ and $\Lambda^{(t)}$. Finally, using the fact that $\Lambda^{(t)}\leq 0.2$ and $\Lambda^{(t-1)}\leq 0.5, \forall t\geq 1$, we obtain the following update equation for some constants $D_4, D_5>0$.
\begin{align*}
    \Lambda^{(t+1)}\leq \sqrt{\Psi_1} + \sqrt{\Psi_2} + D_4 \Lambda^{(t)} + D_5 \Lambda^{(t-1)}
\end{align*}

Here, the value of $D_4 \geq 0.36$ and $D_5$ can be set to a small constant for large $r_K$. Let $r_K$ be large enough that $D_4 \leq 0.4$ and $D_5 \leq 0.2$. Then, we have, the following single-step update in $\Lambda^{(t)}$.

\begin{align*}
    \Lambda^{(t+1)} \leq& 0.4 \Lambda^{(t)} + \sqrt{\Psi_1}\tag{When $L$ divides $t$}\\
    \Lambda^{(t+1)} \leq& 0.4 \Lambda^{(t)} + 0.2 \Lambda^{(t-1)}+ \sqrt{\Psi_1} + \sqrt{\Psi_2}\tag{When $L$ doesn't divide $t$}
\end{align*}

Similar to the $2$-cluster case, we will show that $\Lambda^{(t)}$ obeys the following geometric decrease by combining the above two update equations.
\begin{align}\label{eq:rec_total}
    \Lambda^{(t)} \leq \gamma^{t-1} \Lambda^{(1)} + \sqrt{\Psi_1} + \sqrt{\Psi_2}, \forall t\geq 1
\end{align}
We will prove that $\Lambda^{(t)}$ satisfies Eq~\eqref{eq:rec_total} by induction. At $t=1$, for the base step, $\Lambda^{(1)} \leq \Lambda^{(1)} + \sqrt{\Psi_1} + \sqrt{\Psi_2}$. Now, assume that Eq~\eqref{eq:rec_total} is satisfied for all iterations $\leq t$. At iteration $t$,  we have the following recursions, if $L$ divides $t$ and if it doesn't,  
\begin{align*}
    \Lambda^{(t+1)} \leq& 0.4\gamma^{(t-1)} \Lambda^{(1)} + \sqrt{\Psi_1}+ \sqrt{\Psi_2}\\
    \Lambda^{(t+1)} \leq& (0.4\gamma + 0.2)\gamma^{(t-2)}\Lambda^{(1)} + \sqrt{\Psi_1} + \sqrt{\Psi_2}
\end{align*}
Note that we need the coefficient of $\Lambda^{(1)}$ in both the equations to be $\leq \gamma^{(t)}$. Therefore, we have $\gamma \geq 0.4$ and $\gamma^2 \geq 0.4 \gamma  + 0.2$. Note that setting $\gamma = 0.5$ satisfies the inequality. 

Further, even for the $2$-cluster case, the linear decrease in $A^{(t)}$ at every single step was by $\frac{1}{2}$. Therefore, similar to the $2$-cluster case, we require at least $2(\ceil{\log(mn)}+L)$ iterations, until,
\begin{align*}
    \Lambda^{(t)} \leq \sqrt{\Psi_1} + \sqrt{\Psi_2}
\end{align*}
Note that we absorb all constants inside $\Psi_1$ and $\Psi_2$.

To compute the final statistical error, we use a tighter analysis of the single-step progress for $t\geq 2(\ceil{\log(mn)} + L)$. This resembles that of the $2$-cluster case. Note that as $\Lambda^{(t)}$ is small, we only use the decomposition of progress in $A^{(t)}$ when $L$ doesn't divide $t$ as it is worse than when $L$ divides $t$.

We will compute the misclustering $A^{(t+1)} = \frac{1}{mn}\sum_{i=1}^m \sum_{j=1}^n \sum_{h, k\in [K], h\neq k} \I\{\hat{z}_{i,j}^{(t+1)} = h, z_{i,j} = k\}$. We will use a different decomposition for Eq~\eqref{eq:k_cluster_decomp} following ~\cite[Section~A.3]{lu_statistical_2016}. 
\begin{align*}
&\I\{\hat{z}_{i,j}^{(t+1)} = h, z_{i,j} = k\}\\
&\leq \I\{(1 - 2\Lambda_i^{(t)})^2\norm{\theta_k  - \theta_h}^2  \leq  2\ip{\vw_{i,j}}{\theta_h  - \theta_k} +2\ip{\vw_{i,j}}{\Gamma_h^{(t)} - \Gamma_k^{(t)}}+ 2\ip{\vw_{i,j}}{\Delta_{h,i}^{(t)} - \Delta_{k,i}^{(t)}}\} \\
&\leq \I\{\beta_1 \norm{\theta_k - \theta_h}^2 \leq 2\ip{\vw_{i,j}}{\theta_h - \theta_k}\} + \I\{\beta_2\Gamma^2 \leq 2\ip{\vw_{i,j}}{u_h - u_k}\} + \I\{\beta_3\Gamma^2 \leq 2\ip{\vw_{i,j}}{v_k - v_h}\}\\
&+ \I\{\delta_2^2 \norm{\theta_k - \theta_h}^2 \leq 2\ip{\vw_{i,j}}{\Delta_{h,i}^{(t)} - \Delta_{k,i}^{(t)}}\}
\end{align*}
Here, $u_k = \frac{1}{\nu_k^\star} W_{S_k^\star}, \forall k\in [K]$ and $v_k = \Gamma_k^{(t)} - u_k$. Further, the term $\beta_1 \leq (1-2\Lambda_i^{(t)})^2 - \beta_2 - \beta_3 - \delta_2$. Since $\Lambda_i^{(t)} \leq \Lambda^{(t)} + \frac{\delta_1}{2} \leq \sqrt{\Psi_1} + \sqrt{\Psi_2} + 0.05, \forall t\geq 2(\ceil{\log(mn)} + L)$, $\beta_1 \geq 0.5 - \beta_2 - \beta_3 -\delta_2$ for small $\Psi_1$ and $\Psi_2$. Note that $\Psi_1$ and $\Psi_2$ are small for large $n$ and $r_K$.

Further, averaging the above inequality over all $k\in [K]$ and all datapoints $(i,j)\in S_k^\star$, we obtain $A^{(t+1)}$.
\begin{align*}
    A^{(t+1)} \leq& I_1' + J_2 + J_3 + I_3'
\end{align*}
Here, the term $I_1'$ is the same as the term $I_1$ except for the value of $\beta_1$. The terms $J_2$ and $J_3$ are exactly equal to the corresponding terms in ~\cite[Section~A.3]{lu_statistical_2016}. The additional term of $I_3'$ depends on the deviation term $\Delta^{(t)}$ and is similar to $I_3$ in Eq~\eqref{eq:k_cluster_decomp}.

\paragraph{Bounds on $I_1', J_2, J_3$}
For these terms, we use the bounds from ~\cite[Eq~(63) - (65)]{lu_statistical_2016}. Following their analysis, we set $\beta_2 = \sqrt{\frac{8K}{r_K}}, \beta_3 = \frac{64}{r_K}$ and $\delta_2$ as a constant $\leq 0.2$, we find that $\beta_1\geq 0.25$.
\begin{align*}
    &\E[I_1'] \leq \exp(-r_K^2),\quad \E[J_3] \leq \exp(-r_K^2)\\
    &\text{wp } 1 - \exp(-r_K^2), \quad J_2 \leq \frac{12\sqrt{K}}{r_K}A^{(t)}
\end{align*}

\paragraph{Bound on $I_3'$}
Note that the bounds on $I_3'$ can be obtained from the bound on $I_3$. Since, $I_3$ is only a summation over $(i,j)\in S_k^\star$, and $I_3$ is the average over $k\in [K]$ and $(i,j)\in S_k^\star$, we obtain,
\begin{align*}
    I_3' \leq \frac{K}{mn} I_3 \leq \frac{192 K\sigma^2 \max\{\frac{d}{n},1\}}{\delta_2^4 \Gamma^2}(4 (\Lambda^{(t)})^2 + \frac{\delta_1^2}{2} + 2(\Lambda^{(t-1)})^2)
\end{align*}
We plug in the value of $I_3$ from Eq~\eqref{eq:dev_term_k}. For each of the terms $(\Lambda^{(t-1)})^2$ and $(\Lambda^{(t)})^2$, we use Eq~\eqref{eq:lambda_val_k}.
\begin{align*}
    (\Lambda^{(t)})^2 \leq \Psi_2 + 2(G^{(t)})^2 \leq \Psi_2 + \frac{\Psi_1 + \Psi_2}{\alpha} A^{(t)}
\end{align*}
We first use $\norm{a+b}^2 \leq 2\norm{a}^2 + 2\norm{b}^2$. Then, we use the fact $G^{(t)} \leq \frac{A^{(t)}}{\alpha}$. Further, to bound the value of $G^{(t)}$ after $t\geq 2(\ceil{\log(mn)} + L)$, we use the fact that Lemma~\ref{lem:single_step_k} implies a similar progress as $\Lambda^{(t)}$. We plug in the value of $\Lambda^{(t)} \leq \sqrt{\Psi_1} +\sqrt{\Psi_2}$ in Lemmas~\ref{lem:err_labels_k_L} and ~\ref{lem:err_labels_k}, for any $t\geq 2(\ceil{\log(mn)} + L)$, to obtain that $G^{(t)} \leq \Psi_1 + \Psi_2$ for the same condition on $t$. Therefore,
\begin{align*}
    I_3' \leq \Psi_2 + \frac{K\max\{\frac{d}{n},1\}(\Psi_1 + \Psi_2)}{\alpha r_K^2}(A^{(t)} + A^{(t-1)})
\end{align*}

Note that the high probability terms corresponding to $(mn)^{-3}$ are replaced by $\exp(-r_K^2)$ in the bound of $I_1$ in the proof of Lemma~\ref{lem:err_labels_k}. Let $\cE$ be the intersection of the high probability events under which all our bounds hold. Then, for any constant $\delta_2$, we have, 
\begin{align*}
    \E[A^{(t+1)}] \leq& \E[J_1] + \E[J_2| \cE] + \E[J_3] + \E[I_3' | \cE] + \Pr[\cE^\complement]\\
    \leq& \left(\frac{12K}{r_K} + \frac{K(\Psi_1 + \Psi_2)\max\{\frac{d}{n},1\}}{\alpha r_K^2}\right)\E[A^{(t)}] +  \frac{K(\Psi_1 + \Psi_2)\max\{\frac{d}{n},1\}}{\alpha r_K^2}\E[A^{(t-1)}]\\
    &\quad + \exp(-r_k^2) + \exp(-n) + \Psi_2
\end{align*}
Note that the term $\Pr[\cE^\complement] \leq \exp(-r_K^2) + \exp(-n)$. For large $r_K$ and $n$, we obtain a geometric decrease in $\E[A^{(t)}]$ for $t\geq 2(\ceil{\log(mn)}+L)$. We will again use induction to show this. Specifically, we want to show that for some $\gamma \in (0,1)$, we have,
\begin{align*}
    \E[A^{(t)}] \leq \gamma^{t-t_0} \E[A^{(t_0)}] + \Psi_2 + \exp(-n) + \exp(-r_K^2), \quad, \forall t \geq t_0 = 2(\ceil{\log(mn)} + L)
\end{align*}
For $t=t_0 = 2(\ceil{\log(mn)} + L)$, this bound is automatically satisfied. Assuming that this is true for all iterations between $t_0$ and $t$, by the single-step progress in $\E[A^{(t)}]$, we obtain,
\begin{align*}
    \E[A^{(t+1)}] \leq \left(\frac{12K\gamma}{r_K} + \frac{K(\Psi_1 + \Psi_2)\max\{\frac{d}{n},1\}}{\alpha r_K^2}(\gamma + 1)\right)\gamma^{t-1-t_0}\E[A^{(t_0)}] + \Psi_2 + \exp(-n) + \exp(-r_K^2)
\end{align*}
As long as $\gamma^2 \geq \left(\frac{12K\gamma}{r_K} + \frac{K(\Psi_1 + \Psi_2)\max\{\frac{d}{n},1\}}{\alpha r_K^2}(\gamma + 1)\right)$, the geometric decrease is satisfied. Note that if $a = \frac{12K}{r_K}$ and $b=\frac{K(\Psi_1 + \Psi_2)\max\{\frac{d}{n},1\}}{\alpha r_K^2}$, then, we require, $\gamma^2 \geq (a + b)\gamma+ b$. Therefore, if we select $\gamma \geq \frac{(a+b) + \sqrt{(a+b)^2 + 4b^2}}{2}$, then we get geometric decrease. Note that both $a$ and $b$ can be made arbitarily small for large $r_K$ and $n$, implying that we can find a $\gamma < 1$ that satisfies the geometric decrease.

Under the geometric decrease condition on $\E[A^{(t)}]$, if we run for additional $t - t_0 = q \cdot (\ceil{\log(mn)}+L)$ iterations, and use $\E[A^{(t_0)}] \leq 1$ along with the Markov inequality, we obtain Theorem~\ref{thm:k_cluster}.

\subsection{Proof of Lemma~\ref{lem:local_dev_all_k}}
\label{sec:local_dev_all_k_proof}
To prove Lemma~\ref{lem:local_dev_all_k}, we need to show that $G_i^{(t)}\leq \frac{1}{2}$ and $\Lambda_i^{(t)} \leq \Lambda^{(t)} + \frac{\delta_1}{2}$. As Lemma~\ref{lem:local_dev_all_k} are required to prove Lemmas~\ref{lem:err_centers_k} and ~\ref{lem:err_labels_k}, we use an inductive argument, so that all three of Lemma~\ref{lem:local_dev_all}, ~\ref{lem:err_centers_k} and ~\ref{lem:err_labels_k} are satisfied upto some iteration $t$. Then, we show that Lemma~\ref{lem:local_dev_all_k} is true at iteration $t+1$ and as a consequence, both Lemmas~\ref{lem:err_centers_k} and ~\ref{lem:err_labels_k} is also true.

\subsubsection{Base Case $t=1$}

First, we prove the base case for Lemma~\ref{lem:local_dev_all_k} at iteration $t=1$. 

\paragraph{Bound on $G_i^{(1)}$}
To compute $G_i^{(1)}$, note that we can write a version of Lemma~\ref{lem:err_labels_k_L} at $t=0$ for a single machine. Note that directly applying Lemma~\ref{lem:err_labels_k_L} for each machine $i\in [m]$ would not suffice, as we would require a union bound over all machines which would result in high probability terms of $mn^{-3}$ and $m\exp(-r_K^2)$. As the only condition required on $m$ is that $\log m = \mathcal{O}(n)$, these terms might be larger than $1$ making the guarantee vacuous. To overcome this issue, we use a slightly modified bound on $G^{(1)}$ obtaining high probability terms of the form $\exp(-n)$.

We use the same decomposition of $\nu_{kh,i}^{(t)}$ used in the proof of Lemma~\ref{lem:err_labels_k_L} which is presented in ~\cite{lu_statistical_2016}. If $\Lambda^{(0)} \leq \frac{1 - \epsilon'}{2}$, then we have, 
\begin{align*}
    \nu_{kh,i}^{(1)} \leq \underset{I_1'}{\underbrace{\sum_{j\in S_{k,i}^\star}\I\{\frac{(\epsilon')^2}{4}\norm{\theta_h - \theta_k}^2 \leq \ip{\vw_{i,j}}{\theta_h - \theta_k}\}}} + \underset{I_2'}{\underbrace{\sum_{j\in S_{k,i}^\star}\frac{16}{(\epsilon')^4\Gamma^4}\ip{\vw_{i,j}}{\Gamma_{h,i}^{(0)} - \Gamma_{k,i}^{(0)}}^2}}
\end{align*}
Note that the terms $I_1'$ and $I_2'$ resemble the corresponding terms $I_1$ and $I_2$ in proof of Lemma~\ref{lem:err_labels_k_L}.
For the term $I_2$, we use the same bound as Lemma~\ref{lem:err_labels_k}, but take a union bound over all machines. Therefore, with probability $1 - m\exp(-n)$, we have,
\begin{align*}
    I_2' \leq \frac{384(\nu_{k,i}^\star + d)}{(\epsilon')^4\Gamma^2} (\Lambda^{(0)})^2
\end{align*}
Note that we could do this as $\Gamma_{k,i}^{(0)} = \Gamma_{k}^{(0)}, \forall i\in [m], k\in [K]$ as the iteration $t=0$ is divisible by $L$, so the same model is used for all machines.

For the term $I_1'$, as it is a sum of $\nu_{k,i}^\star$ Bernoulli random variables with bias $\Pr[\frac{(\epsilon')^2}{4}\norm{\theta_h - \theta_k}^2 \leq \ip{\vw_{i,j}}{\theta_h - \theta_k}]$, by Hoeffding's inequality, we have, with probability $1-m\exp(-\frac{\delta_3}{K^2}\nu_{k,i}^\star )$, $\forall i\in [m]$, for some constant $\delta_3>0$. 
\begin{align*}
    I_1' \leq \nu_{k,i}^\star \exp(-r_K^2) + \frac{\delta_3}{K}\nu_{k,i}^\star
\end{align*}
Further, $\Pr[\frac{(\epsilon')^2}{4}\norm{\theta_h - \theta_k}^2 \leq \ip{\vw_{i,j}}{\theta_h - \theta_k}] \leq \exp(-r_K^2)$. Note that we can handle the high probability bound by using $\nu_{k,i}^\star \geq \beta m \nu_{k}^\star \geq \alpha \beta m n$. Since $\log m = \mathcal{O}(n)$, $\beta m = \mathcal{O}(1)$ and $\delta_1 \leq 0.1$, the high probability bound is $\exp(-n)$.

Summing over $h\in [K], h\neq k$, and dividing by $\nu_{k,i}^\star$ we obtain the false positive rate. 

\begin{align*}
    \frac{\sum_{h\in [K], h\neq k}\nu_{kh,i}^{(1)}}{\nu_{k,i}^\star} \leq K\exp(-r_K^2) + \delta_3 + \left(\frac{\sqrt{384\max\{\frac{d}{n},1\}}}{(\epsilon')^2r_K}\Lambda^{(0)}\right)^2
\end{align*}
Now, choosing $\epsilon' = \frac{6}{\sqrt{r_K}}$ and $\sqrt{\frac{d}{n}}\leq 3.3$, we can choose a small $\delta_3$ such that the false positive rate is $\leq \frac{1}{2}$ for large $r_K$.

Further, this implies $\nu_{k,i}^{(1)} \geq \frac{1}{2}\nu_{kk,i}^\star \geq \frac{1}{2}\alpha \beta mn$. Under these conditions for $\beta m = \mathcal{O}(1)$ and $\frac{d}{n} = \mathcal{O}(1)$, we can again obtain that the false positive rate, $\frac{\sum_{h\in [K] h\neq k}\nu_{hk,i}^{(1)}}{\nu_{k,i}^{(1)}}\leq \frac{1}{2}$. This implies that $G_i^{(1)}\leq \frac{1}{2}$.

\paragraph{Bound on $\Lambda_i^{(1)}$}
We prove a stronger condition that implies the bound on $\Lambda_i^{(t)}$
\begin{align}\label{eq:new_lamba_dev}
    \max_{i\neq i' \in [m], k\in [K]}\norm{\hat{\theta}_{k,i}^{(t)} - \hat{\theta}_{k,i'}^{(t)}} \leq \frac{\delta_1\Gamma}{2}, \quad \forall t\geq 1
\end{align}
Note that if this equation is satisfied, then, by applying triangle inequality, $\forall i\in [m]$,
\begin{align*}
    \norm{\hat{\theta}_{k,i}^{(t)} - \theta_k} - \norm{\hat{\theta}_{k,i} - \theta_k} \leq& \norm{\hat{\theta}_{k,i}^{(t)} - \hat{\theta}_{k,i'}^{(t)}} = \norm{\sum_{i'\in [m], i'\neq i}\frac{\nu_{k,i'}^{(t)}}{\nu_{k}^{(t)}}(\hat{\theta}_{k,i}^{(t)} - \hat{\theta}_{k,i'}^{(t)})}\\
    \leq& \sum_{i'\in [m], i'\neq i}\frac{\nu_{k,i'}^{(t)}}{\nu_{k}^{(t)}}\norm{\hat{\theta}_{k,i}^{(t)} - \hat{\theta}_{k,i'}^{(t)}} \leq \frac{\delta_1}{2}\\
\end{align*}
This implies $\Lambda_i^{(t)} \leq \Lambda^{(t)} + \frac{\delta_1}{2}, \forall i\in [m]$.

To prove the base case for this inequality, consider the following decomposition.
\begin{align*}
    \norm{\hat{\theta}_{k,i}^{(1)} - \hat{\theta}_{k,i'}^{(1)}} = \left\|\sum_{h\in [K], h\neq k}\left(\frac{\nu_{hk,i}^{(1)}}{\nu_{k,i}^{(1)}} - \frac{\nu_{hk,i'}^{(1)}}{\nu_{k,i'}^{(1)}}\right)(\theta_h - \theta_k)\right\| + \frac{\norm{W_{S_{k,i}^{(1)}}}}{\nu_{k,i}^{(1)}} + \frac{\norm{W_{S_{k,i'}^{(1)}}}}{\nu_{k,i'}^{(1)}}
\end{align*}
Note that the last two noise terms here can be bounded in the following way using Lemma~\ref{lem:noise_set_subg}, with probability $1 - \frac{m(m-1)}{2}\exp(-n)$.
\begin{align*}
    \frac{\norm{W_{S_{k,i}^{(1)}}}}{\nu_{k,i}^{(1)}} \leq \sigma \sqrt{\frac{3(n + d)}{\nu_{k,i}^{(1)}}} \leq \frac{\Gamma}{r_K} \cdot \sqrt{\frac{12\alpha\max\{\frac{d}{n},1\}}{\beta m}}
\end{align*}
Note that the high probability terms can be handled by $\log m = \mathcal{O}(n)$. Further, $\nu_{k,i}^{(1)} \geq \nu_{kk,i}^{(1)} \geq (1-G_i^{(1)})\nu_{k,i}^\star \geq \frac{1}{2}\alpha\beta mn$

For the first term, we again use triangle inequality,
\begin{align*}
    &\left\|\sum_{h\in [K], h\neq k}\left(\frac{\nu_{hk,i}^{(1)}}{\nu_{k,i}^{(1)}} - \frac{\nu_{hk,i'}^{(1)}}{\nu_{k,i'}^{(1)}}\right)(\theta_h - \theta_k)\right\|\leq \sum_{h\in [K], h\neq k}\left|\frac{\nu_{hk,i}^{(1)}}{\nu_{k,i}^{(1)}} - \frac{\nu_{hk,i'}^{(1)}}{\nu_{k,i'}^{(1)}}\right| \norm{\theta_h - \theta_k}\\
    &\leq \sum_{h\in [K], h\neq k}\left(\frac{\abs{\nu_{k,i}^{(1)}-\nu_{k,i'}^{(1)}}\nu_{hk,i'}^{(1)}}{\nu_{k,i}^{(1)}\nu_{k,i'}^{(1)}} + \frac{\abs{\nu_{hk,i'}^{(1)} - \nu_{hk,i}^{(1)}}}{\nu_{k,i}^{(1)}}\right)\lambda \Gamma
\end{align*}
Consider the first term. Note that the summation for the first term gives us $G_{i'}^{(1)}\leq \frac{1}{2}$.
Therefore, we need to bound the following for the first term.
\begin{align*}
\abs{\nu_{k,i}^{(1)}-\nu_{k,i'}^{(1)}} \leq \abs{\nu_{kk,i'}^{(1)} - \nu_{kk,i}^{(1)}} + \sum_{h\in [K], h\neq k}\abs{\nu_{hk,i'}^{(1)} - \nu_{hk,i}^{(1)}}
\end{align*}
Therefore, we only need to bound the terms $\abs{\nu_{kk,i'}^{(1)} - \nu_{kk,i}^{(1)}}$ and $\abs{\nu_{hk,i'}^{(1)} - \nu_{hk,i}^{(1)}}$ individually.
The bound for these terms is similar.
Note that the following bound holds for any $h,k\in [K]$ including the case of $h=k$.
\begin{align*}
    \nu_{hk,i'}^{(1)} - \nu_{hk,i}^{(1)} = \sum_{j\in [n]} \I\{\hat{z}_{i',j}^{(1)} =k, z_{i',j} = h\} -  \sum_{j\in [n]} \I\{\hat{z}_{i,j}^{(1)} =k, z_{i,j} = h\} 
\end{align*}
Note that this is difference of sums of $n$ Bernoulli random variables having the same bias. Note that the bias is the same because for each client, the cluster index at $t=1$ is determined by the same global model $\{\hat{\theta}_k^{(0)}\}_{k\in [K]}$. Therefore, by Hoeffding's inequality, $\forall h,k\in [K], \forall i,i'\in [m], i\neq i'$, with probability $1- \frac{K(K+1)m(m-1)}{2}\exp( - 2C^2 n)$, we have, 
\begin{align*}
    \abs{\nu_{hk,i'}^{(1)} - \nu_{hk,i}^{(1)}} \leq  4Cn
\end{align*}
for some constant $C>0$. Note that the high probability term is $\exp(-n)$ for constant $K=\mathcal{O}(1)$.

Plugging this bound into the expression for Eq~\eqref{eq:new_lamba_dev} and setting $\min\{\nu_{k,i}^{(1)}, \nu_{k,i'}^{(1)}\} \geq \frac{1}{2}\alpha \beta m n$
\begin{align*}
    \Gamma^{-1}\norm{\hat{\theta}_{k,i}^{(1)} - \hat{\theta}_{k,i'}^{(1)}} \leq \frac{12KC}{\alpha \beta m}\lambda  + \frac{2}{r_K} \cdot \sqrt{\frac{12\alpha\max\{\frac{d}{n},1\}}{\beta m}}
\end{align*}
For constant $\beta m$, we can choose $C = \frac{\delta_1\alpha \beta m}{48 K}$ so that the high probability term is $\exp(-n)$ and both the terms in the RHS are $\leq \frac{\delta_1}{4}$. Note that this also requires large $r_K$.

\subsubsection{Induction Step}
We assume that $G_i^{(\tau)} \leq \frac{1}{2}$ and Eq~\eqref{eq:new_lamba_dev} hold for all iterations from $\tau=1$ to $\tau=t$. Now, we prove that these inequalities are true for iteration $t+1$. 
Note that the bounds on $G_i^{(t)}$ and $\Lambda_i^{(t)}$ are same as those for the $2$-cluster case, which allows us to show,
\begin{align*}
    G_i^{(t+1)}\leq\frac{1}{2}, \quad \norm{\hat{\theta}_{k,i}^{(t+1)} - \hat{\theta}_{k,i'}^{(t+1)}}\leq \frac{\delta_1\Gamma}{2}
\end{align*}
\paragraph{Bound on $G_i^{(t)}$}
To compute a bound on $G_i^{(t)}$, we compute $\nu_{kh,i}^{(t+1)}$ by a decomposition for  Lemma~\ref{lem:err_labels_k}.  As long as $\Lambda^{(t)} \leq \frac{1 - \epsilon' - \delta_2}{2}$ and $\Lambda_i^{(t)}\leq \Lambda^{(t)} + \frac{\delta_1}{2}$, we have, 

\begin{align*}
\nu_{kh,i}^{(t+1)} \leq& \underset{I_1''}{\underbrace{\sum_{ \in S_{k,i}^\star}I\{\frac{(\epsilon')^2}{4}\norm{\theta_k  - \theta_h}^2  \leq  2\ip{\vw_{i,j}}{\theta_h  - \theta_k}\}}} + \underset{I_2''}{\underbrace{\sum_{j \in S_{k,i}^\star} \frac{16}{(\epsilon')^4\Gamma^4}\ip{\vw_{i,j}}{\Gamma_h^{(t)} - \Gamma_k^{(t)}}^2}}\\
    &\quad +\underset{I_3''}{\underbrace{\sum_{j \in S_{k,i}^\star} \frac{4}{\delta_2^4 \Gamma^4}\ip{\vw_{i,j}}{\Delta_{h,i}^{(t)} - \Delta_{k,i}^{(t)}}^2}}
\end{align*}
Note that $I_1'', I_2''$ and $I_3''$ correspond to the terms $I_1, I_2$ and $I_3$ respectively in the proof of Lemma~\ref{lem:err_labels_k}. We slightly modify the bounds on these inequalities to ensure that we obtain a high probability term of $\exp(-n)$. 

Consider the term $I_1''$. By using Hoeffding's inequality, for all machines and all pairs of clusters, with probability $1-\frac{mK(K-1)}{2}\exp(-2C^2n)$, we have for some constant $C>0$

\begin{align*}
I_1'' \leq \nu_{k,i}^\star\exp(-(\epsilon' r_K)^2) + 2Cn     
\end{align*}

For the term $I_2''$, we use the corresponding bound from proof of Lemma~\ref{lem:err_labels_k}, but on $n$ points instead. Further, we take a union bound over all machines and pairs of clusters. Therefore, with probability $1 - \frac{K(K-1)m}{2}\exp(-n)$, we have,
\begin{align*}
    I_2'' \leq \frac{384(\nu_{k,i}^\star + d)\sigma^2}{(\epsilon')^4 \Gamma^2}(\Lambda^{(t)})^2 \leq \frac{768\alpha n \max\{\frac{d}{n},1\}}{(\epsilon')^4 r_K^2} (\Lambda^{(t)})^2 \leq \frac{16 \alpha n \max\{\frac{d}{n},1\}}{(\epsilon')^4 r_K^2} 
\end{align*}
We finally use the fact that $\Lambda^{(t)}\leq 0.2, \forall t\geq 1$.

For the term $I_3''$, we again use the induction hypothesis (Eq~\eqref{eq:new_lamba_dev}).

\begin{align*}
    I_3'' \leq& \frac{192 n\alpha \max\{\frac{d}{n},1\} }{\delta_2^4 r_K^2}\max_{k\in [K], i\in [m]}\norm{\hat{\theta}_i^{(t)} - \hat{\theta}^{(t)}}^2 \\
    \leq& \frac{192 n\alpha \max\{\frac{d}{n},1\} }{\delta_2^4 r_K^2}\max_{k\in [K], i\in [m]}\sum_{i'\in [m], i'\neq i} \frac{\nu_{k,i'}^{(t)}}{\nu_{k}^{(t)}}\norm{\hat{\theta}_{k,i}^{(t)} - \hat{\theta}_{k,i'}^{(t)}}^2\\
    \leq & \frac{48 n\alpha \max\{\frac{d}{n},1\}\delta_1^2 }{\delta_2^4 r_K^2}\leq \frac{ n\alpha \max\{\frac{d}{n},1\}}{2\delta_2^4 r_K^2}
\end{align*}
We we Cauchy-Schwartz inequality followed by plugging in the value of $\delta_1\leq 0.1$.

The true negative rate is the following,
\begin{align*}
    \sum_{h\in [K], h\neq k} \frac{\nu_{kh,i}^{(t+1)}}{\nu_{k,i}^\star} \leq \frac{K(I_1'' + I_2'' + I_3'')}{\nu_{k,i}^\star}
\end{align*}
Since $\nu_{k,i}^\star \geq \alpha \beta mn$, we have, 
\begin{align*}
    \sum_{h\in [K], h\neq k} \frac{\nu_{kh,i}^{(t+1)}}{\nu_{k,i}^\star} \leq \frac{K}{\beta m }\left(\exp(-(\epsilon' r_K)^2) + 2C + \frac{16 \alpha \max\{\frac{d}{n},1\}}{(\epsilon')^4 r_K^2} + \frac{\alpha \max\{\frac{d}{n},1\}}{2\delta_2^4 r_K^2}\right) 
\end{align*}
Note that for large $r_K$, constant $\beta m$ and $K$, small $\frac{d}{n}$, we can choose constants $C,\epsilon'$ and $\delta_2$, since $\Lambda^{(t)} \leq 0.2$ such that the RHS is $\leq \frac{1}{2}$.
Further, for the false positive rate, we change the denominator from $\nu_{k,i}^\star$ to $\nu_{k,i}^{(t)}$. By the above bound on the true negative rate, $\nu_{k,i}^{(t)} \geq \nu_{kk,i}^{(t)}\frac{1}{2}\nu_{k,i}^\star \geq \frac{1}{2}\alpha \beta mn$. Note that this only changes the numerator of the true negative rate by a factor of $2$. Again, the constants $C,\epsilon'$ and $\delta_2$ can be chosen such that the false positive rate is also $\leq \frac{1}{2}$. This implies $G_i^{(t+1)}\leq \frac{1}{2}$.

\paragraph{Bound on $\Lambda_i^{(t)}$}
From the analysis in the base case, we only need to bound the following term $\forall h,k\in [K], i\neq i'\in [m]$.
\begin{align*}
    \abs{\nu_{hk,i}^{(t+1)} - \nu_{hk,i'}^{(t+1)}}
\end{align*}

First, consider the case when $h\neq k$. Using the decomposition of indicator functions, 
\begin{align*}
    &\I\{\hat{z}_{i,j}^{(t+1)} = h, z_{i,j} = k\} \leq \I\{\hat{z}_{i,j}^{(t+1)} = h, z_{i,j} = k\} + \Xi_{i,i',h,k,j},\quad \text{ where, }\\
    & \Xi_{i,i',h,k,j} \leq \I\{\norm{\theta_k - \hat{\theta}_{h,i}^{(t)}}^2 + \norm{\theta_k - \hat{\theta}_{k,i'}^{(t)}}^2 - \norm{\theta_k - \hat{\theta}_{h,i'}^{(t)}}^2 - \norm{\theta_k - \hat{\theta}_{k,i}^{(t)}}^2 \leq 2\ip{\vw_{i,j}}{\hat{\theta}_{k,i}^{(t)}   - \hat{\theta}_{h,i}^{(t)}}\\
    &\quad\quad\quad\quad\quad\quad - 2\ip{\vw_{i',j}}{\hat{\theta}_{k,i'}^{(t)}  - \hat{\theta}_{h,i'}^{(t)}}\}
\end{align*}
We consider the LHS and RHS of the components inside the indicator function individually.
Consider the LHS
\begin{align*}
    &\norm{\theta_k - \hat{\theta}_{h,i}^{(t)}}^2 + \norm{\theta_k - \hat{\theta}_{k,i'}^{(t)}}^2 - \norm{\theta_k - \hat{\theta}_{h,i'}^{(t)}}^2 - \norm{\theta_k - \hat{\theta}_{k,i}^{(t)}}^2 \geq ((\epsilon')^2 + \delta_2^2)\norm{\theta_k - \theta_h}^2
\end{align*}
Note that to show this is true as long as $(\epsilon')^2 + (\delta_2)^2 \leq \frac{\delta_1^2}{2}$ by the induction step.

For the term on RHS, we decompose it as 
\begin{align*}
    & 2\ip{\vw_{i,j}}{\hat{\theta}_{k,i}^{(t)}   - \hat{\theta}_{h,i}^{(t)}} - 2\ip{\vw_{i',j}}{\hat{\theta}_{k,i}^{(t)}  - \hat{\theta}_{h,i}^{(t)}} \\
    &= 2\ip{\vw_{i,j}-\vw_{i',j}}{\theta_k - \theta_h} + 2\ip{\vw_{i,j} - \vw_{i',j}}{\Gamma_{k}^{(t)} - \Gamma_{h}^{(t)}} + 2\ip{\vw_{i,j}}{\Delta_{k,i}^{(t)} - \Delta_{h,i}^{(t)}} - 2 \ip{\vw_{i',j}}{\Delta_{k,i'}^{(t)} - \Delta_{h,i'}^{(t)}}
\end{align*}
Now, we split $\delta_2^2$ into two parts one for each of the terms of $\Delta$ and $(\epsilon')^2$ into two parts one for the terms of $\Gamma$ and the other for the terms of $\theta_k - \theta_h$. This gives a decomposition similar to the $2$-cluster case if we sum over all $j\in [n]$.
\begin{align*}
    \abs{\nu_{hk,i}^{(t+1)} - \nu_{hk,i'}^{(t+1)}} \leq I_4 + I_5 + I_6 + I_7
\end{align*}

For the term $I_4$ which resembles $I_1''$, we use Hoeffding's to bound it for some constant $C$. Note that the noise variance is doubled now as the noise is $\vw_{i,j} - \vw_{i',j}$.
\begin{align*}
    I_4 = \sum_{i=1}^n \I\{(\epsilon')^2 \norm{\theta_k - \theta_h}^2 \leq 2\ip{\vw_{i,j}-\vw_{i',j}}{\theta_k - \theta_h}\} \leq n\exp(-r_K^2) + 8Cn
\end{align*}

For the term $I_5$, we can use the bound of $I_2''$ with double the noise variance.
\begin{align*}
    I_5 = \sum_{j \in [n]} \frac{4}{\delta_2^4 \Gamma^4}\ip{\vw_{i,j}}{\Delta_{h,i}^{(t)} - \Delta_{k,i}^{(t)}}^2 \leq \frac{32 \alpha n \max\{\frac{d}{n},1\}}{(\epsilon')^4 r_K^2}
\end{align*}

For the terms $I_6$ and $I_7$ which differ in only $i$ and $i'$, we use the bound of $I_3''$.
\begin{align*}
    I_6 = \sum_{j\in [n]}\frac{4}{\delta_2^4 \Gamma^4}\ip{\vw_{i,j}}{\Delta_{h,i}^{(t)} - \Delta_{k,i}^{(t)}}^2\leq \frac{ n\alpha \max\{\frac{d}{n},1\}}{2\delta_2^4 r_K^2}
\end{align*}
This gives us a bound on $\abs{\nu_{hk,i}^{(t+1)} - \nu_{hk,i'}^{(t+1)}}$. Note that for the bound of $\delta_1$, we need to divide by $\nu_{k,i}^{(t+1)}$ and $\nu_{k,i'}^{(t+1)}$ each of which is $\geq \frac{1}{2}\alpha \beta m n$ and multiply by $\frac{3K\lambda}{2}$. Each of the terms $I_4 - I_7$ can be made arbitrarily small for large $r_K$ and $n$. If $\beta m, K, \lambda$ are constants, we can choose $C$ such that the bound on $\norm{\hat{\theta}_{k,i}^{(t+1)}- \hat{\theta}_{k,i'}^{(t+1)}}\leq \frac{\delta_1 \Gamma}{2}$.
This completes the proof.

\end{document}